\renewcommand{\tilde}{\widetilde}
\renewcommand{\hat}{\widehat}
\def \A {\mathcal{A}}
\def \B {\mathbb{B}}
\def \B {\mathcal{B}}
\def \C {\mathcal{C}}
\def \D {\mathcal{D}}
\def \E {\mathbb{E}}
\def \F {\mathcal{F}}
\def \G {\mathcal{G}}
\def \H {\mathcal{H}}
\def \L {\mathcal{L}}
\def \M {\mathcal{M}}
\def \N {\mathcal{N}}
\def \O {\mathcal{O}}
\def \P {\mathbb{P}}
\def \R {\mathbb{R}}
\def \S {\mathcal{S}}
\def \W {\mathcal{W}}
\def \X {\mathcal{X}}
\def \Ch {\hat{\C}}
\def \Ct {\tilde{\C}}
\def \Ht {\tilde{\H}}
\def \Qb {\bar{Q}}
\def \Qh {\hat{Q}}
\def \Vb {\bar{V}}
\def \Vh {\hat{V}}
\def \g {\mathbf{g}}
\def \w {\mathbf{w}}
\def \x {\mathbf{x}}
\def \st {\tilde{s}}
\def \sd {\dot{s}}
\def \Sd {\dot{\S}}
\def \Ot {\tilde{\O}}
\def \Ft{\tilde{F}}
\def \Qh {\hat{Q}}
\def \Qb {\bar{Q}}
\def \Qt {\tilde{Q}}
\def \Vh {\hat{V}}
\def \Vb {\bar{V}}
\def \Vt {\tilde{V}}
\def \Hb {\bar{\H}}
\def \Ht {\tilde{\H}}
\def \thetah {\hat{\theta}}
\def \thetab {\bar{\theta}}
\def \thetat{\tilde{\theta}}
\def \phit{\tilde{\phi}}
\def \phib{\bar{\phi}}
\def \betah {\hat{\beta}}
\def \betat{\tilde{\beta}}
\def \sumk {\sum_{k=1}^K}
\def \sumh {\sum_{h=1}^H}
\def \fir {\mathtt{1st}}
\def \sec {\mathtt{2nd}}
\def \first {\mathtt{fst}}
\def \second {\mathtt{snd}}
\def \ellt {\tilde{\ell}}
\let\norm\undefined 
\DeclarePairedDelimiter\norm{\lVert}{\rVert}
\DeclarePairedDelimiter\bignorm{\big\lVert}{\big\rVert}
\DeclarePairedDelimiter\Bignorm{\Big\lVert}{\Big\rVert}
\DeclarePairedDelimiter\abs{\lvert}{\rvert}
\DeclarePairedDelimiter\bigabs{\big\lvert}{\big\rvert}
\DeclarePairedDelimiter\Bigabs{\Big\lvert}{\Big\rvert}
\newcommand\inner[2]{\langle #1, #2 \rangle}
\newcommand\given[1][]{\:#1\vert\:}
\newcommand*\ind[1]{\mathds{1}_{\{#1\}}}
\newcommand*\term[1]{\mathtt{term~(#1)}}
\DeclareMathOperator{\indicator}{\mathds{1}}
\DeclareMathOperator*{\Reg}{Reg}
\DeclareMathOperator*{\argmax}{arg\,max}
\DeclareMathOperator*{\argmin}{arg\,min}
\newtheorem{myThm}{Theorem}
\newtheorem{myClaim}{Claim}
\newtheorem{myLemma}{Lemma}
\newtheorem{myProp}{Proposition}
\theoremstyle{definition}
\newtheorem{myAssumption}{Assumption}
\newtheorem{myCor}{Corollary}
\newtheorem{myDef}{Definition}
\newtheorem{myRemark}{Remark}
\renewcommand\footnotemark{}
\title{Provably Efficient Reinforcement Learning with Multinomial Logit Function Approximation}
\author{
  \textbf{Long-Fei Li}$ ^{1,2}$,~ \textbf{Yu-Jie Zhang}$ ^{3}$,~ \textbf{Peng Zhao}$ ^{1,2}$\thanks{Correspondence: Peng Zhao <zhaop@lamda.nju.edu.cn>},~ \textbf{Zhi-Hua Zhou}$ ^{1,2}$\\
  $^1$ National Key Laboratory for Novel Software Technology, Nanjing University, China\\  
  $^2$ School of Artificial Intelligence, Nanjing University, China \\ 
  $^3$ The University of Tokyo, Chiba, Japan
}
\begin{document}

\maketitle

\begin{abstract}
  We study a new class of MDPs that employs multinomial logit (MNL) function approximation to ensure valid probability distributions over the state space. Despite its significant benefits, incorporating the non-linear function raises substantial challenges in both \emph{statistical} and \emph{computational} efficiency. The best-known result of~\citet{AAAI'23:logistic-mdp} has achieved an $\Ot(\kappa^{-1}dH^2\sqrt{K})$ regret upper bound, where $\kappa$ is a problem-dependent quantity, $d$ is the feature dimension, $H$ is the episode length, and $K$ is the number of episodes. However, we observe that $\kappa^{-1}$ exhibits polynomial dependence on the number of reachable states, which can be as large as the state space size in the worst case and thus undermines the motivation for function approximation. Additionally, their method requires storing all historical data and the time complexity scales linearly with the episode count, which is computationally expensive. In this work, we propose a statistically efficient algorithm that achieves a regret of $\Ot(dH^2\sqrt{K} + \kappa^{-1}d^2H^2)$, eliminating the dependence on $\kappa^{-1}$ in the dominant term for the first time. We then address the computational challenges by introducing an enhanced algorithm that achieves the same regret guarantee but with only constant cost. Finally, we establish the first lower bound for this problem, justifying the optimality of our results in $d$ and $K$.
  \vspace{-1mm}
\end{abstract}

\section{Introduction}
\label{sec:introduction}

Reinforcement Learning (RL) with function approximation has achieved remarkable success in various applications involving large state and action spaces, such as games~\citep{Nature:Go}, algorithm discovery~\citep{Nature'22:matrix} and large language models~\citep{NIPS'22:Ouyang-InstructGPT}. Therefore, establishing the theoretical foundation for RL with function approximation is of great importance. Recently, there have been many efforts devoted to understanding the linear function approximation, yielding numerous  valuable results~\citep{ICML'19:Yang-linear-mdp, COLT'20:Jin-linear-mdp, ICML'20:Ayoub-mixture}.

While these studies make important steps toward understanding RL with function approximation, there are still challenges to be solved. In linear function approximation, transitions are assumed to be linear in feature mappings, such as $ \P(s' | s, a) = \phi(s' | s, a)^\top \theta^*$ for linear mixture MDPs and $\P(s' | s, a) = \phi(s, a)^\top \mu^*(s')$ for linear MDPs. Here $\P(s' | s, a)$ is the probability from state $s$ to $s'$ taking action $a$, $\phi(s' | s, a)$ and $\phi(s, a)$ are feature mappings, $\theta^*$ and $\mu^*(s')$ are unknown parameters. However, the transition function is a \emph{probability distribution} over states, meaning its values must lie within $[0,1]$ and sum to 1. Thus, the linearity assumption is restrictive and hard to satisfy in practice. An algorithm designed for linear MDPs could break down entirely if the underlying MDP is not linear~\citep{COLT'20:Jin-linear-mdp}. While some works explore generalized linear~\citep{ICLR'21:Wang-generalized-linear} and general function approximation~\citep{NIPS'13:Russo-Eluder, arXiv'21:Foster:complexity, ICLR'23:Chen-ABC}, they focus on function approximation for value functions rather than transitions, hence do not tackle this challenge.

\begin{table*}
  \caption{Comparison between previous works and ours in terms of the regret and computational cost (including storage and time complexity) . Here $\kappa$ and $\kappa^*$ are problem-dependent quantities defined in Assumption~\ref{asm:kappa}, $d$ is the feature dimension, $H$ is the episode length and $K$ is the number of episodes. The computational cost per episode only highlight the dependence on the episode count $k$.}
  \label{tab:1}
  \centering
  \vspace{1mm}
  \resizebox{\textwidth}{!}{
    \renewcommand\arraystretch{1.2}
    \begin{tabular}{ccccc}
      \toprule
      \multicolumn{1}{c}{\textbf{Reference}}
                                                                         & \textbf{Regret}                              & \textbf{Storage} & \textbf{Time} & \textbf{MDP model}    \\
      \midrule
      \citet{AAAI'23:logistic-mdp}                                       & $\Ot(\kappa^{-1} d H^2 \sqrt{K})$            & $\O(k)$       & $\O(k)$       & homogeneous           \\
      UCRL-MNL-LL (Theorem~\ref{thm:upper-bound-1})                        & $\Ot(d H^2 \sqrt{K} + \kappa^{-1} d^2 H^2 )$            & $\O(k)$       & $\O(k)$       & inhomogeneous         \\
      UCRL-MNL-OL (Theorem~\ref{thm:upper-bound-3})                     & $\Ot(d H^2 \sqrt{K} + \kappa^{-1} d^2 H^2 )$ & $\O(1)$       & $\O(1)$       & inhomogeneous         \\
      \midrule
      \multicolumn{1}{c}{Lower Bound~~(Corollary~\ref{cor:lower-bound})} & $\Omega(d H\sqrt{K \kappa^*})$                  & --             & --             & infinite action space \\
      \bottomrule
    \end{tabular}}\vspace{-2mm}
\end{table*}

Towards addressing the limitation of linear function approximation, a new class of MDPs that utilizes multinomial logit function approximation has been proposed by~\citet{AAAI'23:logistic-mdp} recently. Such formula also aligns better with models like neural networks~\citep{Nature'15:deep-learning}, which inherently respect the probabilistic constraints through a softmax layer and allow for greater expressive power. However, though it offers promising benefits, the introduction of non-linear functions introduces significant challenges in both \emph{statistical} and \emph{computational} efficiency. Specifically, the best-known approach of~\citet{AAAI'23:logistic-mdp} has achieved an $\Ot(\kappa^{-1}dH^2\sqrt{K})$ regret, where $\kappa$ is a problem-dependent quantity that measures the effective non-linearity over the entire parameter space, $d$ is the feature dimension, $H$ is the episode length, and $K$ is the number of episodes. Unfortunately, as we show in Claim~\ref{clm:kappa}, it holds that $\kappa^{-1} > U^2$, where $U$ denotes the maximum number of reachable states, which equals to the size of the state space $S$ in the worst case. This undermines the core motivation for function approximation, which aims to mitigate dependence on large state and action spaces. Furthermore, the method requires storing all historical data, and its time complexity \emph{per episode} grows \emph{linearly} with the episode count (i.e., $\O(k)$ at episode $k$). Thus, a natural question arises:
\begin{center}
  \emph{Is it possible to design both statistically and computationally efficient algorithms \\ for RL with MNL function approximation?}
\end{center}

In this work, we answer this question affirmatively for the class of MNL mixture MDPs where the transition is parameterized by a multinomial logit function. Our contributions are listed as follows:
\begin{itemize}[itemsep=2pt, leftmargin=8pt]
  \item For statistical efficiency, we propose the UCRL-MNL-LL algorithm, which attains a regret bound of $\Ot(d H^2 \sqrt{K} + \kappa^{-1} d^2 H^2)$. Our result significantly improves upon the $\Ot(\kappa^{-1}dH^2\sqrt{K})$ rate of~\citet{AAAI'23:logistic-mdp}, making the first time to achieve a $\kappa$-independent dominant term (note that the lower-order term still scales with $\kappa^{-1}$, but does not depend on $K$, making it acceptable). To achieve this, we propose a tighter confidence set based on a new Bernstein-type concentration~\citep{NIPS'22:Perivier-dynamic-assortment} instead of the standard Hoeffding-type concentration, and exploit the self-concordant-like property~\citep{EJOS'10:Bach-self-concordant} of the log-loss function to better use local information.

  \item For computational efficiency, we propose the UCRL-MNL-OL algorithm, which enjoys the same regret bound as UCRL-MNL-LL, but with only \emph{constant} storage and time complexity per episode. This is enabled by recognizing that the negative log-likelihood function is exponentially concave, which motivates the use of online mirror descent with a specifically tailored local norm~\citep{NeurIPS'23:MLogB} to replace the standard maximum likelihood estimation. Furthermore, we construct the optimistic value function by incorporating a closed-form bonus term through a second-order Taylor expansion, thus avoiding the need to solve a non-convex optimization problem.
  
  \item We establish the \emph{first} lower bound for MNL mixture MDPs by introducing a reduction to the logistic bandit problem. We prove a problem-dependent lower bound of $\Omega(dH \sqrt{K \kappa^*})$ for infinite action setting, where $\kappa^*$ is an another problem-dependent quantity that measures the effective non-linearity over around the ground truth parameter. Though this does not constitute a strict lower bound for the finite action case studied in this work, it suggests that our result may be optimal in $d$ and $K$. \footnote{After the submission of our work to arXiv~\citep{arXiv'24:MNL-MDP}, a follow up work by~\citet{arXiv'24:Park-infinite-MNL} proved a lower bound of $\Omega(dH^{3/2}\sqrt{K})$ for the finite action setting. This confirms that our result is optimal in $d$ and $K$.} 
\end{itemize}

Table~\ref{tab:1} provides a comparison between our work and previous studies, focusing on regret and computational costs, including both storage and time complexity.

\textbf{Organization.}~~We introduce the related work in Section~\ref{sec:related-work} and present the setup in Section~\ref{sec:problem-setup}. Then, we design a statistically efficient algorithm in Section~\ref{sec:statistically}. Next, we present an algorithm that achieves both statistical and computational efficiency in Section~\ref{sec:computationally}. Finally, we establish the lower bound in Section~\ref{sec:lower-bound}. Section~\ref{sec:conclusion} concludes the paper. Due to space limits, we defer all proofs to the appendixes.

\textbf{Notations.}~~We use $[x]_{[a, b]}$ to denote $\min(\max(x, a), b)$. For a vector $\x \in \R^d$ and positive semi-definite matrix $A \in \R^{d \times d}$, denote $\norm{\x}_{A} = \sqrt{\x^\top A \x}$. For a strictly convex and continuously differentiable function $\psi: \mathcal{W} \mapsto \mathbb{R}$, the Bregman divergence is defined as $\mathcal{D}_\psi\left(\mathbf{w}_1, \mathbf{w}_2\right)=\psi\left(\mathbf{w}_1\right)-\psi\left(\mathbf{w}_2\right)-$ $\left\langle\nabla \psi\left(\mathbf{w}_2\right), \mathbf{w}_1-\mathbf{w}_2\right\rangle$. We use the notation $\O(\cdot)$ to indicate different types of dependencies depending on the context. For regret analysis, $\O(\cdot)$ omits only constant factors. For computational costs, we use $\O(\cdot)$ to solely highlight the dependence on the number of episode as this is the primary factor influencing the complexity. Additionally, we employ $\Ot(\cdot)$ to hide all polylogarithmic factors.

\section{Related Work}
\label{sec:related-work}
\vspace{-1mm}
In this section, we review related works from both setup and technical perspectives.

\textbf{RL with Generalized Linear Function Approximation.}~~There are recent efforts devoted to investigating function approximation beyond the linear models. \citet{ICLR'21:Wang-generalized-linear} investigated RL with generalized linear function approximation. Notably, unlike our approach that models transitions using a generalized linear model, they apply this approximation directly to the value function. Another line of works~\citep{AISTATS'21:RL-Exponential, NeurIPS'22:Exponential-RL, AAAI'24:Exponential-MDP} has studied RL with exponential function approximation and also aimed to ensure that transitions constitute valid probability distributions. The MDP model can be viewed as an extension of bilinear MDPs in their work while our setting extends linear mixture MDPs. These studies are complementary to ours and not directly comparable. Moreover, these works also enter the computational and statistical challenges arising from non-linear function approximation that remain to be addressed. The most relevant work to ours is the recent work by~\citet{AAAI'23:logistic-mdp}, which firstly explored a similar setting to ours, where the transition is characterized using a multinomial logit model. We significantly improve upon their results by providing statistically and computationally more efficient algorithms.

\textbf{RL with General Function Approximation.}~~There have also been some works that studies RL with general function approximation. \citet{NIPS'13:Russo-Eluder} and~\citet{NIPS'14:Osband:model-based} initiated the study on the minimal structural assumptions that render sample-efficient learning by proposing a structural condition called Eluder dimension. Recently, several works have investigated different conditions for sample-efficient interactive learning, such as Bellman Eluder (BE) dimension~\citep{NIPS'21:Jin-low-rank}, Bilinear classes~\citep{ICML'21:Du-bilinear}, Decision-Estimation Coefficient (DEC)~\citep{arXiv'21:Foster:complexity}, and Admissible Bellman Characterization (ABC)~\citep{ICLR'23:Chen-ABC}. A notable difference is that they impose assumptions on the value functions while we study function approximation on the transitions to ensure valid probability distributions. Moreover, the goal of these works is to study the conditions for sample-efficient reinforcement learning, but not focus on the computational efficiency.

\textbf{Multinomial Logit Bandits.}~~There are two types of multinomial logit bandits studied in the literature: the single-parameter model, where the parameter is a vector~\citep{SSRN'17:Cheung-MNL} and multiple-parameter model, where the parameter is a matrix~\citep{NeurIPS'21:Amani-MLogB}. We focus on the single-parameter model, which are more relevant to our setting. The pioneering work by \citet{SSRN'17:Cheung-MNL} achieved a Bayesian regret of $\Ot(\kappa^{-1} d\sqrt{T})$, where $T$ denotes the number of rounds in bandits. This result was further enhanced by subsequent studies~\citep{NIPS'19:Oh-MNL-Thompson, AAAI'21:Oh-MNL, EJOR'23:Agrawal-tractable}. In particular, \citet{NIPS'22:Perivier-dynamic-assortment} significantly improved the dependence on $\kappa$, obtaining a regret of $\Ot(d \sqrt{\kappa T} + \kappa^{-1})$ in the uniform revenue setting. Most prior methods required storing all historical data and faced computational challenge. To address this issue, the most recent work by~\citet{NeurIPS'24:Lee-Optimal-MNL} proposed an algorithm with constant computational and storage costs building on recent advances in multiple-parameter model~\citep{NeurIPS'23:MLogB}. Their algorithm achieves the optimal regret of $\Ot(d \sqrt{\kappa T} + \kappa^{-1})$ and $\Ot(d \sqrt{T} + \kappa^{-1})$ under uniform and non-uniform rewards respectively. However, although the underlying models of MNL bandits and MDPs share similarities, the challenges they present differ substantially, and techniques developed for MNL bandits cannot be directly applied to MNL MDPs. For example, in MNL bandits, the objective is to select a series of assortments with \emph{varying} sizes that maximize the expected revenue, whereas in MNL MDPs, the goal is to choose \emph{one} action at each stage to maximize the cumulative reward. Thus, it is necessary to design new algorithms tailored for MDPs to address these unique challenges.
\section{Problem Setup}
\label{sec:problem-setup}

In this section, we present the problem setup of RL with multinomial logit function approximation.

\textbf{Inhomogeneous, Episodic MDPs.}~~An inhomogeneous, episodic MDP instance can be denoted by a tuple $\M=(\S, \A, H, \{\P_h\}_{h=1}^H, \{r_h\}_{h=1}^H)$, where $\S$ is the state space, $\A$ is the action space, $H$ is the length of each episode, $\P_h: \S \times \A \times \S \to [0, 1]$ is the transition kernel with $\P_h(s' \given s, a)$ is being the probability of transferring to state $s'$ from state $s$ and taking action $a$ at stage $h$, $r_h: \S \times \A \to [0, 1]$ is the deterministic reward function. A policy $\pi = \{\pi_h\}_{h=1}^H$ is a collection of mapping $\pi_h$, where each $\pi_h: \S \to \Delta(\A)$ is a function maps a state $s$ to distributions over $\A$ at stage $h$. For any policy $\pi$ and $(s, a) \in \S \times \A$, we define the action-value function $Q_h^{\pi}$ and value function $V_h^{\pi}$ as follows:
\begin{align*}
  Q_h^\pi(s, a) =\mathbb{E}\left[\sum_{h^{\prime}=h}^H r_{h^{\prime}}\left(s_{h^{\prime}}, a_{h^{\prime}}\right) \Big | s_h=s, a_h=a\right], \quad V_h^\pi(s) =\mathbb{E}_{a \sim \pi_h(\cdot \mid s)}\left[Q_h^\pi(s, a)\right],
\end{align*}
where the expectation of $Q_h^\pi$ is taken over the randomness of the transition $\P$ and policy $\pi$. The optimal value function $V_h^*$ and action-value function $Q_h^*$ given by $V_h^*(s)    = \sup_{\pi} V_h^\pi(s)$ and $Q_h^*(s, a) = \sup_{\pi} Q_h^\pi(s, a)$. For any function $V: \S \to \R$, we define $[\mathbb{P}_h V](s, a)=\mathbb{E}_{s^{\prime} \sim \mathbb{P}_h(\cdot \mid s, a)} V(s^{\prime})$.

\textbf{Learning Protocol.}~~In the online MDP setting, the learner interacts with the environment without the knowledge of the transition kernel $\{\P_h\}_{h=1}^H$. We assume the reward function $\{r_h\}_{h=1}^H$ is deterministic and known to the learner. The interaction proceeds in $K$ episodes. At the beginning of episode $k$, the learner chooses a policy $\pi_k = \{\pi_{k,h}\}_{h=1}^H$. At each stage $h \in [H]$, starting from the initial state $s_{k,1}$, the learner observes the state $s_{k,h}$, chooses an action $a_{k,h}$ sampled from $\pi_{k,h}(\cdot \given s_{k,h})$, obtains reward $r_{h}(s_{k,h}, a_{k,h})$ and transits to the next state $s_{k, h+1}\sim \P_h(\cdot \given s_{k,h}, a_{k,h})$ for $h \in [H]$. The episode ends when $s_{H+1}$ is reached. The goal of the learner is to minimize regret, defined as
\begin{align*}
  \Reg(K) = \sum_{k=1}^K V_1^*(s_{k,1}) - \sum_{k=1}^K V_1^{\pi_k}(s_{k,1}),
\end{align*}
which is the difference between the cumulative reward of the optimal policy and the learner's policy.

\textbf{Multinomial Logit (MNL) Mixture MDPs.}~~Although significant advances have been achieved for MDPs with linear function approximation,~\citet{AAAI'23:logistic-mdp} show that there exists a set of features such that no linear transition model can induce a valid probability distribution, which limits the expressiveness of function approximation. To overcome this limitation, they propose a new class of MDPs with multinomial logit function approximation. However, their work focuses on the \emph{homogeneous} setting, where the transitions remain the same across all stages (i.e., $\P_1 = . . . = \P_H$). In this work, we address the more general \emph{inhomogeneous} setting, allowing transitions to vary across different stages. We introduce the formal definition of inhomogeneous MNL mixture MDPs below.
\begin{myDef}[Reachable States]
  For any $(h, s, a) \in [H] \times \S \times \A $, we define the ``reachable states'' as the set of states that can be reached from state $s$ taking action $a$ at stage $h$ within a single transition, i.e., $\S_{h, s, a} \triangleq \{s' \in \S \mid \P_h(s' \given s, a) > 0\}$. Furthermore, we define $S_{h, s, a} \triangleq |\S_{h, s, a}|$ and denote by $U \triangleq \max_{(h, s, a)} S_{h, s, a}$ the maximum number of reachable states.
\end{myDef}

\begin{myDef}[MNL Mixture MDP]
  \label{def:mnl-mdp}
  An MDP instance $\M=(\S, \A, H, \{\P_h\}_{h=1}^H, \{r_{h}\}_{h=1}^H)$ is called an inhomogeneous, episodic $B$-bounded MNL mixture MDP if there exist a \emph{known} feature mapping $\phi(s' \given s, a): \S \times \A \times \S \to \R^d$ with $\norm{\phi(s' \given s, a)}_2 \leq 1$ and \emph{unknown} vectors $\{\theta_h^*\}_{h=1}^H \in \Theta$ with $\Theta = \{\theta \in \R^d, \norm{\theta}_2 \leq B\}$, such that for all $(s, a, h) \in \S \times \A \times [H]$ and $s' \in \S_{h, s, a}$, it holds that
  \begin{align*}
    \P_h(s' \given s, a) = \frac{\exp(\phi(s' \given s, a)^\top \theta_h^*)}{\sum_{\st \in \S_{h, s, a}} \exp(\phi(\st \given s, a)^\top \theta_h^*)}.
  \end{align*}
\end{myDef}
\begin{myRemark}
  \label{rmk:inhomogeneous}
  This model is consistent with models like neural networks~\citep{Nature'15:deep-learning}, where the feature $\phi$ is obtained by omitting the final layer, and $\theta_h^*$ represents the weights of the last layer. A final softmax layer is then applied to ensure that the output forms a valid probability distribution.

\end{myRemark}
For any $\theta \in \R^d$, we define the induced transition as $p_{s, a}^{s'}(\theta) = \frac{\exp(\phi(s' \given s, a)^\top \theta)}{\sum_{\st \in \S_{s, a}} \exp(\phi(\st \given s, a)^\top \theta)}$.
We then introduce the following two key problem-dependent quantities $\kappa$ and $\kappa^*$ that measure the effective non-linearity over the entire parameter space and around the ground truth parameter respectively.
\begin{myAssumption}
  \label{asm:kappa}
  There exists $0 < \kappa \leq \kappa^* < 1$ such that for all $(s, a, h) \in \S \times \A \times [H]$ and $s', s'' \in \S_{h, s, a}$, it holds that $\inf_{\theta \in \Theta} p_{s, a}^{s'}(\theta) p_{s, a}^{s''}(\theta) \geq \kappa$ and $p_{s, a}^{s'}(\theta_h^*) p_{s, a}^{s''}(\theta_h^*) \geq \kappa^*$.
\end{myAssumption}

Assumption~\ref{asm:kappa} is similar to the assumption in generalized linear bandit~\citep{NIPS'10:Filippi-generalized-linear} and logistic bandit~\citep{ICML'20:Faury-improved-logistic, AISTATS'21:Abeille-Instance-optimal} to guarantee the Hessian matrix is non-singular. 

Finally, we show the claim about the range of the magnitude of $\kappa$ and $\kappa^*$.
\begin{myClaim}
  \label{clm:kappa}
 It holds that ${1}/{(U\exp(2B))^2} \leq \kappa \leq \kappa^* \leq {1}/{U^2}$.
\end{myClaim}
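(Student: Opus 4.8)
The plan is to establish the three inequalities $1/(U\exp(2B))^2 \leq \kappa$, $\kappa \leq \kappa^*$, and $\kappa^* \leq 1/U^2$ separately, since each follows from elementary properties of the softmax map $p_{s,a}^{s'}(\theta)$ together with the boundedness $\norm{\phi(\st \given s,a)}_2 \leq 1$ and $\norm{\theta}_2 \leq B$. Throughout I read off from Assumption~\ref{asm:kappa} that the tightest admissible constants are $\kappa = \min_{(h,s,a)} \min_{s',s''} \inf_{\theta\in\Theta} p_{s,a}^{s'}(\theta)\, p_{s,a}^{s''}(\theta)$ and $\kappa^* = \min_{(h,s,a)} \min_{s',s''} p_{s,a}^{s'}(\theta_h^*)\, p_{s,a}^{s''}(\theta_h^*)$, where the pairs $(s',s'')$ range over $\S_{h,s,a}$ and are allowed to coincide.

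The middle inequality $\kappa \leq \kappa^*$ is immediate: for every $h$ we have $\theta_h^* \in \Theta$, so for each fixed $(h,s,a)$ and each pair $s',s''$, the infimum over $\Theta$ cannot exceed the value at $\theta_h^*$, i.e. $\inf_{\theta\in\Theta} p_{s,a}^{s'}(\theta)\, p_{s,a}^{s''}(\theta) \leq p_{s,a}^{s'}(\theta_h^*)\, p_{s,a}^{s''}(\theta_h^*)$. Taking the minimum of both sides over all $(h,s,a,s',s'')$ preserves the inequality and yields $\kappa \leq \kappa^*$. For the lower bound, I would apply Cauchy--Schwarz to get $\abs{\phi(\st \given s,a)^\top \theta} \leq \norm{\phi(\st \given s,a)}_2 \norm{\theta}_2 \leq B$, so each $\exp(\phi(\st \given s,a)^\top \theta) \in [e^{-B}, e^{B}]$. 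Substituting into the definition of the softmax, the numerator is at least $e^{-B}$ while the denominator is a sum of at most $S_{h,s,a} \leq U$ terms each bounded by $e^B$, hence at most $U e^B$. This gives $p_{s,a}^{s'}(\theta) \geq 1/(U e^{2B})$ uniformly over $\theta \in \Theta$ and over the choice of $s'$; multiplying two such factors yields $p_{s,a}^{s'}(\theta)\, p_{s,a}^{s''}(\theta) \geq 1/(U\exp(2B))^2$, and taking the infimum over $\theta$ and the minimum over all indices gives $\kappa \geq 1/(U\exp(2B))^2$.

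For the upper bound $\kappa^* \leq 1/U^2$, I would specialize to a triple $(h,s,a)$ that attains the maximal reachable-state count $S_{h,s,a} = U$. Since $\{p_{s,a}^{s'}(\theta_h^*)\}_{s' \in \S_{h,s,a}}$ is a probability distribution supported on $U$ states, its entries sum to $1$ and therefore average to $1/U$, so the smallest entry, attained at some $s_0 \in \S_{h,s,a}$, satisfies $p_{s,a}^{s_0}(\theta_h^*) \leq 1/U$. Choosing the coincident pair $s' = s'' = s_0$, which the definition permits, produces the admissible product $p_{s,a}^{s_0}(\theta_h^*)^2 \leq 1/U^2$; as $\kappa^*$ is the minimum of such products over all indices, this forces $\kappa^* \leq 1/U^2$.

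I do not anticipate a genuine obstacle, as every step is a direct computation. The only points requiring care are in the upper bound: one must evaluate at a triple realizing the maximal count $U$ rather than an arbitrary one, and one must exploit that $(s',s'')$ may coincide, so that a single averaging/pigeonhole argument on one coordinate suffices instead of having to control two distinct small probabilities simultaneously.
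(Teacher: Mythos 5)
Your proof is correct and takes essentially the same approach as the paper's: the same numerator/denominator bound on the softmax (via $\abs{\phi^\top\theta}\leq B$) for the lower bound, the same observation that the infimum over $\Theta$ is dominated by the value at $\theta_h^*\in\Theta$ for the middle inequality, and the same averaging/pigeonhole idea at a triple attaining $S_{h,s,a}=U$ for the upper bound. The only cosmetic difference is that the paper applies pigeonhole to the double sum $\sum_{s'}\sum_{s''}p_{s,a}^{s'}(\theta_h^*)\,p_{s,a}^{s''}(\theta_h^*)=1$ over $U^2$ terms, whereas you take the smallest single probability (at most $1/U$) and square it via a coincident pair; these are equivalent.
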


\section{Statistically Efficient Algorithm}
\label{sec:statistically}

The work of~\citet{AAAI'23:logistic-mdp} first introduced the MNL mixture MDPs and proposed an algorithm with a regret bound of $\Ot(\kappa^{-1}dH^2\sqrt{K})$. However, as discussed in Claim~\ref{clm:kappa}, it follows that $U^2 \leq \kappa^{-1} \leq (U\exp(2B))^2$, which results in the regret bound scaling polynomially with the number of reachable states $U$. In the worst case, $U$ can be equal to the size of the state space $S$, thereby undermining the motivation for function approximation, which aims to mitigate the dependence on the large state and action spaces. In this section, we address this significant issue by proposing a statistically efficient algorithm that eliminates this dependence in the dominant term of the regret.

\subsection{Parameter Estimation}
\label{sec:parameter-estimation}

In this section, we first present the parameter estimation method based on the maximum likelihood estimation (MLE) for MNL mixture MDPs. Next, we review the confidence set construction based on the estimated parameters from previous work~\citep{AAAI'23:logistic-mdp}. Finally, we propose our new confidence set construction and highlight the improvements it offers over the previous approach.

Since the transition parameter $\theta_h^*$ is unknown, we need to estimate it using the historical data. At episode $k$, we collect a trajectory $\{(s_{k, h}, a_{k,h})\}_{h=1}^H$, then define the variable: $y_{k,h} \in \{0 ,1\}^{S_{k,h}}$ where $y_{k,h}^{s'} = \ind {s' = s_{k,h+1}}$ for $s' \in \S_{k, h} \triangleq \S_{s_{k, h}, a_{k,h}}$ and $S_{k,h} = \abs{\S_{k,h}}$. We denote by $p_{k,h}^{s'}(\theta) = p_{s_{k,h}, a_{k,h}}^{s'}(\theta)$. Then $y_{k,h}$ is a sample from the following multinomial distribution: 
\begin{align*}
  y_{k,h} \sim \textnormal{multinomial} (1, [p_{k,h}^{s_1}(\theta^*), \ldots, p_{k,h}^{s_{S_{k,h}}}(\theta^*)]),
\end{align*}
where the parameter $1$ indicates that $y_{k,h}$ is a single-trial sample. Furthermore, we define the noise $\epsilon_{k,h}^{s'} = y_{k,h}^{s'} - p_{k,h}^{s'}(\theta_h^*)$. It is clear that $\epsilon_{k,h} \in [-1, 1]^{S_{k,h}}$, $\E[\epsilon_{k,h}] = \mathbf{0}$ and $\sum_{s' \in \S_{k,h}} \epsilon_{i, h}^{s'} = 0$.

We estimate the parameter $\theta_h^*$ using the MLE and construct the estimator $\thetah_{k, h}$ as follows:
\begin{align}
  \label{eq:theta}
  \thetah_{k, h} = \argmin_{\theta \in \R^d} \L_{k,h}(\theta) \triangleq \sum_{i=1}^{k-1} \sum_{s' \in \S_{i,h}} -y_{i,h}^{s'} \log p_{i,h}^{s'}(\theta) + \frac{\lambda_k}{2} \norm{\theta}_2^2.
\end{align}
where $\lambda_k$ is the regularization parameter. Though the MLE estimator $\thetah_{k,h}$ is the same as that of~\citet{AAAI'23:logistic-mdp}, the confidence set is constructed differently. Specifically, define the gradient $\G_{k,h}(\theta)$ and Hessian matrix $\H_{k,h}(\theta)$ of the MLE loss by
\begin{align*}
  \G_{k,h}(\theta) & = \sum_{i=1}^{k-1} \sum_{s' \in \S_{i, h}} (p_{i, h}^{s'}(\theta) - y_{i, h}^{s'}) \phi_{i, h}^{s'} + \lambda_k \theta, \\
   \H_{k, h}(\theta) & = \sum_{i=1}^{k-1} \sum_{s' \in \S_{i, h}} p_{i, h}^{s'}(\theta)\phi_{i, h}^{s'}(\phi_{i, h}^{s'})^\top  - \sum_{i=1}^{k-1} \sum_{s' \in \S_{i, h}} \sum_{s'' \in \S_{i, h}}p_{i, h}^{s'}(\theta) p_{i, h}^{s''}(\theta)\phi_{i, h}^{s'} (\phi_{i, h}^{s''})^\top + \lambda_k I.
\end{align*}
Furthermore, we define the feature covariance matrix $A_{k,h} = \kappa^{-1}\lambda_k I + \sum_{i=1}^{k-1} \sum_{s' \in \S_{i,h}}\phi_{i,h}^{s'} (\phi_{i,h}^{s'})^\top$. By demonstrating $\H_{k,h}(\theta) \succeq \kappa A_{k,h}, \forall \theta \in \Theta$, \citet{AAAI'23:logistic-mdp} construct the confidence set as 
\begin{align}
  \label{eq:confidence-set-0}
  \C_{k,h} = \Big\{ \bignorm{\theta - \thetah_{k,h}}_{A_{k,h}} \leq \kappa^{-1}\sqrt{d\log (kH/\delta)} \triangleq \beta_k \Big\}.
\end{align}
Since the radius of the confidence set depends on $\kappa^{-1}$, the final regret bound also exhibits a dependence on $\kappa^{-1}$. To eliminates this dependence, we construct a $\kappa$-independent confidence set based on new Bernstein-like inequalities in Lemma~\ref{lem:concentration-bernstein}, following recent advances in logistic bandits~\citep{ICML'20:Faury-improved-logistic, NIPS'22:Perivier-dynamic-assortment}. Specifically, we show the following lemma.
\begin{myLemma}
  \label{lem:confidence-set-1}
  For any $\delta \in (0, 1)$, set $\lambda_k = d\log(kH/\delta)$ and define the confidence set as
  \begin{align}
    \label{eq:confidence-set-1}
    \Ch_{k,h} = \bigg\{\theta \in \Theta \mid \bignorm{\G_{k,h}(\theta) - \G_{k,h}(\thetah_{k, h})}_{\H_{k, h}^{-1}(\theta)} \leq (B + 3) \sqrt{d \log(kH/\delta)}  \triangleq \betah_k \bigg\}.
  \end{align}
  Then, we have $\Pr[\theta_h^* \in \Ch_{k, h}] \geq 1 - \delta, \forall k \in [K], h\in [H]$.
\end{myLemma}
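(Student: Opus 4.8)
The plan is to reduce the membership condition for $\theta_h^*$ to a single self-normalized martingale deviation and then control it with the Bernstein-type bound whose normalizing matrix is exactly the empirical Hessian. First I would use first-order optimality of the MLE: since $\thetah_{k,h}$ minimizes the convex objective $\L_{k,h}$ over all of $\R^d$ and $\G_{k,h} = \nabla \L_{k,h}$, we have $\G_{k,h}(\thetah_{k,h}) = \mathbf{0}$. Consequently, evaluating the quantity defining $\Ch_{k,h}$ at the test point $\theta = \theta_h^*$ collapses to $\norm{\G_{k,h}(\theta_h^*)}_{\H_{k,h}^{-1}(\theta_h^*)}$, so it suffices to show this is at most $\betah_k$ with the claimed probability.

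Next I would expand the gradient at the truth. Using $\epsilon_{i,h}^{s'} = y_{i,h}^{s'} - p_{i,h}^{s'}(\theta_h^*)$ and $\sum_{s'} y_{i,h}^{s'} = 1$,
\begin{align*}
  \G_{k,h}(\theta_h^*) = -\sum_{i=1}^{k-1} \sum_{s' \in \S_{i,h}} \epsilon_{i,h}^{s'}\phi_{i,h}^{s'} + \lambda_k \theta_h^*,
\end{align*}
so by the triangle inequality in the $\H_{k,h}^{-1}(\theta_h^*)$ norm the target splits into a \emph{noise} term and a \emph{regularization} term. For the regularization term, the Hessian contains the summand $\lambda_k I$ and its remaining part is positive semidefinite, hence $\H_{k,h}(\theta_h^*) \succeq \lambda_k I$, giving $\H_{k,h}^{-1}(\theta_h^*) \preceq \lambda_k^{-1} I$ and therefore $\lambda_k \norm{\theta_h^*}_{\H_{k,h}^{-1}(\theta_h^*)} \leq \sqrt{\lambda_k}\,\norm{\theta_h^*}_2 \leq B\sqrt{\lambda_k} = B\sqrt{d\log(kH/\delta)}$, which accounts for the $B$ in $\betah_k$ and fixes the choice $\lambda_k = d\log(kH/\delta)$.

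The crux is the noise term, and the key structural observation is that the Hessian \emph{exactly} encodes the conditional covariance of the multinomial noise. Writing $\Phi_{i,h}$ for the matrix whose rows are $(\phi_{i,h}^{s'})^\top$ and $\Sigma_{i,h} = \mathrm{diag}(p_{i,h}(\theta_h^*)) - p_{i,h}(\theta_h^*)p_{i,h}(\theta_h^*)^\top = \mathrm{Cov}(\epsilon_{i,h})$, one has $\H_{k,h}(\theta_h^*) - \lambda_k I = \sum_{i=1}^{k-1}\Phi_{i,h}^\top \Sigma_{i,h}\Phi_{i,h}$, which is precisely the conditional covariance of the noise sum $\Phi_{i,h}^\top \epsilon_{i,h}$. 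Thus the normalizing matrix is variance-adapted rather than a worst-case $\kappa^{-1}$-inflated surrogate, and this is exactly what lets us avoid the $\kappa^{-1}$ factor present in the Hoeffding-based set~\eqref{eq:confidence-set-0}. Since $\{\epsilon_{i,h}\}_i$ is a martingale-difference sequence with respect to the episode filtration and each $\phi_{i,h}^{s'}$ is predictable, I would invoke the Bernstein-type self-normalized concentration of Lemma~\ref{lem:concentration-bernstein} to obtain $\norm{\sum_{i<k}\sum_{s'}\epsilon_{i,h}^{s'}\phi_{i,h}^{s'}}_{\H_{k,h}^{-1}(\theta_h^*)} \leq 3\sqrt{d\log(kH/\delta)}$ uniformly over $k$ (an anytime bound) with probability at least $1-\delta/H$. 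Adding the two contributions yields $\norm{\G_{k,h}(\theta_h^*)}_{\H_{k,h}^{-1}(\theta_h^*)} \leq (B+3)\sqrt{d\log(kH/\delta)} = \betah_k$, and a union bound over $h \in [H]$ delivers the stated $1-\delta$ guarantee for all $(k,h)$.

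The main obstacle I anticipate is casting the Bernstein inequality in a form whose normalization is the true empirical Hessian $\H_{k,h}(\theta_h^*)$ rather than a crude variance proxy: the noise here is vector-valued with correlated coordinates that sum to zero (a multinomial, not a scalar logistic response), so the concentration must be stated for such multi-dimensional martingale sums and matched carefully to the covariance $\Sigma_{i,h}$ appearing in the Hessian. Getting the leading constant to come out as $3$ is the delicate bookkeeping step, and the self-concordant-like control of how $\H_{k,h}(\theta)$ varies with $\theta$ is what ultimately justifies using the \emph{local} Hessian evaluated at the test point throughout the argument.
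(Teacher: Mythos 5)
Your proposal is correct and follows essentially the same route as the paper's proof: first-order optimality of the MLE giving $\G_{k,h}(\thetah_{k,h}) = \mathbf{0}$, splitting $\G_{k,h}(\theta_h^*)$ into the martingale noise term plus $\lambda_k \theta_h^*$, bounding the regularization part via $\H_{k,h}(\theta_h^*) \succeq \lambda_k I$ and $\norm{\theta_h^*}_2 \leq B$, and controlling the noise term with the Bernstein-type self-normalized concentration of Lemma~\ref{lem:concentration-bernstein} followed by a union bound over $h \in [H]$. The one step you defer as "delicate bookkeeping"---that the constant $3$ emerges from the choice $\lambda_k = d\log(kH/\delta)$ combined with the determinant bound $\det(\H_{k,h}(\theta_h^*)) \leq (\lambda_k + k/d)^d$ of Lemma~\ref{lem:det-trace-inequality}---is exactly the computation the paper carries out, and no self-concordance argument is actually needed in this lemma since the Bernstein bound is already normalized by the Hessian at the test point $\theta_h^*$.
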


\textbf{Comparison to prior work.}~~We compare the confidence sets defined in Eqs.~\eqref{eq:confidence-set-0} and~\eqref{eq:confidence-set-1}.

For the confidence set in~\eqref{eq:confidence-set-1}, by the self-concordance property of log-loss in Lemma~\ref{lem:confidence-set-2}, we have:
\begin{align*}
  \bignorm{\theta - \thetah_{k,h}}_{\H_{k,h}(\theta)} \leq (1+3 \sqrt{2}) \bignorm{\G_{k,h}( \theta) - \G_{k,h}(\thetah_{k,h})}_{\H_{k,h}^{-1}(\theta)} \leq (1+3 \sqrt{2})\betah_k.
\end{align*}
Then,  note that $\H_{k,h}(\theta) \succeq \kappa A_{k,h}$ for all $\theta \in \Theta$, we have
\begin{align}
  \label{eq:conversion}
  \bignorm{\theta - \thetah_{k,h}}_{A_{k,h}} \leq \kappa^{-1/2} \bignorm{\theta - \thetah_{k,h}}_{\H_{k,h}(\theta)} \leq \kappa^{-1/2} (1+3 \sqrt{2})(B + 3) \sqrt{d \log(kH/\delta)}.
\end{align}
Thus, compared to the confidence set in Eq.~\eqref{eq:confidence-set-0} from~\citet{AAAI'23:logistic-mdp}, our confidence set in Eq.~\eqref{eq:confidence-set-1} provides a strict improvement by at least a factor of $\kappa^{-1/2}$. This improvement is one of the key components to eliminate the dependence on $\kappa^{-1}$ in the dominant term of the final regret bound.

Additionally, we identify a technical issue of~\citet{AAAI'23:logistic-mdp}. Specifically, they bound the confidence set in Eq.~\eqref{eq:confidence-set-0} using the self-normalized concentration in Lemma~\ref{lem:self-normalized}. However, the noise is not independent, and since $\sum_{s' \in \S_{i,h}} \epsilon_{i, h}^{s'} = 0$ (due to the learner visiting each stage $h$ exactly once per episode), it does not satisfy the \emph{zero-mean} sub-Gaussian condition in Lemma~\ref{lem:self-normalized}. We observe similar oversights in multinomial logit contextual bandits~\citep{NIPS'19:Oh-MNL-Thompson, AAAI'21:Oh-MNL, EJOR'23:Agrawal-tractable}, an issue that, to our knowledge, has not been explicitly addressed in prior work. This issue can be resolved with only slight modifications in constant factors by a new self-normalized concentration with dependent noises in Lemma 1 of~\citet{AISTATS'24_banditMDP}, a simplified version of Lemma~\ref{lem:concentration-bernstein}.

\subsection{Optimistic Value Function Construction}
\label{sec:optimistic-value-function}

Given the confidence set $\Ch_{k, h}$, it is natural to follow the principle of ``optimism in the face of uncertainty'' and construct the optimistic value function. \citet{AAAI'23:logistic-mdp} constructed the optimistic value function $\Qb_{k, h}(s, a)$ by adding a closed-form upper confidence bound as follows:
\begin{align}
  \Qb_{k, h}(s, a) = \bigg [r_h(s, a)+\sum_{s' \in \S_{h, s, a}}p_{s, a}^{s'}(\thetah_{k,h}) \Vb_{k, h+1}(s')  + 2 H \beta_k \max_{s' \in \S_{h, s, a}} \norm{\phi_{s, a}^{s'}}_{A_{k,h}^{-1}}\bigg]_{[0, H]},\label{eq:Q-bar}
\end{align}
where $\Vb_{k,h}(s) = \max_{a \in \A} \Qb_{k,h}(s, a)$. Then, a naive idea to compute the optimistic value function is replacing the radius of the confidence set $\beta_k$ with $\betah_k$ and the feature covariance matrix $A_{k,h}$ with the Hessian matrix $\H_{k,h}(\theta_h^*)$. However, there are two issues with this approach. First, the true parameter $\theta_h^*$ is unknown thus the Hessian matrix $\H_{k,h}(\theta_h^*)$ is not computable in the algorithmic updates. Second, though $\betah_k$ is independent of $\kappa$ and the Hessian matrix $\H_{k,h}(\theta_h^*)$ captures local information, the bonus term $\max_{s' \in \S_{h, s, a}} \norm{\phi_{s, a}^{s'}}_{\H_{k,h}^{-1}(\theta)}$ remains in a global form. This term involves taking the maximum over all states $s' \in \S_{h, s, a}$, which prevents fully utilizing the local information.

To address these challenges, we construct the optimistic value function by directly taking the maximum expected reward over the confidence set. Specifically, we define $\Qh_{k, h}(s, a)$ and $\Vh_{k,h}(s)$ as
\begin{align}
  \Qh_{k, h}(s, a) = \bigg [r_h(s, a)+\max _{\theta \in \Ch_{k,h}} \sum_{s' \in \S_{h, s, a}}p_{s, a}^{s'}(\theta) \Vh_{k, h+1}(s') \bigg]_{[0, H]}, \Vh_{k,h}(s) = \max_{a \in \A} \Qh_{k, h}(s, a). \label{eq:QV}
\end{align}
This construction addresses the first challenge by eliminating the need for the Hessian matrix $\H_{k,h}(\theta_h^*)$ and directly leveraging the local information embedded in the confidence set $\Ch_{k,h}$. For the second challenge, although we bypass this issue in the construction of the optimistic value function, we still need to address it in the analysis. To tackle this, we employ a second-order Taylor expansion, in contrast to the first-order expansion used in the analysis of~\citet{AAAI'23:logistic-mdp}. This allows for a more precise capture of local information. Further details are provided in Lemma~\ref{lem:ucb} in the appendix.

\subsection{Regret Guarantee}

\begin{algorithm}[!t]
  \caption{UCRL-MNL-LL}
  \begin{algorithmic}[1]
    \REQUIRE Regularization parameter $\lambda$, confidence width $\betah_k$, confidence parameter $\delta$.
    \STATE \textbf{Initialization:} Set $\thetah_{1, h} = \mathbf{0}, \Qh_{1,h}(\cdot, \cdot)=0, \Vh_{1,h}(\cdot)=0$ for all $h \in [H]$.
    \FOR {$k = 1, \ldots, K$}
    \FOR {$h = 1, \ldots, H$}
    \STATE Observe current state $s_{k,h}$ and select action $a_{k,h} = \argmax_{a \in \A} \Qh_{k,h}(s_{k,h}, a)$.
    \ENDFOR
    \STATE Set $\Vh_{k+1, H+1}(\cdot) = 0$.
    \FOR {$h = H, \ldots, 1$}
    \STATE Compute the estimator $\thetah_{k+1,h}$ by Eq.~\eqref{eq:theta} and update the confidence set $\Ch_{k+1,h}$ by Eq.~\eqref{eq:confidence-set-1}.
    \STATE Compute $\Qh_{k+1, h}(\cdot, \cdot)$ and $\Vh_{k+1, h}(\cdot)$ as in Eq.~\eqref{eq:QV}.
    \ENDFOR
    \ENDFOR
  \end{algorithmic}
  \label{alg:1}
\end{algorithm}

Based on the parameter estimation in Section~\ref{sec:parameter-estimation} and the construction of the optimistic value function in Section~\ref{sec:optimistic-value-function}, we propose the UCRL-MNL-LL algorithm. At each stage $h$ of episode $k$, the algorithm observes the current state $s_{k,h}$ and selects the action that maximizes the value function, i.e., $a_{k,h} = \argmax_{a \in \A} \Qh_{k,h}(s_{k,h}, a)$, and transits to next state $s_{k,h+1}$. After collecting the trajectory $\{s_{k, h}, a_{k,h}\}_{h=1}^H$, the estimator $\thetah_{k+1,h}$ is updated using Eq.~\eqref{eq:theta}, and the confidence set $\Ch_{k+1,h}$ is updated according to Eq.~\eqref{eq:confidence-set-1}. Then, the value function $\Qh_{k+1,h}$ and $\Vh_{k+1,h}$ are updated using Eq.~\eqref{eq:QV}. The detailed procedure is outlined in Algorithm~\ref{alg:1}. We show it achieves the following regret guarantee.
\begin{myThm}
  \label{thm:upper-bound-1}
  For any $\delta \in (0, 1)$, set $\lambda_k = d\log(kH/\delta)$, and $\betah_k = (B + 3) \sqrt{d \log(kH/\delta)}$. With probability at least $1 - \delta$, UCRL-MNL-LL algorithm (Algorithm~\ref{alg:1}) ensures the following guarantee:
  \begin{align*}
      {\Reg}(K) \leq \Ot \big(d H^2\sqrt{K} + \kappa^{-1} d^2 H^2  \big).
  \end{align*}
\end{myThm}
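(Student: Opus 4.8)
The plan is to follow the standard optimism-based regret decomposition for episodic MDPs, but to carry it out carefully so that the local-information tools developed in Sections~\ref{sec:parameter-estimation} and~\ref{sec:optimistic-value-function} deliver a $\kappa$-independent leading term. First I would condition on the high-probability event that $\theta_h^* \in \Ch_{k,h}$ for all $k,h$ simultaneously, which holds with probability at least $1-\delta$ by a union bound over Lemma~\ref{lem:confidence-set-1}. On this event, the optimistic construction in Eq.~\eqref{eq:QV} should be shown to satisfy $\Qh_{k,h}(s,a) \geq Q_h^*(s,a)$ and hence $\Vh_{k,h}(s) \geq V_h^*(s)$ for all $(s,a,h)$, by backward induction on $h$: since $\theta_h^* \in \Ch_{k,h}$, the inner maximization over $\theta \in \Ch_{k,h}$ dominates the true expected continuation value, and the induction hypothesis $\Vh_{k,h+1} \geq V_{h+1}^*$ propagates optimism upward. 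This gives the per-episode bound $V_1^*(s_{k,1}) - V_1^{\pi_k}(s_{k,1}) \leq \Vh_{k,1}(s_{k,1}) - V_1^{\pi_k}(s_{k,1})$.

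Next I would unroll the optimistic value along the trajectory actually played. Writing the one-step optimism gap at stage $h$ as the difference between $\max_{\theta \in \Ch_{k,h}} \sum_{s'} p_{s_{k,h},a_{k,h}}^{s'}(\theta)\Vh_{k,h+1}(s')$ and the true expected continuation $\sum_{s'} p_{s_{k,h},a_{k,h}}^{s'}(\theta_h^*)\Vh_{k,h+1}(s')$, a telescoping argument over $h \in [H]$ reduces $\Vh_{k,1}(s_{k,1}) - V_1^{\pi_k}(s_{k,1})$ to a sum of these per-stage gaps plus a martingale difference term capturing the gap between the expected transition and the realized next state. The martingale term is controlled by Azuma--Hoeffding and contributes only an $\Ot(H\sqrt{K})$-type lower-order quantity since $\Vh$ is bounded in $[0,H]$. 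The substantive object is therefore $\sum_{k}\sum_{h} b_{k,h}$, where $b_{k,h}$ denotes the per-stage optimism bonus.

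The heart of the argument is bounding each $b_{k,h}$ by a local quadratic form. Here I would invoke Lemma~\ref{lem:ucb}: using a second-order Taylor expansion of $\theta \mapsto \sum_{s'} p^{s'}(\theta)\Vh_{k,h+1}(s')$ around $\thetah_{k,h}$, the first-order term is controlled via $\norm{\theta - \thetah_{k,h}}_{\H_{k,h}(\theta)} \leq (1+3\sqrt{2})\betah_k$ (the self-concordance conversion shown after Lemma~\ref{lem:confidence-set-1}), while the second-order remainder is an $O(\betah_k^2)$ term. This should yield a bonus of the form $b_{k,h} \lesssim H\betah_k \norm{\w_{k,h}}_{\H_{k,h}^{-1}(\thetah_{k,h})} + (\text{lower-order } \kappa^{-1} \text{ term})$, where $\w_{k,h}$ is the gradient of the continuation value (a $\Vh$-weighted combination of features, hence bounded by $2H$ in norm). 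Crucially, because the relevant norm is the \emph{local} Hessian norm rather than the $\kappa$-inflated $A_{k,h}^{-1}$ norm, the leading term avoids the $\kappa^{-1}$ factor. Summing the leading term over $k$ via Cauchy--Schwarz and the elliptical-potential (log-determinant) lemma applied to $\sum_k \w_{k,h}\w_{k,h}^\top$ — noting $\betah_k = \Ot(\sqrt{d})$ and that there are $d$ effective directions per stage, times $H$ stages — produces $\Ot(dH^2\sqrt{K})$, while the accumulated second-order remainders and the regularization/burn-in contributions aggregate into the $\Ot(\kappa^{-1}d^2H^2)$ lower-order term that does not scale with $K$.

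The main obstacle I anticipate is making the second-order Taylor step rigorous and $K$-uniform: one must control the remainder using the self-concordant-like property of the MNL log-loss so that the curvature is evaluated consistently between $\H_{k,h}(\theta_h^*)$, $\H_{k,h}(\thetah_{k,h})$, and $\H_{k,h}(\theta)$ for $\theta \in \Ch_{k,h}$, and show these Hessians are mutually equivalent up to constants on the confidence set. This equivalence is where the self-concordance lemma pays off and where the $\kappa^{-1}$ dependence could otherwise creep back into the dominant term; isolating it cleanly into a $K$-free lower-order term requires that the number of ``large-bonus'' stages (before the local geometry stabilizes) be bounded independently of $K$, which is precisely what forces the $\kappa^{-1}d^2H^2$ additive cost. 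I would expect the elliptical potential summation and the Azuma step to be routine, so the care is concentrated entirely in Lemma~\ref{lem:ucb} and the Hessian-equivalence bookkeeping.
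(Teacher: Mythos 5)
Your proposal follows essentially the same route as the paper's proof: condition on the confidence-set event of Lemma~\ref{lem:confidence-set-1}, establish optimism by backward induction and reduce the regret to a sum of per-stage bonuses plus an Azuma martingale term (the paper's Lemmas~\ref{lem:optimistic-value} and~\ref{lem:regret-common}), bound each bonus by a second-order Taylor expansion combined with the self-concordance conversion (Lemmas~\ref{lem:ucb} and~\ref{lem:confidence-set-2}), and then sum the first-order terms via Cauchy--Schwarz and a local-Hessian elliptical potential to get the $\kappa$-free $\Ot(dH^2\sqrt{K})$ term while the squared second-order terms collapse through the $\kappa$-dependent potential (statement (\textnormal{\uppercase\expandafter{\romannumeral4}}) of Lemma~\ref{lem:elliptical-potential-lemma}) into the $K$-free $\Ot(\kappa^{-1}d^2H^2)$ term. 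The only cosmetic difference is that you apply the potential lemma to the gradient outer products $\w_{k,h}\w_{k,h}^\top$ (and your expression $H\betah_k\norm{\w_{k,h}}$ double-counts an $H$ since $\w_{k,h}$ already carries the $V\leq H$ weights), whereas the paper works directly with the probability-weighted centered features that lower-bound the Hessian; these are equivalent up to constants.
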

\begin{myRemark}
  Focusing on the dominant term, our guarantee eliminates the problematic dependence on $\kappa^{-1}$, in stark contrast to the $\Ot(\kappa^{-1}dH^2\sqrt{K})$ result of~\citet{AAAI'23:logistic-mdp}. As noted in Claim~\ref{clm:kappa}, such undesirable dependence has polynomial scaling with the number of reachable states $U$, which can be as large as the entire state space $S$ in the worst case. This renders the guarantee for function approximation—designed for settings with large state and action spaces—essentially vacuous.
\end{myRemark}

\section{Computationally Efficient Algorithm}
\label{sec:computationally}

While the UCRL-MNL-LL algorithm is the \emph{first} statistically efficient algorithm for MNL mixture MDPs, it is computationally expensive dur the the optimization of the MLE in Eq.~\eqref{eq:theta} and non-convex optimization in Eq.~\eqref{eq:QV}. To address these challenges, we propose a computationally efficient algorithm in this section, which attains the same regret but with constant computational costs per episode.

\subsection{Efficient Online Parameter Estimation}

In this section, we focus on estimating the unknown parameter $\theta_h^*$ in a computationally efficient manner. We first discuss the storage and time complexities of the MLE optimization in Eq.~\eqref{eq:theta}. Next, we introduce an efficient online parameter estimation based on online mirror descent that provides similar guarantees to the MLE, but with constant storage and time complexity per episode.

For the storage complexity, the optimization problem defined in Eq.~\eqref{eq:theta} requires storing all historical data, resulting in a storage complexity of $\O(k)$ at episode $k$. In terms of time complexity, the problem does not have a closed-form solution and can only be solved to within an $\varepsilon$-accuracy, such as using projected gradient descent. As discussed in~\citet{AISTATS'22:Faury-Jointly}, optimizing the MLE typically requires $\O(\log(1/\varepsilon))$ iterations to achieve an $\varepsilon$-accurate solution. Since the loss function is defined over all historical data, each gradient step incurs a query complexity of $\O(k)$. As $\varepsilon$ is usually chosen as $1/k$ for episode $k$, the total time complexity is $\O(k \log k)$ at episode $k$. Consequently, both storage and time complexities scale linearly with the episode count, which is computationally expensive.

To improve the computational efficiency, the basic idea is to estimate the unknown parameter with the online mirror descent (OMD) update instead of the MLE as defined in Eq.~\eqref{eq:theta}. To this end, we first define per-episode loss function $\ell_{k,h}(\theta)$, gradient $g_{k,h}(\theta)$ and Hessian matrix $H_{k,h}(\theta)$ as
\begin{align}
  \ell_{k,h}(\theta) & = - \sum_{s' \in \S_{k,h}} y_{k,h}^{s'} \log p_{k,h}^{s'}(\theta), \quad g_{k,h}(\theta) = \nabla \ell_{k,h}(\theta) = \sum_{s' \in \S_{k,h}} (p_{k,h}^{s'}(\theta) - y_{k,h}^{s'}) \phi_{k,h}^{s'}     \label{eq:hessian}                                                                                                                           \\
    H_{k,h}(\theta) & = \nabla^2 \ell_{k,h}(\theta) = \sum_{s' \in \S_{k,h}} p_{k,h}^{s'}(\theta)\phi_{k,h}^{s'}(\phi_{k,h}^{s'})^\top  - \sum_{s' \in \S_{k,h}} \sum_{s'' \in \S_{k,h}}p_{k,h}^{s'}(\theta) p_{k,h}^{s''}(\theta)\phi_{k,h}^{s'} (\phi_{k,h}^{s''})^\top. \nonumber
    \vspace{-1mm}
\end{align}
Then, the design of the OMD algorithm can be conceptually divided into two parts: the approximation of the past losses and the approximation of the current loss. We provide the details of each below.

\vspace{-1mm}
\textbf{Approximate the past losses.}~~To integrate historical information from previous iterations while avoiding the use of MLE in Eq.~\eqref{eq:theta}, we construct the estimator $\thetab_{k+1,h}$ using the implicit OMD form:
\begin{align}
    \label{eq:theta-bar}
    \thetab_{k+1,h} = \argmin_{\theta \in \Theta} \Big\{\ell_{k,h}(\theta)  + \frac{1}{2\eta} \norm{\theta - \thetab_{k,h}}_{\Hb_{k,h}}^2 \Big\},
\end{align}
where $\eta$ is a step size and $\Hb_{k,h} \triangleq \Hb_{k,h}(\thetab_{k+1,h}) = \sum_{i=1}^{k-1} H_{i,h}(\thetab_{i+1, h}) + \lambda_k I$. The optimization problem can be decomposed in two terms. The first term is the instantaneous log-loss $\ell_{k,h}(\theta)$, which accounts for the information of the current episode. The second is a regularization term that ensures the current model remains close to the previous one, $\thetab_{k,h}$, thereby incorporating the historical information acquired so far. The most critical aspect in the above is the design of the local norm $\Hb_{k,h}$, which intentionally approximate the per-episode Hessian matrix by $H_i(\bar{\theta}_{i+1,h})$ at a \emph{look ahead} point $\bar{\theta}_{i+1,h}$. Such a Hessian matrix, originally introduced by~\citet{AISTATS'22:Faury-Jointly}, effectively captures the local curvature of the loss function and is crucial for ensuring statistical efficiency.

The update rule in Eq.~\eqref{eq:theta-bar} is storage efficient, as it only requires storing the Hessian matrix $\Hb_{k,h}$, which can be updated incrementally, resulting in an $\O(1)$ storage cost. In terms of time complexity, the optimization problem in Eq.~\eqref{eq:theta-bar} suffers an $\O(\log k)$ time complexity at episode $k$, since the loss function is defined only over the current episode. While this represents a significant improvement over the $\O(k \log k)$ time complexity of the MLE in Eq.~\eqref{eq:theta}, there is still a need to reduce the cost further to $\O(1)$ per episode, particularly given the potentially large number of episodes. 

\vspace{-1mm}
\textbf{Approximate the current loss.}~~To achieve $\O(1)$ time complexity per episode, we can further approximate the current loss with a second order approximation. Drawing inspiration from~\citet{NeurIPS'23:MLogB}, we define the second-order approximation of the original loss function $\ell_{k,h}(\theta)$ at $\thetat_{k,h}$ as $\ellt_{k,h}(\theta) = \ell_{k,h}(\thetat_{k,h}) + \inner{\nabla \ell_{k,h}(\thetat_{k,h})}{\theta - \thetat_{k,h}} + \frac{1}{2} \norm{\theta - \thetat_{k,h}}_{H_{k,h}(\thetat_{k,h})}^2$, where $\thetat_{k,h}$ is the current estimate. Then, we can replace $\ell_{k,h}(\theta)$ with its second-order approximation $\ellt_{k,h}(\theta)$ in the optimization problem in Eq.~\eqref{eq:theta-bar}. This leads to the following approximate optimization problem:
\begin{align}
    \label{eq:thetat}
    \thetat_{k+1,h} = \argmin_{\theta \in \Theta} \Big\{\inner{\nabla \ell_{k,h}(\thetat_{k,h})}{\theta - \thetat_{k,h}} + \frac{1}{2\eta} \norm{\theta - \thetat_{k,h}}_{\Ht_{k,h}}^2\Big\}.
\end{align}
where $\eta$ is the step size, $\Ht_{k,h} = \H_{k,h} + \eta H_{k,h}(\thetat_{k,h})$ and $\H_{k,h} = \sum_{i=1}^{k-1} H_{i,h}(\thetat_{i+1, h}) + \lambda_k I$. Then, Eq.~\eqref{eq:thetat} can be solved with a single projected gradient step with the following equivalent formulation:
\begin{align*}
    \thetat'_{k+1,h} = \thetat_{k,h} - \eta \Ht_{k,h}^{-1} \nabla \ell_{k,h}(\thetat_{k,h}), \quad \thetat_{k+1,h} = \argmin_{\theta \in \Theta} \norm{\theta - \thetat'_{k+1,h}}_{\Ht_{k,h}}^2.
\end{align*}
Thus, Eq.~\eqref{eq:thetat} is computationally efficient, as it only suffers an $\O(1)$ storage and time complexity.

Notice that the update rule in Eq.~\eqref{eq:thetat} is actually a standard online mirror descent (OMD) formula, 
\begin{align}
    \label{eq:omd}
    \thetat_{k+1,h} = \argmin_{\theta \in \Theta} \Big\{\inner{\nabla \ell_{k,h}(\thetat_{k,h})}{\theta} + \frac{1}{\eta} \D_{\psi_k}(\theta, \thetat_{k,h})\Big\}.
\end{align}
where the regularizer is $\psi_k(\theta) = \frac{1}{2} \norm{\theta}_{\Ht_{k,h}}^2$ and $\D_{\psi_k}(\cdot, \cdot)$ is the induced Bregman divergence. Therefore, we can construct the confidence set building upon the modern analysis of OMD~\citep{book:Orabona-online-learning, JMLR'24:Sword++}. Specifically, we can construct the $\kappa$-independent confidence set as follows.
\begin{myLemma}
    \label{lem:confidence-set-3}
    For any $\delta \in (0, 1)$, set $\eta=\frac{1}{2}\log(1+U) + (B+1)$ and $\lambda=84 \sqrt{2} \eta (B+d)$, define
    \begin{align*}
        \Ct_{k,h} = \big\{\theta \in \Theta \mid \norm{\theta - \thetat_{k, h}}_{\H_{k,h}} \leq \betat_{k}\big\},
    \end{align*}
    where $\betat_{k} = \O(\sqrt{d} \log U \log(kH/\delta))$. Then, we have $\Pr[\theta_h^* \in \Ct_{k, h}] \geq 1 - \delta, \forall k \in [K], h\in [H]$.
\end{myLemma}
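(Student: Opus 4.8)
The plan is to run an \emph{online-to-confidence-set} conversion directly on the mirror-descent recursion in Eq.~\eqref{eq:omd}, so that the bound on $\bignorm{\theta_h^*-\thetat_{k,h}}_{\H_{k,h}}$ comes from a one-step OMD analysis rather than from any batch estimator. Fix $h$, abbreviate $\theta^* := \theta_h^*$, $\thetat_i := \thetat_{i,h}$, $\ell_i:=\ell_{i,h}$, and recall that the update uses the local-norm regularizer $\psi_i(\theta)=\tfrac12\norm{\theta}_{\Ht_{i,h}}^2$ with $\Ht_{i,h}=\H_{i,h}+\eta H_{i,h}(\thetat_{i,h})$. \textbf{Step 1 (OMD one-step inequality).} Since $\theta^*\in\Theta$, the standard Bregman/OMD inequality together with the generalized Pythagorean theorem for the projection onto $\Theta$ \citep{book:Orabona-online-learning,JMLR'24:Sword++} gives, for each $i$,
\[
\eta\inner{\nabla\ell_i(\thetat_i)}{\thetat_i-\theta^*}\le \D_{\psi_i}(\theta^*,\thetat_i)-\D_{\psi_i}(\theta^*,\thetat_{i+1})+\D_{\psi_i}(\thetat_i,\thetat_{i+1}).
\]
Summing over $i=1,\dots,k-1$ and accounting for the change of regularizer between consecutive rounds, the increments $\Ht_{i+1,h}-\Ht_{i,h}$ are engineered to telescope into the cumulative Hessian, so the leading surviving term on the right is $\tfrac12\norm{\theta^*-\thetat_k}_{\H_{k,h}}^2$ --- exactly the quantity I want to bound.

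\textbf{Step 2 (Curvature versus noise).} I split each gradient as
\[
\nabla\ell_i(\thetat_i)=\big(\nabla\ell_i(\thetat_i)-\nabla\ell_i(\theta^*)\big)+\nabla\ell_i(\theta^*),\qquad \nabla\ell_i(\theta^*)=-\!\!\sum_{s'\in\S_{i,h}}\!\epsilon_{i,h}^{s'}\phi_{i,h}^{s'}.
\]
For the first piece I invoke the self-concordance-like property of the MNL log-loss (Lemma~\ref{lem:confidence-set-2}): writing the gradient difference in integral form yields a \emph{positive} curvature term $\inner{\nabla\ell_i(\thetat_i)-\nabla\ell_i(\theta^*)}{\thetat_i-\theta^*}\gtrsim \norm{\thetat_i-\theta^*}_{H_{i,h}(\theta^*)}^2$, up to a self-concordance distortion whose size is controlled by the range of the logits. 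This is precisely where the exp-concavity constant $\eta=\tfrac12\log(1+U)+(B+1)$ enters: it is the curvature scale that makes the quadratic term dominate the higher-order remainder and that relates the three relevant Hessians $H_{i,h}(\thetat_{i+1})$ (which builds $\H_{k,h}$), $H_{i,h}(\thetat_i)$ (in $\Ht_{i,h}$) and $H_{i,h}(\theta^*)$ (the noise variance). The second piece is mean-zero noise with conditional covariance exactly $H_{i,h}(\theta^*)$.

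\textbf{Step 3 (Bernstein control and self-balancing).} Setting $V_k=\sum_{i=1}^{k-1}\norm{\thetat_i-\theta^*}_{H_{i,h}(\theta^*)}^2$, I bound $\sum_i\inner{\nabla\ell_i(\theta^*)}{\thetat_i-\theta^*}$ by the variance-aware self-normalized bound of Lemma~\ref{lem:concentration-bernstein}, which crucially tolerates the dependent, zero-sum noise ($\sum_{s'}\epsilon_{i,h}^{s'}=0$) that invalidates the ordinary sub-Gaussian self-normalized bound. Because the noise variance is governed by $H_{i,h}(\theta^*)$, this yields a term of order $\sqrt{V_k\, d\log(kH/\delta)}$ plus a $\kappa$-free additive remainder; it is this variance-adaptivity (rather than a Hoeffding bound) that removes the $\kappa^{-1}$ factor, mirroring Lemma~\ref{lem:confidence-set-1}. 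Combining Steps 1--3 and controlling the stability terms $\D_{\psi_i}(\thetat_i,\thetat_{i+1})$ and the regularization $\lambda=84\sqrt2\,\eta(B+d)$ as lower-order via the standard elliptical-potential ($\log\det$) estimate produces a schematic inequality
\[
\tfrac12\norm{\theta^*-\thetat_k}_{\H_{k,h}}^2+c\,V_k\le \sqrt{V_k\, d\log(kH/\delta)}+(\text{lower-order terms}),
\]
with $c=\Theta(1/\eta)$. Solving the resulting quadratic in $\sqrt{V_k}$ and substituting back gives $\norm{\theta^*-\thetat_k}_{\H_{k,h}}\le\betat_k=\O(\sqrt d\,\log U\,\log(kH/\delta))$; a union bound over $k\in[K]$, $h\in[H]$ (absorbed in $\log(kH/\delta)$) completes the claim.

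The hard part will be Step 2 combined with the stability estimate in Step 3: I must reconcile the three Hessians evaluated at the look-ahead point $\thetat_{i+1}$, the iterate $\thetat_i$, and the truth $\theta^*$ using \emph{only} the self-concordance of the MNL loss, while keeping every constant free of $\kappa$, and I must tune $\eta$ so that the curvature term $c\,V_k$ strictly dominates both the OMD stability term and the self-concordance remainder. The dependent, zero-sum structure of the multinomial noise --- which ordinarily breaks self-normalized concentration --- is the secondary obstacle, resolved here by the Bernstein-type Lemma~\ref{lem:concentration-bernstein}.
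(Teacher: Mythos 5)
Your Step~1 coincides with the paper's own starting point (its Lemma~\ref{lem:estimation-error} is exactly an implicit-OMD one-step analysis summed over episodes), but Steps~2--3 contain a genuine gap, and it is precisely the gap that forces the paper onto a different route. A first, fixable issue: the concentration tool you invoke does not apply. Lemma~\ref{lem:concentration-bernstein} is a \emph{vector} self-normalized bound for $\norm{\sum_i\sum_{s'}\epsilon_{i,h}^{s'}\phi_{i,h}^{s'}}_{H_t^{-1}}$, whereas your noise term $\sum_i\inner{\nabla\ell_{i,h}(\theta_h^*)}{\thetat_{i,h}-\theta_h^*}$ is a scalar martingale whose weights $\thetat_{i,h}-\theta_h^*$ change with $i$; it cannot be massaged into that self-normalized form, and what you actually need is a scalar Freedman-type inequality (standard, but not what you cite). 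The serious problem is the Hessian bookkeeping you defer to ``the hard part.'' The matrix in the lemma is $\H_{k,h}=\sum_{i<k}H_{i,h}(\thetat_{i+1,h})+\lambda I$, built at the \emph{look-ahead} points, and the OMD regularizer increments $\Ht_{i,h}-\Ht_{i-1,h}$ contain $H_{i-1,h}(\thetat_{i,h})$ together with $\eta$-weighted terms $H_{i,h}(\thetat_{i,h})-H_{i-1,h}(\thetat_{i-1,h})$. The curvature supplied by Lemma~\ref{lem:strongly-convex} has coefficient $1/(2\eta)$ and is anchored at one reference point per application (either $\thetat_{i,h}$ or $\theta_h^*$), while your Freedman variance is $\norm{\thetat_{i,h}-\theta_h^*}_{H_{i,h}(\theta_h^*)}^2$. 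Whichever anchoring you choose, one of the two required absorptions (noise into curvature, or matrix increments into curvature) forces a comparison of $H_{i,h}(\cdot)$ at two points of $\Theta$ that can be $2B$ apart; the self-concordant-like property only yields a distortion of order $e^{\Theta(B)}$ there, which is exactly the kind of effective-nonlinearity constant (recall $\kappa^{-1}$ can be as large as $(U\exp(2B))^2$) that the lemma is designed to exclude. Moreover, with the anchoring that matches the increments, the $\eta$-weighted Hessians telescope only up to point-shift residuals that are \emph{linear} in $\norm{\thetat_{i+1,h}-\thetat_{i,h}}_2$, so unlike the paper's quadratic term $12\sqrt{2}B\eta\sum_i\norm{\thetat_{i+1,h}-\thetat_{i,h}}_2^2$ they cannot be cancelled against the negative stability terms. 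So the ``schematic inequality'' in your Step~3 does not close as stated.

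The paper avoids all of this by \emph{not} linearizing at predictable points. It keeps the loss gap $\sum_i\big(\ell_{i,h}(\theta_h^*)-\ell_{i,h}(\thetat_{i+1,h})\big)$ intact (Lemma~\ref{lem:estimation-error}); extracting curvature via Lemma~\ref{lem:strongly-convex} at $\thetat_{i+1,h}$ then matches the increments of $\H_{k,h}$ exactly, with no cross-point comparison and no $\kappa$ or $e^{B}$ loss. The price is the obstacle the paper explicitly flags: $\thetat_{i+1,h}$ depends on $y_{i,h}$, so no martingale argument applies to the loss gap directly. This is resolved not by concentration around $\theta_h^*$ but by inserting an $\F_{i-1}$-measurable intermediate predictor $z_{i,h}=\sigma_{i,h}^{-1}\big(\E_{\theta\sim P_{i,h}}[\sigma_{i,h}(\cdot)]\big)$, a Gaussian-mixture (aggregating-forecaster) prediction with $P_{i,h}=\N(\thetat_{i,h},(1+c\H_{i,h}^{-1}))$, splitting the gap into two terms controlled by Lemmas F.2--F.3 of \citet{NeurIPS'24:Lee-Optimal-MNL}, and then choosing $c=7\eta/6$ and $\lambda=84\sqrt{2}\eta(B+d)$ so that the remaining stability terms are nonpositive. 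To salvage your route you would have to either import that intermediate-predictor argument or produce a two-point Hessian comparison over all of $\Theta$ that is free of both $\kappa$ and $e^{B}$ factors --- something self-concordance alone does not give.
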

\begin{myRemark}
    Compared to the confidence set in Lemma~\ref{lem:confidence-set-1}, the radius $\betat_{k}$ in Lemma~\ref{lem:confidence-set-3} includes an additional $\log U$ factor. This is due to our approximation of the original MLE using the OMD update.
\end{myRemark}

\subsection{Efficient Optimistic Value Function Construction}

Although the optimistic value function in Eq.~\eqref{eq:QV} preserves local information effectively and provides strong theoretical guarantees, it is computationally intractable due to the need to solve a non-convex optimization problem. To address this challenge, we propose an efficient method in this section.

The key idea is to use a second-order Taylor expansion to derive a closed-form bonus term, which replaces the operation of taking the maximum over the non-convex confidence set. While this idea has been used in bandit settings, fundamental challenges arise when applying it in the MDP setting. Specifically, \citet{NeurIPS'23:MLogB} studied the multi-parameter MLogB bandit, where each outcome is associated with a distinct parameter vector. In contrast, MNL mixture MDPs involve a single shared parameter vector across all outcomes. This distinction leads to a more complex Hessian matrix, necessitating a more sophisticated analysis. A direct use of their analysis will leads to a polynomial dependence on the number of reachable states $U$, which is undesirable in the MDP setting. \citet{NeurIPS'24:Lee-Optimal-MNL} focused on the single-parameter MNL bandit, which is more closely related to our setting. However, they construct the optimistic value function by directly taking the maximum over the confidence set, a computationally intractable approach in the MDP setting. As a result, they can apply a second-order Taylor expansion around the ground truth parameter $\theta_h^*$ in their analysis, while we must apply it around the estimated parameter $\thetat_{k,h}$ to construct the bonus term explicitly.

For MDPs, we show the value difference arising from the transition estimation error as follows.
\begin{myLemma}
    \label{lem:ucb-3}
    Suppose Lemma~\ref{lem:confidence-set-3} holds. For any $V: \S \to [0, H]$ and $(h, s, a) \in [H] \times \S \times \A$, it holds
    \begin{align*}
        \bigg|{\sum_{s' \in \S_{h, s, a}} p_{s, a}^{s'}(\thetat_{k,h}) V(s') - \sum_{s' \in \S_{h, s, a}} p_{s, a}^{s'}(\theta_h^*) V(s')}\bigg| \leq \epsilon_{s,a}^\first + \epsilon_{s,a}^\second.
    \end{align*}
    where
    \begin{small}
        \begin{align*}
            \epsilon_{s,a}^\first  = H \betat_k \sum_{s' \in \S_{h, s, a}} p_{s, a}^{s'}(\thetat_{k,h}) \Bignorm{\phi_{s, a}^{s'} - \sum_{s'' \in \S_{h, s, a}} p_{s, a}^{s''}(\thetat_{k,h}) \phi_{s, a}^{s''}}_{\H_{k,h}^{-1}},  \epsilon_{s,a}^\second = \frac{5}{2} H \betat_k^2 \max_{s' \in \S_{h, s, a}} \norm{\phi_{s, a}^{s'}}_{\H_{k,h}^{-1}}^2.
        \end{align*}
    \end{small}%
\end{myLemma}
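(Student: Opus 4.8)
The plan is to view the quantity of interest as the gap of a single smooth scalar function evaluated at the estimate $\thetat_{k,h}$ versus at the truth $\theta_h^*$, and to control it by a second-order Taylor expansion \emph{centered at} $\thetat_{k,h}$ (as the text anticipates, the expansion must be taken around the estimate, not around $\theta_h^*$, so that the bonus can be formed explicitly). Fix $(h,s,a)$ and set $f(\theta) \triangleq \sum_{s' \in \S_{h,s,a}} p_{s,a}^{s'}(\theta) V(s')$, so the left-hand side is exactly $|f(\thetat_{k,h}) - f(\theta_h^*)|$. Since $f$ is $C^\infty$ (a softmax composed with a linear map), Taylor's theorem with the Lagrange remainder gives, writing $\Delta = \theta_h^* - \thetat_{k,h}$,
\begin{align*}
  f(\theta_h^*) - f(\thetat_{k,h}) = \inner{\nabla f(\thetat_{k,h})}{\Delta} + \tfrac{1}{2}\,\Delta^\top \nabla^2 f(\xi)\, \Delta
\end{align*}
for some $\xi$ on the segment between $\thetat_{k,h}$ and $\theta_h^*$. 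I would then match the first-order term with $\epsilon_{s,a}^\first$ and the remainder with $\epsilon_{s,a}^\second$.

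The next step is to compute the derivatives. Differentiating the softmax yields $\nabla_\theta p_{s,a}^{s'}(\theta) = p_{s,a}^{s'}(\theta)\big(\phi_{s,a}^{s'} - \phib(\theta)\big)$, where $\phib(\theta) = \sum_{s''} p_{s,a}^{s''}(\theta)\phi_{s,a}^{s''}$ is the mean feature under the induced distribution; hence $\nabla f(\theta) = \sum_{s'} V(s') p_{s,a}^{s'}(\theta)\big(\phi_{s,a}^{s'} - \phib(\theta)\big)$. Differentiating once more, and using the identity $\nabla_\theta \phib(\theta) = \Sigma(\theta)$ with $\Sigma(\theta) = \sum_{s'} p_{s,a}^{s'}(\theta)(\phi_{s,a}^{s'}-\phib(\theta))(\phi_{s,a}^{s'}-\phib(\theta))^\top$ the feature covariance (which coincides in structure with the per-transition log-loss Hessian $H_{k,h}(\theta)$ specialized to $(s,a)$), I obtain
\begin{align*}
  \nabla^2 f(\theta) = \sum_{s'} V(s') p_{s,a}^{s'}(\theta)\Big[(\phi_{s,a}^{s'}-\phib(\theta))(\phi_{s,a}^{s'}-\phib(\theta))^\top - \Sigma(\theta)\Big].
\end{align*}
Verifying $\nabla_\theta \phib = \Sigma$ is the one computation needing care, but it follows from a direct product-rule calculation. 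For the first-order term I would then apply Cauchy--Schwarz with the dual pair $\norm{\cdot}_{\H_{k,h}}$, $\norm{\cdot}_{\H_{k,h}^{-1}}$ to each summand and invoke Lemma~\ref{lem:confidence-set-3}: since $\theta_h^* \in \Ct_{k,h}$ gives $\norm{\Delta}_{\H_{k,h}} \le \betat_k$, and $V(s') \le H$, this produces exactly $\epsilon_{s,a}^\first$.

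The second-order term is where the real work lies, and I expect it to be the main obstacle, chiefly because the remainder is evaluated at the \emph{unknown} intermediate point $\xi$ rather than at $\thetat_{k,h}$. Expanding the quadratic form gives $\Delta^\top \nabla^2 f(\xi)\Delta = \sum_{s'} V(s') p_{s,a}^{s'}(\xi)\big(\Delta^\top(\phi_{s,a}^{s'}-\phib(\xi))\big)^2 - \big(\sum_{s'} V(s') p_{s,a}^{s'}(\xi)\big)\,\Delta^\top\Sigma(\xi)\Delta$, a difference of two nonnegative quantities. Using $0 \le V \le H$ and $\sum_{s'} p_{s,a}^{s'}(\xi) = 1$, each piece is at most $H\,\Delta^\top\Sigma(\xi)\Delta = H\norm{\Delta}_{\Sigma(\xi)}^2$, so the absolute value of the difference is bounded by $H\norm{\Delta}_{\Sigma(\xi)}^2$. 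To eliminate the dependence on $\xi$, I would bound $\norm{\Delta}_{\Sigma(\xi)}^2 = \sum_{s'} p_{s,a}^{s'}(\xi)\big(\Delta^\top(\phi_{s,a}^{s'}-\phib(\xi))\big)^2$ termwise: because $\xi$ lies on the segment, $\norm{\xi - \thetat_{k,h}}_{\H_{k,h}} \le \betat_k$, and combining $|\Delta^\top\phi_{s,a}^{s'}| \le \betat_k\norm{\phi_{s,a}^{s'}}_{\H_{k,h}^{-1}}$ with the convexity estimate $\norm{\phib(\xi)}_{\H_{k,h}^{-1}} \le \max_{s'}\norm{\phi_{s,a}^{s'}}_{\H_{k,h}^{-1}}$ bounds every inner product by $\betat_k\max_{s'}\norm{\phi_{s,a}^{s'}}_{\H_{k,h}^{-1}}$ uniformly in $\xi$.

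Squaring, summing against the probabilities, and reinserting the factor $H$ then yields a bound of the form $c\,H\betat_k^2\max_{s'}\norm{\phi_{s,a}^{s'}}_{\H_{k,h}^{-1}}^2$; a crude split $(a+b)^2 \le 2a^2 + 2b^2$ already gives a universal constant, and tracking the cross term in the termwise estimate more carefully produces the stated $\tfrac{5}{2}$, matching $\epsilon_{s,a}^\second$. The key structural point that makes all of this go through is that no curvature estimate may reference $\theta_h^*$ or $\xi$ in a non-uniform way: the confidence radius $\betat_k$ controls $\Delta$ in the $\H_{k,h}$-norm, while the global feature norms $\norm{\cdot}_{\H_{k,h}^{-1}}$ absorb the intermediate point, so the final bound depends only on computable quantities at $\thetat_{k,h}$.
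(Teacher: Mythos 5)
Your proof is correct, and its skeleton --- a second-order Taylor expansion of $f(\theta)=\sum_{s'}p_{s,a}^{s'}(\theta)V(s')$ centered at $\thetat_{k,h}$, with the first-order term matched to $\epsilon_{s,a}^\first$ via Cauchy--Schwarz against the confidence radius and the Lagrange remainder matched to $\epsilon_{s,a}^\second$ --- is exactly the paper's. The differences are in packaging and, more substantively, in how the remainder is controlled. The paper factors the argument through a generic lemma (Lemma~\ref{lem:ucb}, stated for any $\theta_1,\theta_2$ with $\norm{\theta_1-\theta_2}_{\Lambda}\le\beta$) and then proves Lemma~\ref{lem:ucb-3} in one line by substituting the confidence set of Lemma~\ref{lem:confidence-set-3}; for the second-order term it changes variables to the utilities $u^{s'}=(\phi_{s,a}^{s'})^\top\theta$, invokes entrywise bounds on the softmax Hessian imported from \citet{NeurIPS'24:Lee-Optimal-MNL} (Lemma~\ref{lem:hessian-lemma}: $3p_i$ on the diagonal, $2p_ip_j$ off it), and applies AM--GM, which is where the constant $\tfrac{5}{2}$ comes from. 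You instead stay in parameter space, compute the exact Hessian $\nabla^2 f(\theta)=\sum_{s'}V(s')p_{s,a}^{s'}(\theta)\big[(\phi_{s,a}^{s'}-\phib(\theta))(\phi_{s,a}^{s'}-\phib(\theta))^\top-\Sigma(\theta)\big]$ (your verification $\nabla\phib=\Sigma$ is right), observe that the resulting quadratic form is a difference of two nonnegative quantities each at most $H\norm{\Delta}_{\Sigma(\xi)}^2$, and then remove the dependence on the intermediate point $\xi$ by bounding every inner product $\abs{\Delta^\top(\phi_{s,a}^{s'}-\phib(\xi))}\le 2\betat_k\max_{s'}\norm{\phi_{s,a}^{s'}}_{\H_{k,h}^{-1}}$ using only $\norm{\Delta}_{\H_{k,h}}\le\betat_k$ and the fact that $\phib(\xi)$ is a convex combination of the features (the segment bound on $\xi$ you mention is not actually needed). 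This is more elementary and self-contained --- no change of variables, no external Hessian lemma --- and with the Taylor factor $\tfrac12$ it yields $\tfrac12 H\cdot 4\betat_k^2\max_{s'}\norm{\phi_{s,a}^{s'}}_{\H_{k,h}^{-1}}^2 = 2H\betat_k^2\max_{s'}\norm{\phi_{s,a}^{s'}}_{\H_{k,h}^{-1}}^2$, i.e., constant $2$ rather than $\tfrac52$, so it proves a slightly stronger statement; your parenthetical that careful cross-term tracking ``produces the stated $\tfrac52$'' undersells your own bound. Finally, your termwise absolute-value treatment of the first-order term is equivalent to the paper's restriction to the positive-part subset $\S_{h,s,a}^{+}$, so no gap there either.
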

Based on Lemma~\ref{lem:ucb-3}, we construct the optimistic value function as follows:
\begin{align}
    \Qt_{k, h}(s, a) = \bigg[r_h(s,a)  + \sum_{s' \in \S_{h, s, a}} p_{s, a}^{s'} (\thetat_{k,h}) \Vt_{k, h+1}(s') + \epsilon_{s,a}^\first + \epsilon_{s,a}^\second\bigg]_{[0, H]}, \label{eq:Q-tilde}
\end{align}
where $\Vt_{k, h}(s) = \max_{a \in \A} \Qt_{k, h}(s, a)$. In contrast to the value function in Eq.~\eqref{eq:Q-bar}, which incorporates the term $\max_{s' \in \S_{h, s, a}} \norm{\phi_{s, a}^{s'}}_{\H_{k,h}^{-1}}$, the refined value function in Eq.~\eqref{eq:Q-tilde} replaces it with $\epsilon_{s,a}^\first + \epsilon_{s,a}^\second$. This modification better preserves local information, offering a more accurate estimation error bound.

\subsection{Regret Guarantee}

\begin{algorithm}[!t]    
    \caption{UCRL-MNL-OL}
    \begin{algorithmic}[1]
        \REQUIRE Step size $\eta$, regularization parameter $\lambda$, confidence width $\betat_k$, confidence parameter $\delta$.
        \STATE \textbf{Initialization:} $\H_{1, h} = \lambda I, \thetah_{1, h} = \mathbf{0}$ for all $h \in [H]$.
        \FOR {$k = 1, \ldots, K$}
        \STATE Compute $\Qt_{k, h}(\cdot, \cdot)$ in a backward way as in Eq.~\eqref{eq:Q-tilde}.
        \FOR {$h = 1, \ldots, H$}
        \STATE Observe state $s_{k,h}$, select action $a_{k,h} = \argmax_{a \in \A} \Qt_{k,h}(s_{k,h}, a)$.
        \STATE Update $\Ht_{k,h} = \H_{k,h} + \eta H_{k,h}(\thetat_{k,h})$.
        \STATE Compute $\thetat_{k+1,h} = \argmin_{\theta \in \Theta} \inner{\nabla \ell_{k,h}(\thetat_{k,h})}{\theta - \thetat_{k,h}} + \frac{1}{2\eta} \norm{\theta - \thetat_{k, h}}_{\Ht_{k,h}}^2$.
        \STATE Update $\H_{k+1,h} = \H_{k,h} + H_{k,h}(\thetat_{k+1,h})$.
        \ENDFOR
        \ENDFOR
    \end{algorithmic}
    \label{alg:2}
\end{algorithm}

The overall algorithm UCRL-MNL-OL is similar to UCRL-MNL-LL, but with the estimator and optimistic value function updated in a computationally efficient manner. The detailed algorithm is presented in Algorithm~\ref{alg:2}. We provide the guarantee of UCRL-MNL-OL in the following theorem.

\begin{myThm}
    \label{thm:upper-bound-3}
    For any $\delta \in (0, 1)$, set $\betat_k = \O(\sqrt{d} \log U \log(kH/\delta))$, $\eta=\frac{1}{2}\log(1+U) + (B+1)$ and $\lambda=84 \sqrt{2} \eta (B+d)$, with probability at least $1 - \delta$, UCRL-MNL-LL (Algorithm~\ref{alg:2}) ensures
    \begin{align*}
        {\Reg}(K) \leq \Ot \big(d H^2\sqrt{K} + \kappa^{-1} d^2 H^2  \big).
    \end{align*}
\end{myThm}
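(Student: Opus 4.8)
The overall structure mirrors the proof of Theorem~\ref{thm:upper-bound-1}, the difference being that we now work with the OMD-based confidence set $\Ct_{k,h}$ from Lemma~\ref{lem:confidence-set-3} and the closed-form bonus $\epsilon^\first_{s,a}+\epsilon^\second_{s,a}$ from Lemma~\ref{lem:ucb-3} in place of the MLE confidence set and the exact maximization over $\Ch_{k,h}$. The plan is to first establish \emph{optimism}: on the high-probability event of Lemma~\ref{lem:confidence-set-3} that $\theta_h^* \in \Ct_{k,h}$ for all $k,h$, I would prove by backward induction on $h$ that $\Vt_{k,h}(s) \geq V_h^*(s)$ for all $s$. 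The inductive step uses Lemma~\ref{lem:ucb-3} to bound the transition-estimation error against the true optimal next-stage value by $\epsilon^\first_{s,a}+\epsilon^\second_{s,a}$, so that the clipped one-step update in Eq.~\eqref{eq:Q-tilde} stays above $V_h^*$. Optimism then gives $\Reg(K) \leq \sum_k (\Vt_{k,1}(s_{k,1}) - V_1^{\pi_k}(s_{k,1}))$.

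Next I would unfold this per-episode gap through the Bellman recursion. Writing $\delta_{k,h} = \Vt_{k,h}(s_{k,h}) - V_h^{\pi_k}(s_{k,h})$, using greedy action selection and Lemma~\ref{lem:ucb-3}, I obtain
\begin{align*}
  \delta_{k,h} \leq 2\big(\epsilon^\first_{s_{k,h},a_{k,h}} + \epsilon^\second_{s_{k,h},a_{k,h}}\big) + [\P_h(\Vt_{k,h+1} - V_{h+1}^{\pi_k})](s_{k,h},a_{k,h}),
\end{align*}
where replacing the expected next value by the realized one introduces a martingale-difference term $\xi_{k,h}$. Telescoping over $h$ and summing over $k$ yields $\Reg(K) \leq 2\sum_{k,h}(\epsilon^\first + \epsilon^\second) + \sum_{k,h}\xi_{k,h}$, and the martingale sum is controlled by Azuma--Hoeffding as $\Ot(\sqrt{H^3 K})$, a lower-order contribution.

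The core of the argument is bounding the two cumulative bonus sums. For the first-order term, the essential observation is that the per-episode Hessian $H_{k,h}(\theta)$ in Eq.~\eqref{eq:hessian} equals the $p$-weighted feature covariance $\sum_{s'} p_{k,h}^{s'}(\theta)(\phi^{s'}-\bar{\phi})(\phi^{s'}-\bar{\phi})^\top$ with $\bar{\phi} = \sum_{s''}p_{k,h}^{s''}(\theta)\phi^{s''}$; hence by Cauchy--Schwarz the squared inner summand in $\epsilon^\first$ is at most $\mathrm{tr}(\H_{k,h}^{-1}H_{k,h}(\thetat_{k,h}))$. Applying Cauchy--Schwarz over episodes and the elliptical-potential (log-determinant) lemma to $\sum_k \mathrm{tr}(\H_{k,h}^{-1}H_{k,h}) = \Ot(d)$, then summing over $h$, bounds $\sum_{k,h}\epsilon^\first$ by $\Ot(H^2\betat_K\sqrt{dK}) = \Ot(dH^2\sqrt{K})$ since $\betat_K = \Ot(\sqrt{d})$. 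For the second-order term I would bound $\max_{s'}\norm{\phi^{s'}}^2_{\H_{k,h}^{-1}} \leq \sum_{s'}\norm{\phi^{s'}}^2_{\H_{k,h}^{-1}}$, use $\H_{k,h} \succeq \kappa A_{k,h}$ to pass to the covariance matrix $A_{k,h}$, and invoke the elliptical-potential lemma to get $\sum_k \max_{s'}\norm{\phi^{s'}}^2_{\H_{k,h}^{-1}} \leq \kappa^{-1}\Ot(d)$; summing over $h$ yields $\sum_{k,h}\epsilon^\second = \Ot(\kappa^{-1}d^2 H^2)$, the claimed lower-order term. Collecting the three contributions gives the stated bound.

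I expect the main obstacle to be the mismatch between the Hessian $H_{k,h}(\thetat_{k,h})$ appearing in the bonus $\epsilon^\first$ and the look-ahead Hessian $H_{k,h}(\thetat_{k+1,h})$ used to form $\H_{k+1,h}$: the elliptical-potential telescoping is natural for the latter, so I must show the two are comparable up to absolute constants. This follows from the self-concordant-like property of the log-loss (Lemma~\ref{lem:confidence-set-2}) together with the fact that a single OMD step keeps $\thetat_{k,h}$ and $\thetat_{k+1,h}$ close in the local norm, so that $H_{k,h}(\thetat_{k,h}) \preceq c\,H_{k,h}(\thetat_{k+1,h})$ for a constant $c$. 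A secondary point is that the extra $\log U$ factor carried by $\betat_K$ (relative to Lemma~\ref{lem:confidence-set-1}) enters only through polylogarithmic factors hidden by $\Ot(\cdot)$, leaving the stated rate unchanged.
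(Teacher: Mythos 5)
Your proposal is correct, and its skeleton (optimism by backward induction, Bellman telescoping with an Azuma--Hoeffding martingale term of order $H\sqrt{KH}$, then elliptical-potential bounds on the two cumulative bonus sums) coincides with the paper's, which packages these steps as Lemma~\ref{lem:optimistic-value} and Lemma~\ref{lem:regret-common} and then bounds $\sum_{k,h}\epsilon_{k,h}^\first$ and $\sum_{k,h}\epsilon_{k,h}^\second$ exactly as you do for the second-order term. The genuine difference is how you resolve the key obstacle you correctly identified: the bonus $\epsilon^\first$ is built from $H_{k,h}(\thetat_{k,h})$ while the potential matrix $\H_{k+1,h}$ accumulates the look-ahead Hessians $H_{k,h}(\thetat_{k+1,h})$. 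The paper handles this \emph{additively}: it splits $\sum_{s'} p_{k,h}^{s'}(\thetat_{k,h})\norm{\phib_{k,h}^{s'}}_{\H_{k,h}^{-1}}$ into three pieces --- a centering-mismatch term, a weight-mismatch term, and a matched look-ahead term --- bounds the first two by $\tfrac{8\eta}{\sqrt{\lambda}}\max_{s''}\norm{\phi_{k,h}^{s''}}_{\H_{k,h}^{-1}}^2$ using the one-step OMD stability (Lemma~\ref{lem:one-step-omd}) and the gradient bound $\norm{\nabla\ell_{k,h}}_2\le 2$, so that these contribute only extra $\Ot(\kappa^{-1}dH^2)$ lower-order terms via statements (IV)/(V) of Lemma~\ref{lem:elliptical-potential-lemma}, and applies Cauchy--Schwarz plus statement (II) to the matched term to get the dominant $\Ot(H^2\betat_K\sqrt{dK})$. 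You instead handle it \emph{multiplicatively}, proving $H_{k,h}(\thetat_{k,h}) \preceq c\, H_{k,h}(\thetat_{k+1,h})$ with $c$ an absolute constant, using self-concordant-likeness together with the fact that one projected OMD step moves the iterate by at most $\norm{\thetat_{k+1,h}-\thetat_{k,h}}_2 \le 4\eta/\lambda = \O(1/(B+d))$; this is cleaner, folds the entire first-order sum into the dominant $\kappa$-free term, and avoids the three-way decomposition. Two caveats on your version: first, the Hessian-ratio property you need, $\nabla^2\ell(\theta_1)\preceq e^{3\sqrt{2}\norm{\theta_1-\theta_2}_2}\nabla^2\ell(\theta_2)$, does not follow from Lemma~\ref{lem:confidence-set-2} (which relates parameter distance to gradient differences); its correct basis is Proposition~\ref{prop:self-concordant} together with the standard Hessian-comparison consequence of self-concordant-like functions (requiring the mixed third-derivative bound, i.e.\ a polarization step beyond the paper's diagonal definition), which is known in the generalized self-concordance literature but would need to be stated and proved as an additional lemma here. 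Second, your trace identity $\sum_{s'}p_{k,h}^{s'}(\theta)\norm{\phi_{k,h}^{s'}-\bar\phi}_{\H_{k,h}^{-1}}^2=\mathrm{tr}\big(\H_{k,h}^{-1}H_{k,h}(\theta)\big)$ is exactly the paper's Eq.~\eqref{eq:expectation}, so that part is fully supported. With the extra comparison lemma supplied, your route yields the same $\Ot\big(dH^2\sqrt{K}+\kappa^{-1}d^2H^2\big)$ bound.
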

\begin{myRemark}
    UCRL-MNL-OL attains the same regret as UCRL-MNL-LL, but with constant computational cost per episode. This is achieved by constructing an efficient online estimation based on OMD and an optimistic value function by closed-form bonus instead of the non-convex optimization.
\end{myRemark}

\section{Lower Bound}
\label{sec:lower-bound}

In this section, we establish the lower bound for MNL mixture MDPs by presenting a novel reduction, which connects MNL mixture MDPs and the logistic bandit problem.

Consider the following logistic bandit problem~\citep{ICML'20:Faury-improved-logistic}: at each round $t \in [T]$, the learner selects an action $x_t \in \X$ and receives a reward $r_t$ sampled from Bernoulli distribution with mean $\mu(x^\top \theta^*) = (1+\exp(-x^\top \theta^*))^{-1}$, where $\theta^* \in\{\theta \in \R^d, \norm{\theta}_2 \leq B\}$ is the unknown parameter. The learner aims to to minimize the regret: ${\Reg}^{\mathtt{LogB}}(T) = \max_{x \in \X}\sum_{t=1}^T\mu(x^\top \theta^*) - \sum_{t=1}^T \mu(x_t^\top \theta^*)$.

\begin{myThm}
  \label{thm:lower-bound}
  For any logistic bandit problem $\B$, there exists an MNL mixture MDP $\M$ such that learning $\M$ is as hard as learning $H/2$ independent instances of $\B$ simultaneously.
\end{myThm}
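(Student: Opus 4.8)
The plan is to embed $H/2$ logistic bandit instances into the $H$ stages of a single MNL mixture MDP, exploiting the fact that a \emph{binary} MNL transition is exactly a logistic model. I would pair the stages into blocks $(2j-1, 2j)$ for $j \in [H/2]$ and, using the inhomogeneity of the transitions, assign an independent parameter $\theta_{2j-1}^*$ to each block (each an instance of $\B$). At the ``decision'' stage $2j-1$ the learner sits in a state $s_j$, and to each bandit action $x \in \X$ I associate an MDP action $a_x$ whose single-transition reachable set is the binary set $\{s_j^+, s_j^-\}$ with features $\phi(s_j^+ \mid s_j, a_x) = x/2$ and $\phi(s_j^- \mid s_j, a_x) = -x/2$. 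Since $\norm{x}_2 \leq 1$ this respects $\norm{\phi}_2 \leq 1/2 \leq 1$, and the two-state MNL formula collapses to $\P_{2j-1}(s_j^+ \mid s_j, a_x) = (1 + \exp(-x^\top \theta_{2j-1}^*))^{-1} = \mu(x^\top \theta_{2j-1}^*)$, i.e.\ exactly the Bernoulli success probability of arm $x$ in $\B$.

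Next I would close each gadget so that the blocks do not interact. At stage $2j$ I set the reward to $r_{2j}(s_j^+, \cdot) = 1$ and $r_{2j}(s_j^-, \cdot) = 0$, set all stage-$(2j-1)$ rewards to $0$, and make both $s_j^+$ and $s_j^-$ transition deterministically (a singleton reachable set, a trivially valid MNL transition of probability $1$) to the decision state $s_{j+1}$ of the next block. The crucial effect is a \emph{reset}: regardless of the success/failure outcome in block $j$, the learner always arrives at $s_{j+1}$, so the state carries no information across blocks, and the outcome observed in block $j$ depends only on $\theta_{2j-1}^*$ and the arm chosen there. This is precisely where the independence of the $H/2$ embedded instances originates, and it is why both distinct per-stage parameters and the deterministic reset are needed.

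With the construction in place I would establish the regret decomposition. Because rewards are collected block-by-block and the transition resets after each block, the value of any policy played in episode $k$ satisfies $V_1^{\pi_k}(s_{k,1}) = \sum_{j=1}^{H/2} \mu(x_{k,j}^\top \theta_{2j-1}^*)$, where $x_{k,j}$ is the arm chosen in block $j$, while $V_1^*(s_{k,1}) = \sum_{j=1}^{H/2} \max_{x \in \X} \mu(x^\top \theta_{2j-1}^*)$. Summing the per-episode gap over $k \in [K]$ then yields $\Reg(K) = \sum_{j=1}^{H/2} \Reg_j^{\logb}(K)$, the sum of $H/2$ logistic-bandit regrets, each measured over $K$ pulls. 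Conversely, any bandit learner may be simulated inside a fixed block, and any MDP learner induces, block by block, a learner for each bandit; since the feature and parameter norms stay within the prescribed balls, $\M$ is a legitimate $B$-bounded MNL mixture MDP. Together these give the claimed equivalence: solving $\M$ over $K$ episodes is exactly solving $H/2$ independent copies of $\B$, each over $K$ rounds, simultaneously.

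The step I expect to be the main obstacle is not the algebraic identity ``binary MNL $=$ logistic'', which is immediate, but the \emph{independence / non-interaction} bookkeeping: I must verify that the deterministic resets genuinely decouple the blocks so that the data informative about $\theta_{2j-1}^*$ is statistically independent across $j$ (otherwise the $H/2$ factor collapses), and that the finite action set of the MNL MDP inherits exactly $\X$ so that optimal arms and induced regrets match one-to-one. A secondary care point is keeping $\norm{\phi}_2$ and $\norm{\theta}_2$ within their balls under the scaling $\phi(s_j^\pm\mid\cdot) = \pm x/2$, which the chosen normalization already guarantees.
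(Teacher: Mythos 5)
Your construction is essentially identical to the paper's: both pair the stages into $H/2$ two-stage blocks, realize each bandit arm as an MDP action whose binary MNL transition collapses to the logistic probability $\mu(x^\top \theta^*)$, assign reward $1$ to the success state and $0$ elsewhere, and deterministically reset to the next block's decision state so the blocks decouple and the MDP regret sums the $H/2$ bandit regrets. The only cosmetic difference is your symmetric feature choice $\pm x/2$ versus the paper's assignment of feature $\mathbf{0}$ to the failure state and $x$ to the success state, which is an equivalent reparameterization of the same two-state MNL model.
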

\begin{myCor}[Lower Bound]
  \label{cor:lower-bound}
  For any problem instance $\{\theta_h^*\}_{h=1}^H$ and for $K \geq d^2 \kappa^*$, there exists an MNL mixture MDP with \emph{infinite} action space such that $\Reg(K) \geq \Omega(d H \sqrt{K \kappa^*})$.
\end{myCor}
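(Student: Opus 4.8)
The plan is to derive Corollary~\ref{cor:lower-bound} as an immediate consequence of the reduction in Theorem~\ref{thm:lower-bound}, combined with a known instance-dependent lower bound for the logistic bandit. Concretely, Theorem~\ref{thm:lower-bound} asserts that there is an MNL mixture MDP $\M$ on which any learner must, in effect, solve $H/2$ independent logistic bandit problems simultaneously: intuitively one copy of the bandit is embedded at each of $H/2$ designated stages, and because the sub-problems are independent the learner's per-episode interaction feeds exactly one pull to each embedded copy. The regret of $\M$ over $K$ episodes then lower bounds the sum of the regrets incurred on the $H/2$ bandit copies, each run for $T = K$ rounds.

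First I would fix the logistic bandit lower bound to invoke. For the binary ($U = 2$) MNL model, a pair of reachable states $\{s', s''\}$ with feature difference $x = \phi(s'\mid\cdot) - \phi(s''\mid\cdot)$ gives $p^{s'}(\theta) = \mu(x^\top\theta)$, so the MNL transition is exactly a logistic response, and the MDP quantity $\kappa^* = p^{s'}(\theta_h^*)p^{s''}(\theta_h^*)$ equals $\dot\mu(x^\top\theta_h^*)$, the local curvature of the logistic link at the relevant arm. Thus an instance-dependent logistic bandit lower bound of order $\Omega(d\sqrt{\kappa^* T})$, provable by a change-of-measure argument over a family of parameters confined to a region where the link curvature stays $\Theta(\kappa^*)$ and requiring a continuous arm set (which is precisely why the corollary asks for an infinite action space), transfers verbatim to each embedded copy with the bandit curvature matching the MDP's $\kappa^*$ and $T = K$.

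Next I would assemble the bound. Writing the total regret as a sum over the $H/2$ independent copies and lower bounding each copy by $\Omega(d\sqrt{\kappa^* K})$, additivity gives $\Reg(K) \geq \frac{H}{2}\cdot\Omega(d\sqrt{\kappa^* K}) = \Omega(dH\sqrt{K\kappa^*})$, which is the claimed bound. The hypothesis $K \geq d^2\kappa^*$ is exactly the validity regime of the single-copy logistic lower bound (with $T = K$): it guarantees enough rounds that the hard parameter family lies in the small neighborhood where the local-curvature approximation $\dot\mu \approx \kappa^*$ holds, so that the $\sqrt{\kappa^*}$ rate, rather than a worse worst-case $1/\kappa$ rate, is attainable.

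The main obstacle is not the additive bookkeeping but establishing (or correctly citing) the logistic bandit lower bound with the sharp $\sqrt{\kappa^*}$ dependence on the \emph{local} curvature around $\theta^*$: a naive two-point argument only yields a worst-case curvature factor, so one needs a multi-point construction whose confusing instances are packed at the right scale $\sim\sqrt{d/(\kappa^* K)}$ and remain within the curvature-$\Theta(\kappa^*)$ region, together with verifying that the embedding in Theorem~\ref{thm:lower-bound} genuinely preserves independence so the regret truly decomposes as a sum of $H/2$ separate bandit regrets rather than being coupled across stages.
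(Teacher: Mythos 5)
Your proposal is correct and follows essentially the same route as the paper: the paper proves Corollary~\ref{cor:lower-bound} precisely by combining the block construction of Theorem~\ref{thm:lower-bound} (each of the $H/2$ blocks is a two-state, i.e.\ logistic, bandit receiving one pull per episode, so each copy runs for $T=K$ rounds) with the instance-dependent $\Omega(d\sqrt{T\kappa^*})$ lower bound for logistic bandits with \emph{infinite} arm sets of Abeille et al.\ (2021), then summing over the $H/2$ independent copies. The ingredient you flag as the main obstacle---the sharp local-curvature $\sqrt{\kappa^*}$ lower bound in the regime $K \geq d^2\kappa^*$---is exactly what the paper imports by citation rather than re-proving.
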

\begin{myRemark}
  \label{remark:lower-bound}
  Corollary~\ref{cor:lower-bound} also implies a problem-independent lower bound of $\Omega(d H \sqrt{K})$ directly. Corollary~\ref{cor:lower-bound} can be proved by combining Theorem~\ref{thm:lower-bound} and the $\Omega(d \sqrt{T \kappa^*})$ lower bound for logistic bandits with \emph{infinite} arms by~\citet{AISTATS'21:Abeille-Instance-optimal}. To the best of our knowledge, a lower bound for logistic bandits with finite arms has not been established, which is beyond the scope of this work. This absence leaves the lower bound for MNL mixture MDPs with a finite action space open through this reduction. However, after the submission of our work to arXiv~\citep{arXiv'24:MNL-MDP}, a follow up work by~\citet{arXiv'24:Park-infinite-MNL} proposed a new reduction that bridges MNL mixture MDPs with linear mixture MDPs by approximating MNL functions to linear functions. Leveraging this new reduction, they established a problem-independent $\Omega(dH^{3/2}\sqrt{K})$ lower bound for the finite action setting. This achievement confirms that our result is optimal in $d$ and $K$, only loosing by an $\O(H^{1/2})$ factor.
\end{myRemark}

\vspace{-1mm}
\textbf{Dependence on $H$.}~~By the discussion in Remark~\ref{remark:lower-bound}, we note that our result is optimal with respect to $d$ and $K$, but loosing by an $\O(H^{1/2})$ factor. We discuss the challenges in improving the dependence on $H$. Notably, MNL mixture MDPs can be viewed as a generalization of linear mixture MDPs~\citep{ICML'20:Ayoub-mixture, COLT'21:Zhou-mixture-minimax}. The pioneering work by~\citet{ICML'20:Ayoub-mixture} achieved a regret bound of $\Ot(d H^2\sqrt{K})$ for linear mixture MDPs, which matches our results in Theorem~\ref{thm:upper-bound-1}, differing only on the lower-order term. Later, \citet{COLT'21:Zhou-mixture-minimax} enhanced the dependence on $H$ and attained an optimal regret bound of $\Ot(d \sqrt{H^3K})$. This was made possible by recognizing that the value function in linear mixture MDPs is linear, allowing for direct learning of the value function while incorporating \emph{variance information}. In contrast, the value function for MNL mixture MDPs does not conform to a specific structure, posing a significant challenge in using the variance information of value functions. Thus, it remains open whether similar improvements on $H$ are attainable for MNL mixture MDPs.

\section{Conclusion and Future Work}
\label{sec:conclusion}

In this work, we addressing both statistical and computational challenges for MNL mixture MDPs, which leverage MNL function approximation to ensure valid probability distributions. Specifically, we propose a statistically efficient algorithm that achieve a regret of $\Ot(d H^2 \sqrt{K} +\kappa^{-1} d^2 H^2)$, eliminating the dependence on $\kappa^{-1}$ in the dominant term for the first time. Then, we introduce a computationally enhanced algorithm that achieves the same regret but with only constant cost. Finally, we establish the first lower bound for this problem, justifying the optimality of our results in $d$ and $K$.

There are several interesting directions for future work. First, there still exists a gap between the upper and lower bounds. How to close this gap remains an open problem. Besides, we focuses on stationary rewards in this work, extending MNL mixture MDPs to the non-stationary settings and studying the dynamic regret~\citep{COLT'21:black-box, ICML'22:mdp, NeurIPS'23:linearMDP} is also an important direction.

\newpage
\section*{Acknowledgments}
This research was supported by National Science and Technology Major Project (2022ZD0114800) and NSFC (U23A20382, 62206125). Peng Zhao was supported in part by the Xiaomi Foundation.

\bibliographystyle{plainnat}
\bibliography{mdp}

\newpage
\appendix

\section{Notations}
\label{sec:notation}
In this section, we collect the notations used in the paper in Table~\ref{tab:2}.

\begin{table}[h!]
    \centering
    \caption{Notations used in the regret analysis.}
    \label{tab:2}
    \renewcommand\arraystretch{2.0}
    \begin{tabular}{@{}ll@{}}
    \toprule
    \textbf{Notation} & \textbf{Definition and description} \\ \midrule
    $\ell_{k,h}(\theta)$             & $\triangleq - \sum_{s' \in \S_{k,h}} y_{k,h}^{s'} \log p_{k,h}^{s'}(\theta)$, per-episode loss function at episode $k$ and stage $h$ \\
    $g_{k,h}(\theta)$                & $\triangleq \nabla \ell_{k,h}(\theta) = \sum_{s' \in \S_{k,h}} (p_{k,h}^{s'}(\theta) - y_{k,h}^{s'}) \phi_{k,h}^{s'}$, gradient of loss $\ell_{k,h}(\theta)$ \\
    $H_{k,h}(\theta)$                & $\triangleq \sum_{s' \in \S_{k,h}} p_{k,h}^{s'}(\theta)\phi_{k,h}^{s'}(\phi_{k,h}^{s'})^\top  - \sum_{s', \in \S_{k,h}} \sum_{s'' \in \S_{k,h}} p_{k,h}^{s'}(\theta) p_{k,h}^{s''}(\theta)\phi_{k,h}^{s'} (\phi_{k,h}^{s''})^\top$ \\
    $\L_{k,h}(\theta)$              & $\triangleq \sum_{i=1}^{k-1} \ell_{i,h}(\theta) + \frac{\lambda_k}{2} \norm{\theta}_2^2$, the cumulative MLE loss \\
    $\G_{k,h}(\theta)$              & $\triangleq \nabla \L_{k,h}(\theta) = \sum_{i=1}^{k-1} \sum_{s' \in \S_{i, h}} (p_{i, h}^{s'}(\theta) - y_{i, h}^{s'}) \phi_{i, h}^{s'}  + \lambda_k \theta$, gradient of $\L_{k,h}(\theta)$ \\
    $\H_{k,h}(\theta)$              & $\triangleq \nabla^2 \L_{k,h}(\theta) = \sum_{i=1}^{k-1} H_{i,h}(\theta) + \lambda_k I_d$, Hessian of MLE loss $\L_{k,h}(\theta)$ \\
    $\thetah_{k, h}$                & $\triangleq \argmin_{\theta \in \R^d} \L_{k,h}(\theta)$, the MLE estimator at episode $k$ and stage $h$ \\ 
    $\H_{k,h}$                      & $\triangleq \H_{k,h}(\thetat_{i+1,h}) = \sum_{i=1}^{k-1} H_{i,h}(\thetat_{i+1,h}) + \lambda_k I_d$, the cumulative \emph{look ahead} Hessian \\
    $\Ht_{k,h}$                     & $\triangleq \H_{k,h} + \eta H_{k,h}(\thetat_{k,h})$, the sum of \emph{look ahead} Hessian and the Hessian of current loss \\
    $\thetat_{k+1,h}$               & $\triangleq \argmin_{\theta \in \Theta} \inner{\nabla \ell_{k,h}(\thetat_{k,h})}{\theta - \thetat_{k,h}} + \frac{1}{2\eta} \norm{\theta - \thetat_{k, h}}_{\Ht_{k,h}}^2$, the OMD estimator \\ \bottomrule
    \end{tabular}
\end{table}

\section{Properties of Multinomial Logit Function}
\label{sec:property}

This section collects several key properties of the multinomial logit function used in the paper. 

Without loss of generality, we assume $\forall \S_{h, s, a}, \exists \sd_{h, s, a} \in \S_{h, s, a}$ such that $\phi(\sd \given s, a) = \mathbf{0}$. Otherwise, we can always define a new feature mapping $\phi'(s'' \given s, a) = \phi(s' \given s, a) - \phi(s'' \given s, a)$ for any $s'' \in \S_{h, s, a}$ such that $\phi'(s' \given s, a) = \mathbf{0}$ and the transition kernel induced by $\phi'$ is the same as that induced by $\phi$. Furthermore, We denote the set $\Sd_{h, s, a} = \S_{h, s, a} \backslash \{\sd_{h, s, a}\}$.

First, we introduce the definition of self-concordant-like functions and demonstrate that the MNL loss function is self-concordant-like.

\begin{myDef}[Self-concordant-like function, {\citet{ICMCOISM'15:self-concordant}}]
    A convex function $f \in \mathcal{C}^3\left(\mathbb{R}^m\right)$ is $M$-self-concordant-like function with constant $M$ if:
    \begin{align*}
        \left|\psi^{\prime \prime \prime}(s)\right| \leqslant M\|\mathbf{b}\|_2 \psi^{\prime \prime}(s) .
    \end{align*}
    for $s \in \mathbb{R}$ and $M>0$, where $\psi(s):=f(\mathbf{a}+s \mathbf{b})$ for any $\mathbf{a}, \mathbf{b} \in \mathbb{R}^m$.
\end{myDef}
\begin{myProp}
    \label{prop:self-concordant}
    The per-episode MNL loss $\ell_{k,h}(\theta)$ and the cumulative MNL loss $\L_{k,h}(\theta)$ are both $3\sqrt{2}$-self-concordant-like for all $k \in [K], h\in[H]$.
\end{myProp}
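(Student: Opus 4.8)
The plan is to prove that the per-episode MNL loss $\ell_{k,h}(\theta) = -\sum_{s' \in \S_{k,h}} y_{k,h}^{s'} \log p_{k,h}^{s'}(\theta)$ is $3\sqrt{2}$-self-concordant-like, and then extend to the cumulative loss $\L_{k,h}(\theta)$ by a simple additivity argument. First I would fix arbitrary $\a, \b \in \R^d$ and define $\psi(s) := \ell_{k,h}(\a + s\b)$, reducing the multivariate claim to a one-dimensional bound $|\psi'''(s)| \leq M \norm{\b}_2 \psi''(s)$ with $M = 3\sqrt{2}$. The key simplification is that, since $y_{k,h}$ is a one-hot indicator (a single-trial multinomial sample), the softmax structure makes $\ell_{k,h}(\theta)$ a \emph{softmax log-partition} type function: writing $z_{s'} = \phi_{k,h}^{s'}{}^\top(\a + s\b)$ and using that exactly one $y^{s'}=1$, the loss becomes $\log\big(\sum_{s'} \exp(z_{s'})\big) - z_{\bar{s}}$ for the realized outcome $\bar s$, so the linear term $-z_{\bar s}$ drops out of all second and third derivatives and we only need to control the derivatives of the log-sum-exp function composed with the affine map $s \mapsto \a + s\b$.

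Next I would compute $\psi''(s)$ and $\psi'''(s)$ explicitly. Let $p_{s'} = p_{s'}(s)$ denote the softmax probabilities at $\a + s\b$, and let $c_{s'} = (\phi_{k,h}^{s'})^\top \b$ be the directional derivatives of the scores. The standard log-sum-exp derivative identities give $\psi''(s) = \mathrm{Var}_{p}(c) = \sum_{s'} p_{s'} c_{s'}^2 - (\sum_{s'} p_{s'} c_{s'})^2 = \E_p[(c - \bar c)^2]$ where $\bar c = \E_p[c]$, and $\psi'''(s) = \E_p[(c - \bar c)^3]$ (the third central moment of $c$ under the distribution $p$). So the self-concordant-like inequality reduces to the moment bound $\big|\E_p[(c-\bar c)^3]\big| \leq 3\sqrt{2}\,\norm{\b}_2\,\E_p[(c-\bar c)^2]$.

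The main obstacle — and the heart of the argument — is establishing this moment inequality with the correct constant $3\sqrt{2}$. The clean way is to bound the centered third moment by the centered second moment times a uniform bound on the range of the centered variable $c - \bar c$. Concretely, $|\E_p[(c-\bar c)^3]| \leq \E_p[|c-\bar c|^3] \leq \big(\max_{s'}|c_{s'} - \bar c|\big)\cdot \E_p[(c-\bar c)^2]$, so it suffices to show $\max_{s'}|c_{s'} - \bar c| \leq 3\sqrt{2}\,\norm{\b}_2$. Here I would use the feature normalization $\norm{\phi_{k,h}^{s'}}_2 \leq 1$ from Definition~\ref{def:mnl-mdp}: each $|c_{s'}| = |(\phi_{k,h}^{s'})^\top \b| \leq \norm{\b}_2$ by Cauchy--Schwarz, and $|\bar c| \leq \norm{\b}_2$ as a convex combination, so the range is trivially at most $2\norm{\b}_2$, which already beats $3\sqrt{2}\,\norm{\b}_2 \approx 4.24\,\norm{\b}_2$. (The slack in the constant presumably reflects that the reduction to $\phi'$ with a zero-feature reference state in the appendix preamble shifts the features, so after centering at $\sd_{h,s,a}$ one controls differences of features, each of norm at most $2$; the $3\sqrt{2}$ then comes from the self-concordance computation in~\citet{ICMCOISM'15:self-concordant} applied to the shifted scores $z_{s'} - z_{\sd}$.) Either way the range bound closes the inequality.

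Finally, for the cumulative loss $\L_{k,h}(\theta) = \sum_{i=1}^{k-1} \ell_{i,h}(\theta) + \frac{\lambda_k}{2}\norm{\theta}_2^2$, I would invoke closure of the self-concordant-like class under addition: the regularizer $\frac{\lambda_k}{2}\norm{\theta}_2^2$ is quadratic hence has vanishing third derivative ($\psi''' = 0$) and so trivially satisfies the inequality with its own positive second derivative, while a sum of functions each satisfying $|\psi_i'''| \leq M \norm{\b}_2 \psi_i''$ with a \emph{common} constant $M$ inherits the same bound, since $|\sum_i \psi_i'''| \leq \sum_i |\psi_i'''| \leq M\norm{\b}_2 \sum_i \psi_i''$. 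As each summand $\ell_{i,h}$ is $3\sqrt{2}$-self-concordant-like by the per-episode result and the quadratic term only increases the second derivative, $\L_{k,h}$ is $3\sqrt{2}$-self-concordant-like as well, completing the proof.
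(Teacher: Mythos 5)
Your proof is correct, but it takes a genuinely different route from the paper. The paper's proof is a two-line citation: the per-episode loss $\ell_{k,h}$ is $3\sqrt{2}$-self-concordant-like by Proposition B.1 of \citet{NeurIPS'24:Lee-Optimal-MNL}, and the cumulative loss $\L_{k,h}$ inherits the property because it is a sum of self-concordant-like functions and a quadratic --- exactly the additivity argument you give in your last step. What you do differently is replace the external citation with a self-contained computation: using that $y_{k,h}$ is one-hot, you reduce $\ell_{k,h}$ to log-sum-exp minus a linear term, identify $\psi''(s)=\mathrm{Var}_p(c)$ and $\psi'''(s)=\E_p[(c-\bar c)^3]$ (both identities check out), and close with the elementary bound $\bigabs{\E_p[(c-\bar c)^3]}\leq \max_{s'}\abs{c_{s'}-\bar c}\cdot\E_p[(c-\bar c)^2]\leq 2\norm{\b}_2\,\psi''(s)$, where the range bound uses $\norm{\phi_{k,h}^{s'}}_2\leq 1$ from Definition~\ref{def:mnl-mdp}. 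This actually proves the \emph{sharper} statement that the losses are $2$-self-concordant-like, which trivially implies the claimed constant $3\sqrt{2}$; the paper's route buys brevity, while yours buys transparency, independence from \citet{NeurIPS'24:Lee-Optimal-MNL}, and a better constant. Two small remarks: your parenthetical speculation about where $3\sqrt{2}$ comes from (shifted features and the computation of \citet{ICMCOISM'15:self-concordant}) is unnecessary for the argument and can be dropped --- your direct bound already suffices, and even with the appendix's zero-feature reference-state reduction (features of norm at most $2$) your range argument would give constant $4 < 3\sqrt{2}\cdot$, so nothing breaks; and you should note explicitly that $\psi''\geq 0$ along every line certifies the convexity required by the definition, though this is immediate from your variance formula.
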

\begin{proof}
    By proposition B.1 in~\citet{NeurIPS'24:Lee-Optimal-MNL}, the per-episode MNL loss function $\ell_{k,h}(\theta)$ is $3\sqrt{2}$-self-concordant-like. Then, the cumulative MNL loss function $\L_{k,h}(\theta)$ is the sum of self-concordant-like functions and a quadratic function, it is also $3\sqrt{2}$-self-concordant-like.
\end{proof}

\begin{myLemma}[{\citet[Lemma 1]{NeurIPS'23:MLogB}}]
    \label{lem:strongly-convex}
    Let $\ell(\mathbf{z}, y)=\sum_{k=0}^K \mathbf{1}\{y=k\} \cdot \log \left(\frac{1}{[\sigma(\mathbf{z})]_k}\right)$ where $\sigma(\mathbf{z})_k=\frac{e^{z_k}}{\sum_{j=0}^K e^{z_{j}}}$, $\mathbf{a} \in[-C, C]^K$, $y \in\{0\} \cup[K]$ and $\mathbf{b} \in \mathbb{R}^K$ where $C>0$. Then, we have
    \begin{align*}
        \ell(\mathbf{a}, y) \geq \ell(\mathbf{b}, y)+\nabla \ell(\mathbf{b}, y)^{\top}(\mathbf{a}-\mathbf{b})+\frac{1}{\log (K+1)+2(C+1)}(\mathbf{a}-\mathbf{b})^{\top} \nabla^2 \ell(\mathbf{b}, y)(\mathbf{a}-\mathbf{b}) .
    \end{align*}
\end{myLemma}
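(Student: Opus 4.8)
The plan is to discard everything in $\ell$ that cannot affect the inequality, reduce the claim to a comparison between a KL divergence and a variance, and then close it with one scalar estimate. First I would write $\ell(\mathbf{z},y)=-z_y+\mathrm{LSE}(\mathbf{z})$ with $\mathrm{LSE}(\mathbf{z})=\log\sum_{j=0}^{K}e^{z_j}$ under the convention $z_0=0$. The affine term $-z_y$ has vanishing second derivative and vanishing Bregman divergence, so both the remainder $\ell(\mathbf{a},y)-\ell(\mathbf{b},y)-\langle\nabla\ell(\mathbf{b},y),\mathbf{a}-\mathbf{b}\rangle$ and the quadratic form $(\mathbf{a}-\mathbf{b})^{\top}\nabla^{2}\ell(\mathbf{b},y)(\mathbf{a}-\mathbf{b})$ are independent of $y$ and are determined by $\mathrm{LSE}$ alone. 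Setting $p=\sigma(\mathbf{a})$ and $q=\sigma(\mathbf{b})$ (distributions over the $K+1$ categories) and $\tilde{\mathbf{v}}=(0,a_1-b_1,\dots,a_K-b_K)$, a direct computation identifies the remainder with the Bregman divergence of $\mathrm{LSE}$, i.e. $\mathrm{KL}(q\,\|\,p)$, and the quadratic form with the softmax covariance $\mathrm{Var}_{q}(\tilde{\mathbf{v}})$. Thus it suffices to prove $\mathrm{KL}(q\,\|\,p)\ge \frac{1}{\log(K+1)+2(C+1)}\,\mathrm{Var}_{q}(\tilde{\mathbf{v}})$.

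The second step is the reparametrization that makes this tractable: introduce the log-likelihood ratio $L_k=\log(q_k/p_k)$ over $k\in\{0,\dots,K\}$. Since $L_k$ differs from $-\tilde v_k$ by a $k$-independent constant, variance is preserved, $\mathrm{Var}_{q}(\tilde{\mathbf{v}})=\mathrm{Var}_{q}(L)$; moreover $\mathrm{KL}(q\,\|\,p)=\mathbb{E}_{q}[L]$, and, crucially, $\mathbb{E}_{q}[e^{-L}]=\sum_k q_k (p_k/q_k)=\sum_k p_k=1$. Hence the whole lemma collapses to: for a random variable $L$ (under $q$) with $\mathbb{E}_{q}[e^{-L}]=1$, show $\mathrm{Var}_{q}(L)\le(\log(K+1)+2(C+1))\,\mathbb{E}_{q}[L]$.

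Two ingredients then finish the proof. First, the box constraint $\mathbf{a}\in[-C,C]^{K}$ (with $a_0=0$) forces $p_k\ge e^{-2C}/(K+1)$, and combined with $q_k\le 1$ this gives the one-sided bound $L_k\le \log(K+1)+2C=:M$ for every category. Second, I would establish the elementary scalar inequality $e^{-x}-1+x\ge x^{2}/(2+M)$ for all $x\le M$, which I can check by proving the sharper $e^{-x}-1+x\ge x^2/(2+x)$ for $x\ge0$ (a one-variable convexity computation) and $e^{-x}-1+x\ge x^2/2$ for $x\le0$, then using $x\le M$. Applying this pointwise to each $L_k$ and taking $\mathbb{E}_{q}$, the left-hand side telescopes to $\mathbb{E}_{q}[e^{-L}]-1+\mathbb{E}_{q}[L]=\mathbb{E}_{q}[L]$, while the right-hand side is at least $\mathbb{E}_{q}[L^2]/(2+M)\ge \mathrm{Var}_{q}(L)/(2+M)$. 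Since $2+M=\log(K+1)+2(C+1)$, this is exactly the reduced inequality.

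The main obstacle is conceptual, not computational. Because the Hessian is anchored at the unconstrained point $\mathbf{b}\in\mathbb{R}^{K}$, the naive route---a straight-line Taylor expansion controlled by the self-concordant-like property of Proposition~\ref{prop:self-concordant}---produces a curvature-comparison radius proportional to $\|\mathbf{a}-\mathbf{b}\|$, which is unbounded and cannot yield the stated constant. The log-likelihood-ratio reparametrization removes this difficulty, since the domain enters only through $L_k\le M$, a bound that relies on $q_k\le1$ (true for any $\mathbf{b}$) and the lower bound on $p_k$ (which uses only $\mathbf{a}\in[-C,C]^K$). Recognizing the invariant $\mathbb{E}_{q}[e^{-L}]=1$ together with the sharp scalar inequality whose constant matches $2+M$ is the crux; the remainder is bookkeeping.
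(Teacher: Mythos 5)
The paper never proves this statement: Lemma~\ref{lem:strongly-convex} is imported as a black box from the cited source (Lemma~1 of \citet{NeurIPS'23:MLogB}), so there is no internal proof to compare against, and your argument must be judged on its own. It is correct and complete. The reduction is sound: with the convention $z_0=0$, writing $\ell(\mathbf{z},y)=-z_y+\mathrm{LSE}(\mathbf{z})$ makes the affine part cancel from both the Bregman remainder and the Hessian, the remainder equals $\mathrm{KL}(q\,\|\,p)$ with $q=\sigma(\mathbf{b})$, $p=\sigma(\mathbf{a})$, and the quadratic form equals $\mathrm{Var}_q(\tilde{\mathbf{v}})$ for the zero-padded difference vector. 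The substitution $L_k=\log(q_k/p_k)=-\tilde{v}_k+\mathrm{const}$ preserves the variance, turns the KL into $\mathbb{E}_q[L]$, and satisfies the key normalization $\mathbb{E}_q[e^{-L}]=\sum_k p_k=1$; the bound $L_k\le M:=\log(K+1)+2C$ uses only $q_k\le 1$ and $p_k\ge e^{-2C}/(K+1)$, which is exactly where the asymmetric hypotheses ($\mathbf{a}$ in the box, $\mathbf{b}$ free) are consumed — your observation that a straight-line self-concordance argument \`a la Proposition~\ref{prop:self-concordant} cannot work here, because its constant would degrade with $\norm{\mathbf{a}-\mathbf{b}}_2$, is apt. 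The scalar estimate also checks out: for $x\ge 0$, $e^{-x}-1+x\ge x^2/(2+x)$ follows since $f(x)=(2+x)e^{-x}+x-2$ has $f(0)=0$ and $f'(x)=1-(1+x)e^{-x}\ge 0$, while for $x\le 0$ one has $e^{-x}-1+x\ge x^2/2$; hence $e^{-x}-1+x\ge x^2/(2+M)$ on $(-\infty,M]$, and taking $\mathbb{E}_q$ together with $\mathbb{E}_q[L^2]\ge\mathrm{Var}_q(L)$ produces precisely the constant $2+M=\log(K+1)+2(C+1)$ in the statement. The form of this constant (a $\log(K+1)+2C$ likelihood-ratio cap plus the additive $2$ from the Bernstein-type scalar inequality) is the same fingerprint as in the original reference's argument, so beyond being a valid substitute for the citation, your proof recovers what is, in essence, the intended one.
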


Then, we show the Hessian of the MNL loss function is positive semi-definite.
\begin{myLemma}
    \label{lem:hessian-bound}
    The following statements hold for any $k \in [K], h \in [H]$:
    \begin{align*}
        H_{k,h}(\theta) \succeq \sum_{s' \in \Sd_{k,h}} p_{k,h}^{s'}(\theta) p_{k,h}^{\sd_{k,h}}(\theta)\phi_{k,h}^{s'} (\phi_{k,h}^{s'})^\top \succeq \kappa  \sum_{s' \in \Sd_{k,h}} \phi_{k,h}^{s'} (\phi_{k,h}^{s'})^\top.
    \end{align*}
\end{myLemma}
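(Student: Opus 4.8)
The plan is to recognize $H_{k,h}(\theta)$ as a covariance matrix under the multinomial distribution and then reduce the matrix inequality to a scalar variance estimate. Writing $p_{s'} \triangleq p_{k,h}^{s'}(\theta)$ and $\phi_{s'} \triangleq \phi_{k,h}^{s'}$ for brevity, I would first record that
\begin{align*}
  H_{k,h}(\theta) = \sum_{s' \in \S_{k,h}} p_{s'} \phi_{s'}\phi_{s'}^\top - \Big(\sum_{s' \in \S_{k,h}} p_{s'}\phi_{s'}\Big)\Big(\sum_{s'' \in \S_{k,h}} p_{s''}\phi_{s''}\Big)^\top,
\end{align*}
i.e., $H_{k,h}(\theta)$ is exactly the covariance of the random feature $\phi_{s'}$ when the state $s'$ is drawn with probability $p_{s'}$. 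This reformulation is purely a regrouping of the two sums in the definition of $H_{k,h}(\theta)$ and makes the positive semi-definiteness structure transparent.

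To prove the \emph{first} inequality, I would fix an arbitrary $x \in \R^d$ and set $z_{s'} \triangleq (\phi_{s'})^\top x$, so that the quadratic form $x^\top H_{k,h}(\theta) x = \sum_{s'} p_{s'} z_{s'}^2 - (\sum_{s'} p_{s'} z_{s'})^2$ is precisely the variance of the scalar $z$. The key algebraic tool is the pairwise variance identity
\begin{align*}
  \sum_{s' \in \S_{k,h}} p_{s'} z_{s'}^2 - \Big(\sum_{s' \in \S_{k,h}} p_{s'} z_{s'}\Big)^2 = \frac{1}{2} \sum_{s' \in \S_{k,h}} \sum_{s'' \in \S_{k,h}} p_{s'} p_{s''} (z_{s'} - z_{s''})^2,
\end{align*}
which writes the variance as a nonnegative sum over all (doubly counted) ordered pairs. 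Since every summand is nonnegative, I can lower bound the full double sum by retaining only the two ordered pairs $(s', \sd_{k,h})$ and $(\sd_{k,h}, s')$ for each $s' \in \Sd_{k,h}$; the factor $\frac12$ cancels the double counting, yielding $x^\top H_{k,h}(\theta) x \geq \sum_{s' \in \Sd_{k,h}} p_{s'} p_{\sd_{k,h}} (z_{s'} - z_{\sd_{k,h}})^2$. Invoking the normalization convention $\phi_{k,h}^{\sd_{k,h}} = \mathbf{0}$ established at the start of Section~\ref{sec:property}, we have $z_{\sd_{k,h}} = 0$, so the right-hand side collapses to $\sum_{s' \in \Sd_{k,h}} p_{s'} p_{\sd_{k,h}} ((\phi_{s'})^\top x)^2 = x^\top \big(\sum_{s' \in \Sd_{k,h}} p_{s'} p_{\sd_{k,h}} \phi_{s'}(\phi_{s'})^\top\big) x$. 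As $x$ was arbitrary, the first matrix inequality follows. The \emph{second} inequality is then immediate from Assumption~\ref{asm:kappa}: for each $s' \in \Sd_{k,h}$, both $s'$ and $\sd_{k,h}$ belong to $\S_{k,h}$, so for $\theta \in \Theta$ we have $p_{k,h}^{s'}(\theta) p_{k,h}^{\sd_{k,h}}(\theta) \geq \kappa$, whence $\sum_{s' \in \Sd_{k,h}} p_{s'} p_{\sd_{k,h}} \phi_{s'}(\phi_{s'})^\top \succeq \kappa \sum_{s' \in \Sd_{k,h}} \phi_{s'}(\phi_{s'})^\top$.

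The only genuinely delicate step is the first inequality. The reduction hinges on three observations acting together: recognizing the Hessian as a covariance, rewriting that covariance via the pairwise variance identity, and then discarding all pairwise terms except those anchored at the zero-feature state $\sd_{k,h}$. The fact that $\phi_{k,h}^{\sd_{k,h}} = \mathbf{0}$ is exactly what makes the retained terms reduce to $p_{s'} p_{\sd_{k,h}} (\phi_{s'})^\top x$ with no cross terms, and is precisely why the without-loss-of-generality normalization was introduced earlier; without a distinguished zero-feature state, dropping the intra-$\Sd_{k,h}$ pairs would still be valid but would not yield this clean rank-one decomposition. Everything beyond this is bookkeeping together with a one-line application of Assumption~\ref{asm:kappa}.
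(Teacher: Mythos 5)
Your proof is correct, and it lands on exactly the same intermediate bound as the paper, $H_{k,h}(\theta) \succeq \sum_{s' \in \Sd_{k,h}} p_{k,h}^{s'}(\theta)\, p_{k,h}^{\sd_{k,h}}(\theta)\,\phi_{k,h}^{s'} (\phi_{k,h}^{s'})^\top$, but through different mechanics. The paper works entirely at the matrix level: it uses $\phi_{k,h}^{\sd_{k,h}} = \mathbf{0}$ to restrict both sums to $\Sd_{k,h}$, symmetrizes the cross term, applies the termwise Loewner inequality $\phi_{k,h}^{s'} (\phi_{k,h}^{s''})^\top + \phi_{k,h}^{s''} (\phi_{k,h}^{s'})^\top \preceq \phi_{k,h}^{s'} (\phi_{k,h}^{s'})^\top + \phi_{k,h}^{s''} (\phi_{k,h}^{s''})^\top$ to each pair in $\Sd_{k,h} \times \Sd_{k,h}$, and closes with the normalization identity $1 - \sum_{s'' \in \Sd_{k,h}} p_{k,h}^{s''}(\theta) = p_{k,h}^{\sd_{k,h}}(\theta)$; you instead scalarize with a test vector $x$, apply the exact pairwise variance identity over all of $\S_{k,h}$, and then drop the nonnegative pairs not anchored at $\sd_{k,h}$. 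The two routes are equivalent in strength: in both cases the discarded slack is precisely the within-$\Sd_{k,h}$ pairwise variance $\frac{1}{2}\sum_{s' \in \Sd_{k,h}}\sum_{s'' \in \Sd_{k,h}} p_{k,h}^{s'}(\theta)\, p_{k,h}^{s''}(\theta)\big((\phi_{k,h}^{s'} - \phi_{k,h}^{s''})^\top x\big)^2$, since your ``identity then drop'' step is just the scalarized form of the paper's termwise inequality $2 z_{s'} z_{s''} \leq z_{s'}^2 + z_{s''}^2$. What your framing buys is conceptual clarity: the covariance interpretation makes positive semidefiniteness manifest and isolates exactly where the zero-feature convention enters (your closing remark about why dropping intra-$\Sd_{k,h}$ pairs would otherwise not yield clean rank-one terms in $\phi_{k,h}^{s'}$ is accurate). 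What the paper's framing buys is that it never quantifies over test vectors and reads as a single chain of matrix inequalities. Your handling of the second inequality via Assumption~\ref{asm:kappa} coincides with the paper's, and you are right to note explicitly that it requires $\theta \in \Theta$, a restriction the lemma statement leaves implicit.
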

\begin{proof}
    First, note that
    \begin{align*}
        \forall x, y \in \R^d,  (x - y)(x - y)^\top = xx^\top + yy^\top - xy^\top - yx^\top \succeq 0 \Longrightarrow xx^\top + yy^\top \succeq xy^\top + yx^\top.
    \end{align*}
    Then, we have
    \begin{align*}
                    H_{k,h}(\theta)                                                                                                                                                                                                                                                                         
        = \          & \sum_{s' \in \S_{k,h}} p_{k,h}^{s'}(\theta)\phi_{k,h}^{s'}(\phi_{k,h}^{s'})^\top - \sum_{s' \in \S_{k,h}} \sum_{s'' \in \S_{k,h}}p_{k,h}^{s'}(\theta) p_{k,h}^{s''}(\theta)\phi_{k,h}^{s'} (\phi_{k,h}^{s''})^\top                                                                      \\
        = \          & \sum_{s' \in \Sd_{k,h}} p_{k,h}^{s'}(\theta)\phi_{k,h}^{s'}(\phi_{k,h}^{s'})^\top - \frac{1}{2}\sum_{s' \in \Sd_{k,h}} \sum_{s'' \in \Sd_{k,h}}p_{k,h}^{s'}(\theta) p_{k,h}^{s''}(\theta)\left(\phi_{k,h}^{s'} (\phi_{k,h}^{s''})^\top + \phi_{k,h}^{s''} (\phi_{k,h}^{s'})^\top\right) \\
        \succeq      & \sum_{s' \in \Sd_{k,h}} p_{k,h}^{s'}(\theta)\phi_{k,h}^{s'}(\phi_{k,h}^{s'})^\top - \frac{1}{2}\sum_{s' \in \Sd_{k,h}} \sum_{s'' \in \Sd_{k,h}}p_{k,h}^{s'}(\theta) p_{k,h}^{s''}(\theta)\left(\phi_{k,h}^{s'} (\phi_{k,h}^{s'})^\top + \phi_{k,h}^{s''} (\phi_{k,h}^{s''})^\top\right) \\
        = \          & \sum_{s' \in \Sd_{k,h}} p_{k,h}^{s'}(\theta)\phi_{k,h}^{s'}(\phi_{k,h}^{s'})^\top - \sum_{s' \in \Sd_{k,h}} \sum_{s'' \in \Sd_{k,h}}p_{k,h}^{s'}(\theta) p_{k,h}^{s''}(\theta)\phi_{k,h}^{s'} (\phi_{k,h}^{s'})^\top                                                                    \\
        = \          & \sum_{s' \in \Sd_{k,h}} p_{k,h}^{s'}(\theta) \Big(1 - \sum_{s'' \in \Sd_{k,h}}p_{k,h}^{s''}(\theta)\Big)\phi_{k,h}^{s'} (\phi_{k,h}^{s'})^\top                                                                                                                                          \\
        = \          & \sum_{s' \in \Sd_{k,h}} p_{k,h}^{s'}(\theta) p_{k,h}^{\sd_{k,h}}(\theta)\phi_{k,h}^{s'} (\phi_{k,h}^{s'})^\top                                                                                                                                                                          \\
        \succeq   \  & \kappa  \sum_{s' \in \Sd_{k,h}} \phi_{k,h}^{s'} (\phi_{k,h}^{s'})^\top,
    \end{align*}
    where the last inequality holds by the definition of $\kappa$ in Assumption~\ref{asm:kappa}. This finishes the proof.
\end{proof}

Next, we show several concentration inequalities commonly used in the analysis.
\begin{myLemma}
    \label{lem:elliptical-potential-lemma}
    Suppose $\lambda_k \geq 1$, for any $k \in [K], h \in [H]$, for the quantities in Table~\ref{tab:2} and define 
    \begin{align*}
        \phib_{k, h}^{s'} = \phi_{k, h}^{s'} - \sum_{s'' \in \S_{k, h}} p_{k, h}^{s''}(\theta_h^*) \phi_{k, h}^{s''}, \quad \phit_{k, h}^{s'} = \phi_{k, h}^{s'} - \sum_{s'' \in \S_{k, h}} p_{k, h}^{s''}(\thetat_{k+1,h}) \phi_{k, h}^{s''}.
    \end{align*}
    Then, the following statements hold:
    \begin{align*}
        & (\textnormal{\uppercase\expandafter{\romannumeral1}}) \quad \sum_{i=1}^k \sum_{s' \in \S_{i, h}} p_{i, h}^{s'}(\theta_h^*) \norm{\phib_{i, h}^{s'}}_{\H_{i, h}^{-1}(\theta_h^*)}^2 \leq 2 d \log \left(1 + \frac{k}{\lambda_k d}\right)  \\                                                    
         & (\textnormal{\uppercase\expandafter{\romannumeral2}}) \quad \sum_{i=1}^k \sum_{s' \in \S_{i, h}} p_{i, h}^{s'}(\thetat_{i+1,h}) \norm{\phit_{i, h}^{s'}}_{\H_{i, h}^{-1}}^2 \leq 2 d \log \left(1 + \frac{k}{\lambda_k d}\right)  \\
         & (\textnormal{\uppercase\expandafter{\romannumeral3}}) \quad \sum_{i=1}^k \sum_{s' \in \S_{i, h}} p_{i, h}^{s'}(\thetat_{i+1,h}) p_{i, h}^{\sd_{k,h}}(\thetat_{i+1,h}) \norm{\phi_{i, h}^{s'}}_{\H_{i, h}^{-1}}^2 \leq 2 d \log \left(1 + \frac{k}{\lambda_k d}\right) \\
         & (\textnormal{\uppercase\expandafter{\romannumeral4}}) \quad \sum_{i=1}^k \max_{s' \in \S_{i, h}} \norm{\phi_{i, h}^{s'}}_{\H_{i, h}^{-1}(\theta)}^2 \leq \frac{2}{\kappa} d \log \left(1 + \frac{k}{\lambda_k d}\right), \forall \theta \in \Theta \\
         & (\textnormal{\uppercase\expandafter{\romannumeral5}}) \quad \sum_{i=1}^k \max_{s' \in \S_{i, h}} \norm{\phit_{i, h}^{s'}}_{\H_{i, h}^{-1}}^2 \leq \frac{2}{\kappa} d \log \left(1 + \frac{k}{\lambda_k d}\right).                              
    \end{align*}
\end{myLemma}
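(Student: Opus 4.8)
The plan is to treat all five estimates as instances of a single matrix elliptical-potential (log-determinant telescoping) argument. In each case I would isolate the regularized Hessian $M_i\in\{\H_{i,h}(\theta_h^*),\,\H_{i,h}(\theta),\,\H_{i,h}\}$ whose per-episode increment $\Delta_i:=M_{i+1}-M_i$ is the one-step Hessian $H_{i,h}(\cdot)$ (up to the harmless PSD term $(\lambda_{i+1}-\lambda_i)I$), and show that the $i$-th summand is at most $\mathrm{tr}(M_i^{-1}\Delta_i)$. Summing and telescoping then gives $2\sum_i(\log\det M_{i+1}-\log\det M_i)=2(\log\det M_{k+1}-\log\det M_1)\le 2d\log(1+k/(\lambda_k d))$, where the last step is the trace--determinant estimate $\det M_{k+1}\le(\mathrm{tr}(M_{k+1})/d)^d$ together with $\mathrm{tr}(M_{k+1})\le\lambda_k d+k$, using that each increment has trace at most $\sum_{s'}p_{i,h}^{s'}\|\phi_{i,h}^{s'}\|_2^2\le1$.

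\textbf{Key identity and the direct bounds (I)--(III).} The computational heart is the centering identity $H_{i,h}(\theta)=\sum_{s'}p_{i,h}^{s'}(\theta)\,\psi_{i,h}^{s'}(\psi_{i,h}^{s'})^\top$, where $\psi_{i,h}^{s'}=\phi_{i,h}^{s'}-\sum_{s''}p_{i,h}^{s''}(\theta)\phi_{i,h}^{s''}$ is the probability-centered feature; it follows by expanding the quadratic and using $\sum_{s'}p_{i,h}^{s'}(\theta)=1$. For (I) and (II) this makes the summand \emph{equal} to $\mathrm{tr}(M_i^{-1}H_{i,h}(\cdot))$ with $\psi=\phib$ at $\theta_h^*$ (matrix $\H_{i,h}(\theta_h^*)$) and $\psi=\phit$ at $\thetat_{i+1,h}$ (look-ahead matrix $\H_{i,h}$), respectively. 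For (III) I would invoke Lemma~\ref{lem:hessian-bound}, which gives $\sum_{s'}p_{i,h}^{s'}(\thetat_{i+1,h})p_{i,h}^{\sd}(\thetat_{i+1,h})\phi_{i,h}^{s'}(\phi_{i,h}^{s'})^\top\preceq H_{i,h}(\thetat_{i+1,h})$ (the sums over $\S_{i,h}$ and $\Sd_{i,h}$ coincide since $\phi^{\sd}=0$), so the summand is at most $\mathrm{tr}(\H_{i,h}^{-1}H_{i,h}(\thetat_{i+1,h}))$.

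\textbf{Telescoping and the factor two.} To turn $\mathrm{tr}(M_i^{-1}\Delta_i)$ into a log-determinant increment I would use that $\lambda_k\ge1$ and $\mathrm{tr}(\Delta_i)\le1$ force $\Delta_i\preceq M_i$, so the eigenvalues $\mu_j$ of $M_i^{-1/2}\Delta_i M_i^{-1/2}$ lie in $[0,1]$; the elementary inequality $\mu\le2\log(1+\mu)$ on $[0,1]$ then yields $\mathrm{tr}(M_i^{-1}\Delta_i)=\sum_j\mu_j\le2\log\det(I+M_i^{-1/2}\Delta_i M_i^{-1/2})=2(\log\det M_{i+1}-\log\det M_i)$. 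I emphasise that one cannot instead split $H_{i,h}$ into its rank-one pieces and apply the scalar potential lemma with the progressively updated matrices: those matrices dominate $M_i$ and would control the wrong (smaller) quantity, whereas we need the start-of-episode matrix $M_i^{-1}=\H_{i,h}^{-1}$ inside the norm. The batch trace form above is precisely what produces it.

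\textbf{The maximum-type bounds (IV)--(V) and the main obstacle.} I expect (IV) and (V) to be the crux, since here a single feature at the maximizing state $s_i^\ast$ must be dominated by the \emph{full} Hessian increment, and this is where the extra $\kappa^{-1}$ enters. For (IV), Lemma~\ref{lem:hessian-bound} gives $\kappa\,\phi_{i,h}^{s_i^\ast}(\phi_{i,h}^{s_i^\ast})^\top\preceq H_{i,h}(\theta)$; for (V), I would first observe that Assumption~\ref{asm:kappa} forces $p_{i,h}^{s'}(\thetat_{i+1,h})\ge\kappa$ for every reachable $s'$ (since $p^{s'}p^{s''}\ge\kappa$ and $p^{s''}\le1$), whence the centering identity yields $\kappa\,\phit_{i,h}^{s_i^\ast}(\phit_{i,h}^{s_i^\ast})^\top\preceq H_{i,h}(\thetat_{i+1,h})$. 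In either case $\kappa\max_{s'}\|\cdot\|_{M_i^{-1}}^2\le\mathrm{tr}(M_i^{-1}\Delta_i)$, so dividing by $\kappa$ and telescoping as above gives the claimed $(2/\kappa)\,d\log(1+k/(\lambda_k d))$. The only leftover bookkeeping is the monotone growth of $\lambda_i$, which I would absorb by regarding each $(\lambda_{i+1}-\lambda_i)I$ as an additional PSD increment that only enlarges the determinant ratio.
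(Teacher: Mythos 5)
Your proposal is correct and follows essentially the same route as the paper's proof: the centering identity $H_{i,h}(\theta)=\sum_{s'}p_{i,h}^{s'}(\theta)\psi_{i,h}^{s'}(\psi_{i,h}^{s'})^\top$ (the paper's Eq.~\eqref{eq:expectation}), Lemma~\ref{lem:hessian-bound} for the $\kappa$-dependent dominations in (III)--(V), the scalar inequality $z\le 2\log(1+z)$ to telescope log-determinants, and Lemma~\ref{lem:det-trace-inequality} to close. Your only departures are cosmetic but welcome ones—packaging all five statements under one trace/eigenvalue template rather than repeating the determinant-ratio argument, and making explicit two points the paper glosses over (the $\phi^{\sd}=\mathbf{0}$ normalization reconciling the sums in (III), and the deduction $p_{i,h}^{s'}(\thetat_{i+1,h})\ge\kappa$ from Assumption~\ref{asm:kappa} needed in (V)).
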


\begin{proof}
    We prove the five statements individually.

    \textbf{Proof of statement (\uppercase\expandafter{\romannumeral1}).}~~By the definition of $H_{k,h}(\theta)$, we have
    \begin{align}
        H_{i, h}(\theta) & = \sum_{s' \in \S_{i, h}} p_{i, h}^{s'}(\theta)\phi_{i, h}^{s'}(\phi_{i, h}^{s'})^\top  - \sum_{s' \in \S_{i, h}} \sum_{s'' \in \S_{i, h}}p_{i, h}^{s'}(\theta) p_{i, h}^{s''}(\theta)\phi_{i, h}^{s'} (\phi_{i, h}^{s''})^\top \nonumber \\
                         & = \E_{s' \in p_{i,h}(\theta)} [\phi_{i, h}^{s'}(\phi_{i, h}^{s'})^\top] - \E_{s' \in p_{i,h}(\theta)} [\phi_{i, h}^{s'}] \big(\E_{s'' \in p_{i,h}(\theta)} [\phi_{i, h}^{s''}]\big)^\top \nonumber                                          \\
                         & = \E_{s' \in p_{i,h}(\theta)} \big[(\phi_{i, h}^{s'} - \E_{s'' \in p_{i,h}(\theta)} \phi_{i, h}^{s''}) (\phi_{i, h}^{s'} - \E_{s'' \in p_{i,h}(\theta)} \phi_{i, h}^{s''})^\top \big] \label{eq:expectation}
    \end{align}
    Thus, we have $H_{i, h}(\theta_h^*) \succeq \sum_{s' \in \S_{i,h}} p_{i,h}(\theta_h^*)(\phib_{i, h}^{s'}) (\phib_{i, h}^{s'})^\top$. Then, we get
    \begin{align*}
        \H_{i+1, h}(\theta_h^*) \succeq  \H_{i,h}(\theta_h^*) + \sum_{s' \in \S_{i,h}} p_{i,h}(\theta_h^*)(\phib_{i, h}^{s'}) (\phib_{i, h}^{s'})^\top
    \end{align*}
    As a result, we have
    \begin{align*}
        \det(\H_{i+1, h}(\theta_h^*)) \geq \det(\H_{i, h}(\theta_h^*)) \Big(1 + \sum_{s' \in \S_{i,h}} p_{i,h}(\theta_h^*) \norm{\phib_{i,h}^{s'}}^2_{\H_{i, h}^{-1}(\theta_h^*)}\Big).
    \end{align*}
    Since $\lambda \geq 1$, we have $\sum_{s' \in \S_{i,h}} p_{i,h}(\theta_h^*) \norm{\phib_{i,h}^{s'}}^2_{\H_{i, h}^{-1}(\theta_h^*)} \leq 1$. Using
    the fact that $z \leq 2 \log(1 + z)$ for any $z \in [0, 1]$, we get
    \begin{align*}
                \sum_{i=1}^k \sum_{s' \in \S_{i,h}} p_{i,h}(\theta_h^*) \norm{\phib_{i,h}^{s'}}^2_{\H_{i, h}^{-1}(\theta_h^*)}                      
        \leq \  & 2 \sum_{i=1}^k \log \Big(1 + \sum_{s' \in \S_{i,h}} p_{i,h}(\theta_h^*) \norm{\phib_{i,h}^{s'}}^2_{\H_{i, h}^{-1}(\theta_h^*)}\Big) \\
        \leq \  & 2 \log \left(\frac{\det(\H_{k+1, h}(\theta_h^*))}{\det(\H_{1, h}(\theta_h^*))}\right)                                                                                                           \\
        \leq \  & 2 d \log \left(1 + \frac{k}{\lambda d}\right),
    \end{align*}
    where the last inequality holds by the determinant inequality in Lemma~\ref{lem:det-trace-inequality}.

    \textbf{Proof of statement (\uppercase\expandafter{\romannumeral2}).} The proof is same as that of (\uppercase\expandafter{\romannumeral1}), except that we replace $\theta_h^*$ with $\thetat_{i+1,h}$.

    \textbf{Proof of statement (\uppercase\expandafter{\romannumeral3}).}~~By Lemma~\ref{lem:hessian-bound}, we have $H_{k,h}(\theta) \succeq \sum_{s' \in \Sd_{k,h}} p_{k,h}^{s'}(\theta) p_{k,h}^{\sd_{k,h}}(\theta)\phi_{k,h}^{s'} (\phi_{k,h}^{s'})^\top$. The remaining proof is the same as the proof of statement (\uppercase\expandafter{\romannumeral1}).

    \textbf{Proof of statement (\uppercase\expandafter{\romannumeral4}).}~~By Lemma~\ref{lem:hessian-bound}, we have $\forall \theta \in \Theta$, it holds that $\H_{k+1, h}(\theta) \succeq  \H_{k, h}(\theta) + \kappa \sum_{s' \in \Sd_{k,h}} \phi_{k,h}^{s'} (\phi_{k,h}^{s'})^\top$. Since $\lambda \geq 1$, we have $\kappa \max_{s' \in \S_{i, h}} \norm{\phi_{i, h}^{s'}}_{\H_{i, h}^{-1}(\theta)} \leq \kappa$. Using
    the fact that $z \leq 2 \log(1 + z)$ for any $z \in [0, 1]$. By a similar analysis as the statement (\uppercase\expandafter{\romannumeral1}), we have
    \begin{align*}
        \sum_{i=1}^k \max_{s' \in \S_{i, h}} \norm{\phi_{i, h}^{s'}}_{\H_{i, h}^{-1}}^2 & \leq \frac{2}{\kappa} \sum_{i=1}^k \log \Big(1 + \kappa \max_{s' \in \S_{i, h}} \norm{\phi_{i, h}^{s'}}_{\H_{i, h}^{-1}}\Big) \\
        & \leq \frac{2}{\kappa} \sum_{i=1}^k \log \Big(1 + \kappa \sum_{s' \in \S_{i, h}} \norm{\phi_{i, h}^{s'}}_{\H_{i, h}^{-1}}\Big)\\
                                                                                        & \leq \frac{2}{\kappa} \log \left(\frac{\det(\H_{k+1, h}(\theta))}{\det(\H_{1, h}(\theta))}\right)                                      \\
                                                                                        & \leq \frac{2}{\kappa} d \log \left(1 + \frac{k}{\lambda d}\right).
    \end{align*}
    This finishes the proof of statement (\uppercase\expandafter{\romannumeral4}).

    \textbf{Proof of statement (\uppercase\expandafter{\romannumeral5}).}~~By~\eqref{eq:expectation}, we have
    \begin{align*}
        H_{i, h}(\thetat_{i+1,h}) \succeq \sum_{s' \in \S_{i,h}} p_{i,h}(\thetat_{i+1,h})(\phit_{i, h}^{s'}) (\phit_{i, h}^{s'})^\top \succeq \kappa \sum_{s' \in \S_{i,h}} (\phit_{i, h}^{s'}) (\phit_{i, h}^{s'})^\top.
    \end{align*}
    Thus, we have
    \begin{align*}
        \H_{k+1, h} \succeq  \H_{k, h} + \kappa \sum_{s' \in \Sd_{k,h}} \phit_{k,h}^{s'} (\phit_{k,h}^{s'})^\top.
    \end{align*}
    Then, the remaining proof is similar to the proof of statement (\uppercase\expandafter{\romannumeral3}).
\end{proof}

\section{Useful Lemmas for MNL Mixture MDPs}
\label{sec:ucb-mdp}

In this section, we present some useful lemmas that are commonly used in the analysis.

\subsection{Useful Lemmas}
\begin{myLemma}
    \label{lem:ucb}
    For any $\theta_1, \theta_2 \in \R^d$ and positive semi-definite matrix $\Lambda$, suppose $\norm{\theta_1 - \theta_2}_{\Lambda} \leq \beta$. Then, for any $V: \S \to [0, H]$ and $(h, s, a) \in [H] \times \S \times \A$, it holds
    \begin{align*}
        \bigg|{\sum_{s' \in \S_{h, s, a}} p_{s, a}^{s'}(\theta_1) V(s') - \sum_{s' \in \S_{h, s, a}} p_{s, a}^{s'}(\theta_2) V(s')}\bigg| \leq \epsilon_{s,a}^\fir + \epsilon_{s,a}^\sec.
    \end{align*}
    where
    \begin{align*}
        \epsilon_{s,a}^\fir  & = H \beta \sum_{s' \in \S_{h, s, a}} p_{s, a}^{s'}(\theta_1) \Bignorm{\phi_{s, a}^{s'} - \sum_{s'' \in \S_{h, s, a}} p_{s, a}^{s''}(\theta_1) \phi_{s, a}^{s''}}_{\Lambda^{-1}},  \\
        \epsilon_{s,a}^\sec & = \frac{5}{2} H \beta^2 \max_{s'} \norm{\phi_{s, a}^{s'}}_{\Lambda^{-1}}^2.
    \end{align*}
\end{myLemma}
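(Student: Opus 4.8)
The plan is to bound the difference between two transition-weighted value functions by a second-order Taylor expansion of the map $\theta \mapsto \sum_{s'} p_{s,a}^{s'}(\theta) V(s')$, exploiting the self-concordant-like structure of the MNL function (Proposition~\ref{prop:self-concordant}) to control the remainder term. Write $F(\theta) = \sum_{s' \in \S_{h,s,a}} p_{s,a}^{s'}(\theta) V(s')$. Expanding around $\theta_1$, we have
\begin{align*}
  F(\theta_2) - F(\theta_1) = \inner{\nabla F(\theta_1)}{\theta_2 - \theta_1} + \tfrac{1}{2}(\theta_2-\theta_1)^\top \nabla^2 F(\xi)(\theta_2-\theta_1)
\end{align*}
for some $\xi$ on the segment between $\theta_1$ and $\theta_2$. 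The first term will produce $\epsilon_{s,a}^\fir$ and the second will produce $\epsilon_{s,a}^\sec$.

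First I would compute the gradient explicitly. Since $\nabla_\theta p_{s,a}^{s'}(\theta) = p_{s,a}^{s'}(\theta)\big(\phi_{s,a}^{s'} - \sum_{s''} p_{s,a}^{s''}(\theta)\phi_{s,a}^{s''}\big)$ (the standard softmax derivative, centering each feature by its probability-weighted mean), the first-order term becomes $\sum_{s'} p_{s,a}^{s'}(\theta_1) V(s') \inner{\phi_{s,a}^{s'} - \bar\phi}{\theta_2-\theta_1}$ where $\bar\phi = \sum_{s''} p_{s,a}^{s''}(\theta_1)\phi_{s,a}^{s''}$. Bounding $|V(s')| \le H$, applying Cauchy--Schwarz in the $\Lambda$-norm $|\inner{\phi_{s,a}^{s'}-\bar\phi}{\theta_2-\theta_1}| \le \norm{\phi_{s,a}^{s'}-\bar\phi}_{\Lambda^{-1}} \norm{\theta_2-\theta_1}_\Lambda$, and using the hypothesis $\norm{\theta_1-\theta_2}_\Lambda \le \beta$ yields exactly $\epsilon_{s,a}^\fir$. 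A useful simplification is that $\sum_{s'} p_{s,a}^{s'}(\theta_1)(\phi_{s,a}^{s'}-\bar\phi) = 0$, which means we could also recenter $V$ by a constant without changing the first-order term; this is worth keeping in mind but not essential for the stated bound.

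The main obstacle is the second-order term, which requires bounding the Hessian $\nabla^2 F(\xi)$ at an intermediate and uncontrolled point $\xi$, and relating it back to quantities at $\theta_1$ via $\Lambda^{-1}$-norms. The strategy here is to bound the operator: $\tfrac{1}{2}|(\theta_2-\theta_1)^\top \nabla^2 F(\xi)(\theta_2-\theta_1)|$ by writing $\nabla^2 F$ in terms of the second derivatives of the softmax probabilities weighted by $V$. Each such second derivative can be expressed through products of centered features, so that after bounding $|V|\le H$ and factoring, one gets a bound of the form (constant)$\cdot H \cdot \norm{\theta_2-\theta_1}_\Lambda^2 \cdot \max_{s'}\norm{\phi_{s,a}^{s'}}_{\Lambda^{-1}}^2$. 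The self-concordant-like property is the key tool that lets me transfer the Hessian evaluated at $\xi$ to one controlled uniformly by the features at the cost of only a constant factor — this is what produces the explicit constant $5/2$ and allows $\norm{\theta_2-\theta_1}_\Lambda^2 \le \beta^2$ to be substituted in. I would carefully track the constant by bounding the relevant third-order or curvature terms using the self-concordance inequality $|\psi'''|\le M\norm{\b}_2\psi''$ with $M = 3\sqrt{2}$, which controls how much the Hessian can change along the segment, then combine with $\max_{s'}\norm{\phi_{s,a}^{s'}}_{\Lambda^{-1}}^2$ to get $\epsilon_{s,a}^\sec$. The delicate bookkeeping is ensuring all cross terms among the (up to $U$) reachable states collapse into the single $\max_{s'}$ factor rather than accumulating a factor of $U$; this is achieved by repeatedly using that the probabilities sum to one and by the centering identity, so that the feature sums telescope into bounded expectations.
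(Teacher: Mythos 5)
Your handling of the first-order term is fine and essentially matches the paper's (gradient of the softmax, $V \le H$, Cauchy--Schwarz against $\Lambda$, then the hypothesis $\norm{\theta_1-\theta_2}_\Lambda\le\beta$). The gap is in the second-order term: the self-concordant-like property cannot do the job you assign to it. Self-concordance-like control ($\abs{\psi'''}\le 3\sqrt{2}\,\norm{\b}_2\,\psi''$) yields a Hessian comparison along the segment of the form $\nabla^2 F(\xi)\preceq e^{3\sqrt{2}\norm{\xi-\theta_1}_2}\,\nabla^2 F(\theta_1)$, i.e., transferring the Hessian from the intermediate point costs a factor \emph{exponential in the Euclidean distance} $\norm{\theta_2-\theta_1}_2$, not a constant. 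The lemma's hypotheses give only $\norm{\theta_1-\theta_2}_\Lambda\le\beta$ for arbitrary $\theta_1,\theta_2\in\R^d$ and an arbitrary positive semi-definite $\Lambda$, so this Euclidean distance is uncontrolled; even in the intended application, $\beta=\betah_k=\Theta(\sqrt{d\log(kH/\delta)})$, so the exponential factor would be polynomial in $kH$ and exponential in $\sqrt{d}$, destroying the bound. Moreover, the constant $5/2$ does not come from self-concordance at all, so your route cannot recover the stated inequality.

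What the paper actually does requires no Hessian transfer. It changes variables to the linear predictors $u_{s,a}^{s'}=(\phi_{s,a}^{s'})^\top\theta$ and writes the second-order remainder as a quadratic form of $\nabla^2\Ft(u(\thetab))$, where $\Ft(u)=\sum_{s'}p^{s'}(u)V(s')$. The crucial point is that the entries of this Hessian are bounded \emph{pointwise, at every} $u$ --- hence also at the uncontrolled intermediate point --- by Lemma~\ref{lem:hessian-lemma}: the mixed partials of the softmax sum are at most $3p_{s'}$ on the diagonal and $2p_{s'}p_{s''}$ off it. After bounding $V\le H$, an AM--GM step converts the off-diagonal contributions into diagonal ones (using that the probabilities sum to at most one), giving $\frac{5H}{2}\sum_{s'}\big((\phi_{s,a}^{s'})^\top(\theta_2-\theta_1)\big)^2\,p^{s'}(u(\thetab))$; the constant $5/2=3/2+1$ arises exactly from these two contributions. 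Then each $\big((\phi_{s,a}^{s'})^\top(\theta_2-\theta_1)\big)^2\le\beta^2\norm{\phi_{s,a}^{s'}}_{\Lambda^{-1}}^2$ by Cauchy--Schwarz with the hypothesis, and the probabilities at $\thetab$, lying in $[0,1]$ and summing to at most one, are absorbed into the $\max_{s'}$, yielding $\epsilon_{s,a}^\sec$. So your instinct that ``probabilities sum to one'' is what prevents a factor of $U$ is correct, but the mechanism that renders the uncontrolled intermediate point harmless is the pointwise Hessian-entry bound in predictor space, not a self-concordance transfer; as proposed, your argument for $\epsilon_{s,a}^\sec$ does not go through.
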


\begin{myLemma}
    \label{lem:optimistic-value}
    Suppose $\forall (k, h, s, a) \in K \times [H] \times \S \times \A$ and $\thetah_{k,h} \in \R^d$, it holds that $\theta_h^* \in \Ch_{k,h}$ where
    \begin{align}
        \label{eq:value-error-common}
        \Ch_{k,h} = \left\{\theta \ \Big| \ \bigg|{\sum_{s' \in \S_{h, s, a}} p_{s, a}^{s'}(\thetah_{k,h}) V(s') - \sum_{s' \in \S_{h, s, a}} p_{s, a}^{s'}(\theta) V(s')}\bigg| \leq \Gamma_{k, h, s,a}\right\}.
    \end{align}
    Define
    \begin{align}
        \label{eq:QV-common-1}
        \Qh_{k, h}(s, a) = \bigg[r_h(s,a)  + \argmax_{\theta \in \C_{k,h}} \sum_{s' \in \S_{h, s, a}} p_{s, a}^{s'} (\theta) \Vh_{k, h+1}(s')\bigg]_{[0, H]},
    \end{align}
    or,
    \begin{align}
        \label{eq:QV-common-2}
        \Qh_{k, h}(s, a) = \bigg[r_h(s,a)  + \sum_{s' \in \S_{h, s, a}} p_{s, a}^{s'} (\thetah_{k,h}) \Vh_{k, h+1}(s') +\Gamma_{k, h, s,a}\bigg]_{[0, H]},
    \end{align}
    where $\Vh_{k, h}(s) = \max_{a \in \A} \Qh_{k, h}(s, a)$. Select the action as $a_{k,h} = \argmax_{a \in \A} \Qh_{k, h}(s_{k,h}, a)$. Then, for any $\delta \in (0, 1]$, then it holds that
    \begin{align*}
        Q_h^*(s, a) \leq \Qh_{k, h}(s, a) \leq r_h(s, a) + \P_h \Vh_{k, h+1}(s, a) + 2 \Gamma_{k, h, s, a}.
    \end{align*}
\end{myLemma}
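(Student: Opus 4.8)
The plan is to establish the two inequalities separately: the optimism bound $Q_h^*(s, a) \le \Qh_{k, h}(s, a)$ by backward induction on $h$, and the upper bound $\Qh_{k, h}(s, a) \le r_h(s, a) + \P_h \Vh_{k, h+1}(s, a) + 2\Gamma_{k, h, s, a}$ by a direct argument. Throughout I would work on the high-probability event that $\theta_h^* \in \Ch_{k, h}$, which the lemma assumes, and I would treat the two candidate constructions in Eqs.~\eqref{eq:QV-common-1} and~\eqref{eq:QV-common-2} in parallel, since both reduce to the same two ingredients: (i) the confidence-set property~\eqref{eq:value-error-common}, which controls the gap between the plug-in estimate at $\thetah_{k, h}$ and the value induced by any $\theta \in \Ch_{k, h}$ by $\Gamma_{k, h, s, a}$; and (ii) the clipping operator $[\cdot]_{[0, H]}$, which is monotone and fixes every value already lying in $[0, H]$.

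For the upper bound I would first bound the \emph{unclipped} argument. For construction~\eqref{eq:QV-common-2}, applying~\eqref{eq:value-error-common} with $V = \Vh_{k, h+1}$ and the feasible point $\theta = \theta_h^* \in \Ch_{k, h}$ gives $\sum_{s'} p_{s, a}^{s'}(\thetah_{k, h}) \Vh_{k, h+1}(s') \le \P_h \Vh_{k, h+1}(s, a) + \Gamma_{k, h, s, a}$, so the unclipped quantity is at most $r_h(s, a) + \P_h \Vh_{k, h+1}(s, a) + 2\Gamma_{k, h, s, a}$. For construction~\eqref{eq:QV-common-1}, I would let $\theta^\dagger$ be the maximizer and chain two applications of~\eqref{eq:value-error-common}, first from $\theta^\dagger$ to $\thetah_{k, h}$ and then from $\thetah_{k, h}$ to $\theta_h^*$, to reach the same bound with the factor $2\Gamma_{k, h, s, a}$. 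Since the target right-hand side is nonnegative (as $r_h \ge 0$, $\Vh_{k, h+1} \ge 0$, and $\Gamma \ge 0$), the clipping can only decrease the value, or leave it at $0$, which remains below the right-hand side; this yields the claimed upper bound in both cases.

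For optimism I would induct backward from the base case $\Qh_{k, H+1} = Q_{H+1}^* = 0$. Assuming $\Vh_{k, h+1}(s) \ge V_{h+1}^*(s)$ for all $s$ (which follows from $\Qh_{k, h+1} \ge Q_{h+1}^*$ upon maximizing over actions), I would show the unclipped argument of $\Qh_{k, h}(s, a)$ dominates $Q_h^*(s, a)$. For construction~\eqref{eq:QV-common-1}, using $\theta_h^* \in \Ch_{k, h}$ as a feasible point lower-bounds the maximum by $\sum_{s'} p_{s, a}^{s'}(\theta_h^*) \Vh_{k, h+1}(s')$, which by the inductive hypothesis dominates $\P_h V_{h+1}^*(s, a)$; for construction~\eqref{eq:QV-common-2}, the additive $\Gamma_{k, h, s, a}$ together with~\eqref{eq:value-error-common} exactly compensates the plug-in error and again gives at least $\sum_{s'} p_{s, a}^{s'}(\theta_h^*) \Vh_{k, h+1}(s')$. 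Since $Q_h^*(s, a) \in [0, H]$, monotonicity of the clipping preserves the inequality, closing the induction.

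The routine parts are the two chained applications of the confidence-set bound and the bookkeeping of the clipping. The step requiring the most care is handling the clipping consistently in both directions at once: for the upper bound I rely on $[\cdot]_{[0, H]}$ never exceeding its argument when that argument is nonnegative, whereas for optimism I rely on $[\cdot]_{[0, H]}$ being monotone and on $Q_h^*(s, a) \in [0, H]$ so that clipping is inert on the comparison target. Verifying that the maximizer-based construction~\eqref{eq:QV-common-1} simultaneously satisfies both the upper and lower estimates, via the double invocation of~\eqref{eq:value-error-common}, is the main obstacle; the remainder is a standard optimism-under-uncertainty argument.
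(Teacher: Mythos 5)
Your proposal is correct and follows essentially the same route as the paper's proof: backward induction for optimism (using $\theta_h^*$ as a feasible point in the confidence set, or the additive $\Gamma_{k,h,s,a}$ to compensate the plug-in error), and one or two chained applications of the confidence-set property~\eqref{eq:value-error-common} for the upper bound. Your treatment of the clipping operator is in fact slightly more explicit than the paper's (which only notes the $\Qh_{k,h}(s,a)=H$ case is trivial), but the substance of the argument is identical.
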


\begin{myLemma}
    \label{lem:regret-common}
    Suppose Lemma~\ref{lem:optimistic-value} holds. Then, it holds that
    \begin{align*}
        \Reg(K) \leq 2 \sumk \sumh \Gamma_{k, h, s_{k,h}, a_{k,h}} + H \sqrt{2KH \log(2/\delta)}.
    \end{align*}
\end{myLemma}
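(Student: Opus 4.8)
The plan is to follow the standard optimism-based regret decomposition, treating the two conclusions of Lemma~\ref{lem:optimistic-value} as the only structural inputs. First I would invoke the optimism guarantee $V_1^*(s_{k,1}) \le \Vh_{k,1}(s_{k,1})$, which follows from $Q_h^*(s,a) \le \Qh_{k,h}(s,a)$ by maximizing over actions, to replace the true regret by a surrogate gap:
\[
  \Reg(K) \le \sumk \big( \Vh_{k,1}(s_{k,1}) - V_1^{\pi_k}(s_{k,1}) \big).
\]
Since $\pi_k$ is the greedy policy, the action $a_{k,h} = \argmax_{a \in \A}\Qh_{k,h}(s_{k,h},a)$ actually executed is also the one attaining $\Vh_{k,h}(s_{k,h}) = \Qh_{k,h}(s_{k,h},a_{k,h})$, which is the alignment I will exploit below.

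Next I would expand a single surrogate gap along the realized trajectory. The upper bound in Lemma~\ref{lem:optimistic-value} gives
\[
  \Vh_{k,h}(s_{k,h}) \le r_h(s_{k,h},a_{k,h}) + [\P_h \Vh_{k,h+1}](s_{k,h},a_{k,h}) + 2\,\Gamma_{k,h,s_{k,h},a_{k,h}},
\]
while the Bellman identity for the executed policy, evaluated at the \emph{same} action $a_{k,h}$, reads $V_h^{\pi_k}(s_{k,h}) = r_h(s_{k,h},a_{k,h}) + [\P_h V_{h+1}^{\pi_k}](s_{k,h},a_{k,h})$. Subtracting cancels the rewards exactly and produces a recursion whose per-step cost is precisely $2\Gamma_{k,h,s_{k,h},a_{k,h}}$ plus a one-step transition of the next-level gap $\Vh_{k,h+1}-V_{h+1}^{\pi_k}$.

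The key device is to turn the population transition $\P_h$ into the realized next state via the martingale difference
\[
  \xi_{k,h} = [\P_h(\Vh_{k,h+1}-V_{h+1}^{\pi_k})](s_{k,h},a_{k,h}) - \big(\Vh_{k,h+1}(s_{k,h+1})-V_{h+1}^{\pi_k}(s_{k,h+1})\big),
\]
which has zero conditional mean because $s_{k,h+1}\sim \P_h(\cdot\mid s_{k,h},a_{k,h})$. Telescoping the recursion over $h=1,\dots,H$ with the boundary condition $\Vh_{k,H+1}=V_{H+1}^{\pi_k}=0$ and summing over $k$ collapses the transition terms, leaving
\[
  \Reg(K) \le 2\sumk\sumh \Gamma_{k,h,s_{k,h},a_{k,h}} + \sumk\sumh \xi_{k,h}.
\]

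It remains to control the martingale term, which I expect to be the only delicate point. Because optimism also yields $\Vh_{k,h+1}\ge V_{h+1}^* \ge V_{h+1}^{\pi_k}$, the increment $\Vh_{k,h+1}-V_{h+1}^{\pi_k}$ lies in $[0,H]$, so each $\xi_{k,h}$ is bounded in absolute value by $H$ and forms a martingale difference sequence of length $KH$ with respect to the trajectory filtration. A single Azuma--Hoeffding inequality then bounds $\sumk\sumh \xi_{k,h}$ by $H\sqrt{2KH\log(2/\delta)}$ with probability at least $1-\delta$, completing the proof. The two points requiring care are confirming that the executed-policy Bellman identity uses the same greedy action as the optimistic upper bound (so the reward terms cancel and the factor $2$ on $\Gamma$ is exactly preserved) and verifying the $[0,H]$ range of the value gap, since that range is what fixes the Azuma increment bound and hence the stated $H\sqrt{KH}$ scaling.
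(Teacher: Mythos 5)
Your proof is correct and follows essentially the same route as the paper's: optimism to replace $V_1^*(s_{k,1})$ by the greedy optimistic value $\Qh_{k,1}(s_{k,1},a_{k,1})$, the Bellman decomposition against the executed policy with the same martingale difference (your $\xi_{k,h}$ is exactly the paper's $\M_{k,h}$), telescoping over $h$, and Azuma--Hoeffding over the $KH$ increments. If anything, your observation that optimism forces the gap $\Vh_{k,h+1}-V_{h+1}^{\pi_k}$ into $[0,H]$, hence $|\xi_{k,h}|\le H$, is slightly sharper than the paper's stated increment bound of $2H$ and is precisely what justifies the constant in $H\sqrt{2KH\log(2/\delta)}$.
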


\subsection{Proof of Lemma~\ref{lem:ucb}}
\begin{proof}
By the second-order Taylor expansion at $\theta_1$, there exists $\thetab = \nu \theta_1 + (1 - \nu) \theta_2$ for some $\nu \in [0, 1]$, such that
    \begin{align*}
             & {\sum_{s' \in \S_{h, s, a}} p_{s, a}^{s'}(\theta_2) V(s') - \sum_{s' \in \S_{h, s, a}} p_{s, a}^{s'}(\theta_1) V(s')}                                                                                                                     \\
        = \  & \sum_{s' \in \S_{h, s, a}} \nabla p_{s, a}^{s'}(\theta_1)^\top (\theta_2 - \theta_1) V(s') + \frac{1}{2} \sum_{s' \in \S_{h, s, a}}(\theta_2 - \theta_1)^\top \nabla^2 p_{s, a}^{s'}(\thetab) (\theta_2 - \theta_1) V(s')
    \end{align*}
    The gradient of $p_{s, a}^{s'}(\theta)$ is given by 
    \begin{align*}
        \nabla p_{s, a}^{s'}(\theta) = p_{s, a}^{s'}(\theta) \phi_{s, a}^{s'} - p_{s, a}^{s'}(\theta) \sum_{s'' \in \S_{h, s, a}} p_{s, a}^{s''}(\theta) \phi_{s, a}^{s''}.
    \end{align*}
    For the first-order term, we have
    \begin{align}
                & \sum_{s' \in \S_{h, s, a}} \nabla p_{s, a}^{s'}(\theta_1)^\top (\theta_2 - \theta_1) V(s')  \nonumber                                                                                                                                                                         \\
        = \     & \sum_{s' \in \S_{h, s, a}} p_{s, a}^{s'}(\theta_1) (\phi_{s, a}^{s'})^\top(\theta_2 - \theta_1)V(s')  - \sum_{s' \in \S_{h, s, a}}p_{s, a}^{s'}(\theta_1) \sum_{s'' \in \S_{h, s, a}} p_{s, a}^{s''}(\theta_1) (\phi_{s, a}^{s''})^\top(\theta_2 - \theta_1)V(s')\nonumber \\
        \leq \  & H \sum_{s' \in \S_{h, s, a}^{+}} p_{s, a}^{s'}(\theta_1) \bigg((\phi_{s, a}^{s'})^\top(\theta_2 - \theta_1)  - \sum_{s'' \in \S_{h, s, a}} p_{s, a}^{s''}(\theta_1) (\phi_{s, a}^{s''})^\top (\theta_2 - \theta_1)\bigg)  \nonumber  \\
        = \ &  H \sum_{s' \in \S_{h, s, a}^{+}} p_{s, a}^{s'}(\theta_1) \bigg(\Big(\phi_{s, a}^{s'} - \sum_{s'' \in \S_{h, s, a}} p_{s, a}^{s''}(\theta_1) \phi_{s, a}^{s''}\Big)^\top \left(\theta_2 - \theta_1\right)\bigg) \nonumber                                \\
        \leq \  & H \sum_{s' \in \S_{h, s, a}^{+}} p_{s, a}^{s'}(\theta_1) \bigg(\Bignorm{\phi_{s, a}^{s'} - \sum_{s'' \in \S_{h, s, a}} p_{s, a}^{s''}(\theta_1) \phi_{s, a}^{s''}}_{\Lambda^{-1}} \bignorm{\theta_2 - \theta_1}_{\Lambda}\bigg)  \nonumber                            \\
        \leq \  & H \beta \sum_{s' \in \S_{h, s, a}^{+}} p_{s, a}^{s'}(\theta_1) \Bignorm{\phi_{s, a}^{s'} - \sum_{s'' \in \S_{h, s, a}} p_{s, a}^{s''}(\theta_1) \phi_{s, a}^{s''}}_{\Lambda^{-1}} \nonumber                                                                   \\
        \leq \  & H \beta \sum_{s' \in \S_{h, s, a}} p_{s, a}^{s'}(\theta_1) \Bignorm{\phi_{s, a}^{s'} - \sum_{s'' \in \S_{h, s, a}} p_{s, a}^{s''}(\theta_1) \phi_{s, a}^{s''}}_{\Lambda^{-1}} \label{eq:cum-first-order-term}
    \end{align}
    where in the first inequality, we denote $\S_{h, s, a}^{+}$ as the subset of $\S_{h, s, a}$ such that $(\phi_{s, a}^{s'})^\top (\theta_2 - \theta_1)  - \sum_{s'' \in \S_{h, s, a}} p_{s, a}^{s''}(\theta_2) (\phi_{s, a}^{s''})^\top (\theta_2 - \theta_1)$ is non-negative, the second inequality holds by the Holder's inequality, and the third inequality is by the condition $\norm{\theta_1 - \theta_2}_{\Lambda} \leq \beta$.

    For the second-order term, let $u_{s,a}^{s'}(\theta) = (\phi_{s,a}^{s'})^\top \theta$ and $p_{s,a}^{s'}(u) = \frac{\exp(u_{s,a}^{s'})}{1 + \sum_{s''} \exp(u_{s,a}^{s''})}$, further define
    \begin{align*}
        F(u)=\sum_{s' \in \S_{h, s, a}} \frac{\exp (u_{s,a}^{s'})}{1+\sum_{s'' \in \S_{h, s, a}} \exp (u_{s,a}^{s''})}, \quad \Ft(u)=\sum_{s' \in \S_{h, s, a}} \frac{\exp (u_{s,a}^{s'}) V(s')}{1+\sum_{s'' \in \S_{h, s, a}} \exp (u_{s,a}^{s''})}.
    \end{align*}
    Then, we have
    \begin{align*}
                & \frac{1}{2} \sum_{s' \in \S_{h, s, a}}(\theta_2 - \theta_1)^\top \nabla^2 p_{s, a}^{s'}(\thetab) (\theta_2 - \theta_1) V(s')                                                                                                                                \\
        = \     & \frac{1}{2} \big(u(\theta_2) - u(\theta_1) \big)^\top \nabla^2 \Ft(u(\thetab)) \big(u(\theta_2) - u(\theta_1) \big)                                                                                                                                         \\
        = \     & \frac{1}{2} \sum_{s' \in \S_{h,s, a}} \sum_{s'' \in \S_{h,s, a}}\big(u_{s, a}^{s'}(\theta_2) - u_{s, a}^{s'}(\theta_1) \big)^\top \frac{\partial^2\Ft(u(\thetab))}{\partial s' \partial s''} \big(u_{s, a}^{s''}(\theta_2) - u_{s, a}^{s''}(\theta_1) \big) \\
        \leq \  & \frac{H}{2} \sum_{s' \in \S_{h,s, a}} \sum_{s'' \in \S_{h,s, a}}\bigabs{u_{s, a}^{s'}(\theta_2) - u_{s, a}^{s'}(\theta_1)} \cdot \frac{\partial^2 F(u(\thetab))}{\partial s' \partial s''} \cdot \bigabs{u_{s, a}^{s''}(\theta_2) - u_{s, a}^{s''}(\theta_1)}
    \end{align*}
    where the inequality holds by $V(s) \in [0, H], \forall s$.

    According to Lemma~\ref{lem:hessian-lemma}, we have (omit the subscript $\S_{h, s, a}$ for simplicity):
    \begin{align}
                & \frac{H}{2} \sum_{s'} \sum_{s''}\bigabs{u_{s, a}^{s'}(\theta_2) - u_{s, a}^{s'}(\theta_1)} \cdot \frac{\partial^2 F(u(\thetab))}{\partial s' \partial s''} \cdot  \bigabs{u_{s, a}^{s''}(\theta_2) - u_{s, a}^{s''}(\theta_1)} \nonumber        \\
        \leq \  & H \sum_{s'} \sum_{s''\neq s'}\bigabs{u_{s, a}^{s'}(\theta_2) - u_{s, a}^{s'}(\theta_1)} \cdot p_{s,a}^{s'}\big(u(\thetab)\big) p_{s,a}^{s''}\big(u(\thetab)\big) \cdot \bigabs{u_{s, a}^{s''}(\theta_2) - u_{s, a}^{s''}(\theta_1)}  \nonumber \\
                & + \frac{3H}{2} \sum_{s'} \big( u_{s, a}^{s'}(\theta_2) - u_{s, a}^{s'}(\theta_1) \big)^2 p_{s,a}^{s'}(u(\thetab)). \label{eq:cum-second-order-1}
    \end{align}
    To bound the first term, by applying the AM-GM inequality, we obtain
    \begin{align}
                & H \sum_{s'} \sum_{s''\neq s'}\bigabs{u_{s, a}^{s'}(\theta_2) - u_{s, a}^{s'}(\theta_1)} \cdot p_{s,a}^{s'}\big(u(\thetab)\big) p_{s,a}^{s''}\big(u(\thetab)\big) \cdot \bigabs{u_{s, a}^{s''}(\theta_2) - u_{s, a}^{s''}(\theta_1)}  \nonumber \\
        \leq \  & H \sum_{s'} \sum_{s''}\bigabs{u_{s, a}^{s'}(\theta_2) - u_{s, a}^{s'}(\theta_1)} \cdot p_{s,a}^{s'}\big(u(\thetab)\big) p_{s,a}^{s''}\big(u(\thetab)\big) \cdot \bigabs{u_{s, a}^{s''}(\theta_2) - u_{s, a}^{s''}(\theta_1)}   \nonumber       \\
        \leq \  & \frac{H}{2} \sum_{s'} \sum_{s''} \big(u_{s, a}^{s'}(\theta_2) - u_{s, a}^{s'}(\theta_1)\big)^2 p_{s,a}^{s'}\big(u(\thetab)\big) p_{s,a}^{s''}\big(u(\thetab)\big)  \nonumber                                                              \\
                & \hspace{20mm}+ \frac{H}{2} \sum_{s'} \sum_{s''} \big(u_{s, a}^{s''}(\theta_2) - u_{s, a}^{s''}(\theta_1)\big)^2 p_{s,a}^{s'}\big(u(\thetab)\big) p_{s,a}^{s''}\big(u(\thetab)\big)   \nonumber                                            \\
        \leq \  & H \sum_{s'} \big(u_{s, a}^{s'}(\theta_2) - u_{s, a}^{s'}(\theta_1)\big)^2 p_{s,a}^{s'}\big(u(\thetab)\big)\label{eq:cum-second-order-2}
    \end{align}
    Plugging~\eqref{eq:cum-second-order-2} into~\eqref{eq:cum-second-order-1}, we have
    \begin{align}
                & \frac{H}{2} \sum_{s'} \sum_{s''}\bigabs{u_{s, a}^{s'}(\theta_2) - u_{s, a}^{s'}(\theta_1)} \cdot \frac{\partial^2 F(u(\thetab))}{\partial s' \partial s''} \cdot \bigabs{u_{s, a}^{s''}(\theta_2) - u_{s, a}^{s''}(\theta_1)} \nonumber \\
        \leq \  & \frac{5H}{2} \sum_{s'} \big( u_{s, a}^{s'}(\theta_2) - u_{s, a}^{s'}(\theta_1) \big)^2 p_{s,a}^{s'}(u(\thetab))\nonumber                                                                                                           \\
        = \     & \frac{5H}{2} \sum_{s'} \big( (\phi_{s, a}^{s'})^\top (\theta_2 - \theta_1) \big)^2 p_{s,a}^{s'}(u(\thetab)) \nonumber                                                                                                              \\
        \leq \  & \frac{5H}{2} \beta^2 \max_{s'} \norm{\phi_{s, a}^{s'}}_{\Lambda^{-1}}^2, \label{eq:cum-second-order-term}
    \end{align}
    where the last inequality holds by the condition $\norm{\theta_1 - \theta_2}_{\Lambda} \leq \beta$. 
    
    Finally, combining~\eqref{eq:cum-first-order-term} and~\eqref{eq:cum-second-order-term} finishes the proof.
    \end{proof}

\subsection{Proof of Lemma~\ref{lem:optimistic-value}}
\begin{proof}
    First, we prove the left-hand side of the lemma. We prove this by backward induction on $h$. For the stage $h = H$, by definition, we have $\Qh_{k, H}(s, a) = r_H(s, a) = Q_H^*(s, a), \Vh_{k, H+1}(s) = 0 = V_{H+1}^*(s)$. Suppose the statement holds for $h+1$, we show it holds for $h$. By definition, if $\Qh_{k,h}(s, a) = H$, this holds trivially. Otherwise, we consider two cases:

    For $\Qh_{k,h}(s, a)$ defined in~\eqref{eq:QV-common-1}, we have
    \begin{align*}
        \Qh_{k,h}(s, a) & = r_h (s, a) + \argmax_{\theta \in \C_{k,h}} \sum_{s' \in \S_{h, s, a}} p_{s, a}^{s'} (\theta) \Vh_{k, h+1}(s')                                                                                          \\
                        & \geq r_h (s, a) + \argmax_{\theta \in \C_{k,h}} \sum_{s' \in \S_{h, s, a}} p_{s, a}^{s'} (\theta) V_{k, h+1}^*(s')    \\ 
                        & \geq r_h (s, a) + \sum_{s' \in \S_{h, s, a}} p_{s, a}^{s'} (\theta_h^*) V_{k, h+1}^*(s')                         = Q_h^*(s, a).
    \end{align*}
    where the first inequality is by the induction hypothesis, and the second inequality is due to $\theta_h^* \in \C_{k,h}$.
    
    For $\Qh_{k,h}(s, a)$ defined in~\eqref{eq:QV-common-2}, we have
    \begin{align*}
        \Qh_{k,h}(s, a) & =  r_h (s, a) + \sum_{s' \in \S_{h, s, a}}p_{s, a} (\thetah_{k,h}) \Vh_{k, h+1}(s') +  \Gamma_{h, s, a}                                                                                          \\
                        & \geq r_h (s, a) + \sum_{s' \in \S_{h, s, a}}p_{s, a} (\thetah_{k,h}) V_{k, h+1}^*(s') + \Gamma_{h, s, a}     \\
                        & \geq r_h (s, a) + \sum_{s' \in \S_{h, s, a}}p_{s, a} (\theta_{h}^*) V_{k, h+1}^*(s')                        = Q_h^*(s, a).
    \end{align*}
    where the first inequality is by the induction hypothesis, and the second inequality is by~\eqref{eq:value-error-common}.

    Then, we prove the right-hand side of the lemma. 

    For $\Qh_{k,h}(s, a)$ defined in~\eqref{eq:QV-common-1}, we have
    \begin{align*}
        \Qh_{k,h}(s, a) & = r_h (s, a) + \argmax_{\theta \in \C_{k,h}} \sum_{s' \in \S_{h, s, a}} p_{s, a}^{s'} (\theta) \Vh_{k, h+1}(s') \\
        & \leq r_h (s, a) + \sum_{s' \in \S_{h, s, a}} p_{s, a}^{s'} (\thetah_{k,h}) \Vh_{k, h+1}(s') + \Gamma_{k, h, s, a} \\
        & \leq r_h (s, a) + \sum_{s' \in \S_{h, s, a}} p_{s, a}^{s'} (\theta_{h}^*) \Vh_{k, h+1}(s') + 2 \Gamma_{h, s, a},
    \end{align*}
    where the inequality is by~\eqref{eq:value-error-common}.

    For $\Qh_{k,h}(s, a)$ defined in~\eqref{eq:QV-common-2}, we have
    \begin{align*}
        \Qh_{k, h}(s, a)& = r_h (s, a) + \sum_{s' \in \S_{h, s, a}} p_{s, a}^{s'} (\thetah_{k,h}) \Vh_{k, h+1}(s') + \Gamma_{k, h, s, a} \\
        &\leq r_h (s, a) + \sum_{s' \in \S_{h, s, a}} p_{s, a}^{s'} (\theta_{h}^*) \Vh_{k, h+1}(s') + 2 \Gamma_{k, h, s, a},
    \end{align*}
    where the inequality is by~\eqref{eq:value-error-common}.
\end{proof}

\subsection{Proof of Lemma~\ref{lem:regret-common}}
\begin{proof}
    By the definition that $\Reg(K)= \sumk V_1^*(s_{k,1}) - \sumk V_1^{\pi_k}(s_{k,1})$, we have for any $\delta \in (0, 1]$, with probability at least $1 - \delta$, it holds that
    \begin{align*}
               \sumk V_1^*(s_{k,1}) - \sumk V_1^{\pi_k}(s_{k,1})                  = \     & \sumk Q_1^*(s_{k, 1}, \pi^*(s_{k,1})) - \sumk V_1^{\pi_k}(s_{k,1}) \\
        \leq \  & \sumk \Qh_1(s_{k, 1}, \pi^*(s_{k,1})) - \sumk V_1^{\pi_k}(s_{k,1}) \\
        \leq \  & \sumk \Qh_1(s_{k, 1}, a_{k,1}) - \sumk V_1^{\pi_k}(s_{k,1}),
    \end{align*}
    where the first inequality is by Lemma~\ref{lem:optimistic-value} and $\theta_h^* \in \Ch_{k,h}$ with probability at least $1 - \delta$, and the second inequality is by action selection $a_{k,h} = \argmax_{a \in \A} \Qh_{k,h}(s_{k,h}, a)$.

    By the right-hand side of Lemma~\ref{lem:optimistic-value}, we have
    \begin{align*}
                & \Qh_1(s_{k, 1}, a_{k,1}) - V_1^{\pi_k}(s_{k,1})                                                                                                                                                            \\
        = \     & r(s_{k, 1}, a_{k,1}) +  \P_1 \Vh_{k, 2}(s_{k, 1}, a_{k, 1})  + 2 \Gamma_{h, s_{k, 1}, a_{k, 1}} -  r(s_{k, 1}, a_{k,1}) - \P_1 V_2^{\pi_k}(s_{k, 1}, a_{k, 1})                                           \\
        \leq \  &\P_1 (\Vh_{k, 2} - V_2^{\pi_k})(s_{k, 1}, a_{k, 1}) - (\Vh_{k, 2} - V_2^{\pi_k} )(s_{k, 2})+ (\Vh_{k, 2} - V_2^{\pi_k} )(s_{k, 2}) + 2 \Gamma_{h, s_{k, 1}, a_{k, 1}} \\
        \leq \  & \P_1 (\Vh_{k, 2} - V_2^{\pi_k})(s_{k, 1}, a_{k, 1}) - (\Vh_{k, 2} - V_2^{\pi_k} )(s_{k, 2}) + \left(\Qh_2(s_{k, 2}, a_{k,2}) - V_2^{\pi_k} (s_{k, 2})\right) + 2 \Gamma_{h, s_{k, 1}, a_{k, 1}}.
    \end{align*}
    Define $\M_{k, h} = \P_h (\Vh_{k, h+1} - V_{h+1}^{\pi_k})(s_{k, h}, a_{k, h}) - (\Vh_{k, h+1} - V_{h+1}^{\pi_k})(s_{k, h+1})$.
    Applying this recursively, we have
    \begin{align*}
        \Qh_1(s_{k, 1}, a_{k,1}) -  V_1^{\pi_k}(s_{k,1}) \leq 2 \sumh  \Gamma_{h, s_{k, h}, a_{k, h}} + \sumh \M_{k,h}
    \end{align*}
    Summing over $k$, we have
    \begin{align*}
        \Reg(K) \leq 2 \sumk \sumh  \Gamma_{h, s_{k, h}, a_{k, h}} + \sumk \sumh \M_{k,h}       \leq 2 \sumk \sumh  \Gamma_{h, s_{k, h}, a_{k, h}} + H \sqrt{2KH \log(2/\delta)}
    \end{align*}
    where the inequality holds by the Azuma-Hoeffding inequality as $\M_{k,h}$ is a martingale difference sequence with $\M_{k,h} \leq 2H$. This finishes the proof.
\end{proof}

\section{Omitted Proofs for Section~\ref{sec:problem-setup}}

\subsection{Proof of Claim~\ref{clm:kappa}}
\begin{proof}
    First, by the definition of MNL mixture MDP, we have $\forall (s, a) \in \S \times \A$ and $s' \in \S_{h, s, a}$, it holds that $p_{s, a}^{s'}(\theta) \geq \exp(-B)/(U\exp(B)), \forall \theta \in \R^d$, thus $\kappa^* \geq \kappa \geq 1/(U\exp(2B))^2$. Next, consider the state-action pair $(s, a)$ at stage $h$ with the maximum number of reachable states $U$, it is clear that $\sum_{s' \in \S_{h, s, a}} p_{s, a}^{s'}(\theta_h^*) = 1$. This implies that $\sum_{s' \in \S_{h, s, a}} \sum_{s'' \in \S_{h, s, a}}p_{s, a}^{s'}(\theta_h^*) p_{s, a}^{s''}(\theta_h^*) =1$. Applying the pigeonhole principle, there exists $s', s''\in \S_{h, s, a}$ such that $p_{s, a}^{s'}(\theta_h^*) p_{s, a}^{s''}(\theta_h^*)\leq 1/U^2$. Thus, we conclude that $\kappa \leq \kappa^* \leq 1/U^2$. This finishes the proof.
\end{proof}

\section{Omitted Proofs for Section~\ref{sec:statistically}}

\subsection{Useful Lemma}
\begin{myLemma}
    \label{lem:confidence-set-2}
    $\forall \theta_1, \theta_2 \in \Theta$, we have$\norm{\theta_1 - \theta_2}_{\H_{k,h}(\theta_1)} \leq (1+3 \sqrt{2}) \norm{\G_{k,h}( \theta_1) - \G_{k,h}(\theta_2)}_{\H_{k,h}^{-1}(\theta_1)}$.
\end{myLemma}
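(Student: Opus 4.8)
The plan is to pass through the integral (mean-value) form of the gradient map, reduce the claim to a single positive-definite matrix comparison, and then settle that comparison using the self-concordant-like property of the cumulative loss.

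First I would write, by the fundamental theorem of calculus applied to $v\mapsto\G_{k,h}(\theta_2+v(\theta_1-\theta_2))$,
\[
\G_{k,h}(\theta_1)-\G_{k,h}(\theta_2)=\bar{G}\,(\theta_1-\theta_2),\qquad \bar{G}\triangleq\int_0^1 \H_{k,h}\big(\theta_2+v(\theta_1-\theta_2)\big)\,\diff v,
\]
where $\bar{G}\succ0$ because every $\H_{k,h}(\cdot)$ contains the regularizer $\lambda_k I$. Writing $\Delta\triangleq\theta_1-\theta_2$ and $H_1\triangleq\H_{k,h}(\theta_1)$, we have $\Delta=\bar{G}^{-1}(\G_{k,h}(\theta_1)-\G_{k,h}(\theta_2))$, so by Cauchy--Schwarz in the $H_1$ geometry,
\[
\norm{\Delta}_{H_1}=\norm{H_1^{1/2}\bar{G}^{-1}\big(\G_{k,h}(\theta_1)-\G_{k,h}(\theta_2)\big)}_2\le \norm{H_1^{1/2}\bar{G}^{-1}H_1^{1/2}}_{\mathrm{op}}\,\norm{\G_{k,h}(\theta_1)-\G_{k,h}(\theta_2)}_{H_1^{-1}}.
\]
Thus it suffices to prove $\norm{H_1^{1/2}\bar{G}^{-1}H_1^{1/2}}_{\mathrm{op}}\le 1+3\sqrt2$, which (since $\bar{G},H_1\succ0$) is exactly the matrix inequality $\bar{G}\succeq\frac{1}{1+3\sqrt2}\,H_1$.

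To establish this comparison I would invoke the self-concordant-like property of $\L_{k,h}$ from Proposition~\ref{prop:self-concordant} (constant $M=3\sqrt2$). Fixing an arbitrary direction $\w$ and setting $\rho(v)\triangleq\w^\top\H_{k,h}(\theta_2+v\Delta)\w$, the property controls the logarithmic derivative of this directional Hessian, $\abs{\rho'(v)}\le M\norm{\Delta}_2\,\rho(v)$; integrating the differential inequality back from the reference point $v=1$ (i.e.\ $\theta_1$) lower-bounds $\rho(v)$ by a multiple of $\rho(1)=\w^\top H_1\w$, and integrating over $v\in[0,1]$ yields $\w^\top\bar{G}\w\ge c\,\w^\top H_1\w$ for a universal constant $c$; since $\w$ is arbitrary this gives $\bar{G}\succeq c\,H_1$.

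The main obstacle is twofold and lies entirely in this last step. First, the definition of self-concordant-like is stated only for a single probing direction, so I must upgrade it to the mixed trilinear bound $\abs{\nabla^3\L_{k,h}(z)[\Delta,\w,\w]}\le M\norm{\Delta}_2\,(\w^\top\nabla^2\L_{k,h}(z)\w)$ used above, which is routine but requires a symmetric-form (polarization) argument. Second, and more delicate, a naive Gr\"onwall integration of $\abs{\rho'}\le M\norm{\Delta}_2\rho$ produces a comparison factor that depends on $\norm{\Delta}_2$ (hence on $B$), so pinning down the clean, $B$-free constant $c=1/(1+3\sqrt2)$ requires the sharper self-concordant-like Hessian-comparison estimates of~\citet{EJOS'10:Bach-self-concordant,NeurIPS'24:Lee-Optimal-MNL} rather than the crude exponential bound; careful bookkeeping of constants there is what makes the stated factor $(1+3\sqrt2)$ come out.
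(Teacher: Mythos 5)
Your proposal follows essentially the same route as the paper's proof: both start from the mean-value identity $\G_{k,h}(\theta_1)-\G_{k,h}(\theta_2)=G_{k,h}(\theta_1,\theta_2)\,(\theta_1-\theta_2)$ with $G_{k,h}(\theta_1,\theta_2)=\int_0^1\nabla^2\L_{k,h}(\theta_2+t(\theta_1-\theta_2))\,\diff t$, and both reduce the lemma to the single matrix comparison $\H_{k,h}(\theta_1)\preceq(1+3\sqrt{2})\,G_{k,h}(\theta_1,\theta_2)$, attributed to the self-concordant-like property of $\L_{k,h}$ (Proposition~\ref{prop:self-concordant}). Your Cauchy--Schwarz/operator-norm step is only a repackaging of the paper's two half-power steps (first $\norm{\theta_1-\theta_2}_{\H_{k,h}(\theta_1)}\leq(1+3\sqrt{2})^{1/2}\norm{\theta_1-\theta_2}_{G_{k,h}(\theta_1,\theta_2)}$, then passing from $\norm{\cdot}_{G_{k,h}^{-1}(\theta_1,\theta_2)}$ to $\norm{\cdot}_{\H_{k,h}^{-1}(\theta_1)}$ at the cost of another $(1+3\sqrt{2})^{1/2}$); the two formulations are equivalent and yield the same constant.

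The one substantive point is the obstacle you flag, and there your instinct is accurate but your hoped-for resolution is not: the paper's proof simply \emph{asserts} the matrix comparison, providing no more detail than you do. Carrying out the Gr\"onwall argument (which, as you correctly note, needs the mixed trilinear form $\abs{\nabla^3\L_{k,h}[\Delta,\w,\w]}\leq M\norm{\Delta}_2\,\w^\top\nabla^2\L_{k,h}\w$ of the definition) gives
\begin{align*}
G_{k,h}(\theta_1,\theta_2)\;\succeq\;\frac{1-e^{-M\norm{\Delta}_2}}{M\norm{\Delta}_2}\,\H_{k,h}(\theta_1),
\qquad\text{hence}\qquad
\H_{k,h}(\theta_1)\;\preceq\;\big(1+M\norm{\Delta}_2\big)\,G_{k,h}(\theta_1,\theta_2),
\end{align*}
where $\Delta=\theta_1-\theta_2$ and we used $z/(1-e^{-z})\leq 1+z$. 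With $M=3\sqrt{2}$ this produces the stated constant only when $\norm{\Delta}_2\leq 1$; over all of $\Theta$ one only gets $1+6\sqrt{2}B$. Moreover, no sharper self-concordant-like estimate can make the constant $B$-free: already for the one-dimensional logistic loss with unit features (a valid MNL instance with two reachable states), taking $\theta_1=0$ and $\theta_2=B$ makes the ratio between $\nabla^2\L$ at $\theta_1$ and the integrated Hessian grow linearly in $B$ in the data-dominated regime, so the linear dependence on $\norm{\Delta}_2$ is tight. In other words, the delicate step you identified is exactly the step the paper glosses over, and the clean constant $(1+3\sqrt{2})$ should honestly read $1+3\sqrt{2}\,\norm{\theta_1-\theta_2}_2\leq 1+6\sqrt{2}B$. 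This is benign downstream—the radius $\betah_k$ and the regret bounds already carry $B$-dependent constants, and $\Ot(\cdot)$ absorbs them—but your write-up, with the $B$-dependent constant made explicit, is the more defensible version of this lemma.
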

\begin{proof}
    By the multivariate mean value theorem, we have
    \begin{align*}
        \G_{k,h}(\theta_1) - \G_{k,h}(\theta_2) = \nabla \L_{k,h}(\theta_1) - \nabla \L_{k,h}(\theta_2) = \int_0^1 \nabla^2 \L_{k,h}(\theta_2 + t(\theta_1 - \theta_2)) \mathrm{d}t (\theta_1 - \theta_2).
    \end{align*}
    Hence, we have
    \begin{align*}
        \norm{\G(\theta_1) - \G(\theta_2)}_{G_{k,h}^{-1}(\theta_1, \theta_2)} = \norm{\theta_1 - \theta_2}_{G_{k,h}(\theta_1, \theta_2)}.
    \end{align*}
    where $G_{k,h}(\theta_1, \theta_2) = \int_0^1 \nabla^2 \L_{k,h}(\theta_2 + t(\theta_1 - \theta_2)) \mathrm{d}t$. By self-concordant-like property of $\L_{k,h}$ in Proposition~\ref{prop:self-concordant}, we have $\H_{k,h}(\theta_1) \preceq ({1+ 3\sqrt{2}})  G_{k,h}(\theta_1, \theta_2)$. As a result, we have
    \begin{align*}
        \left\|\theta_1-\theta_2\right\|_{\H_{k,h}\left(\theta_1\right)} & \leq(1+3 \sqrt{2})^{1 / 2}\left\|\theta_1-\theta_2\right\|_{G_{k,h}\left(\theta_1, \theta_2\right)} \\
        & =(1+3 \sqrt{2})^{1 / 2}\left\|\G_{k,h}\left(\theta_1\right)-\G_{k,h}\left(\theta_2\right)\right\|_{G_{k,h}^{-1}\left(\theta_1, \theta_2\right)} \\
        & \leq(1+3 \sqrt{2})\left\|\G_{k,h}\left(\theta_1\right)-\G_{k,h}\left(\theta_2\right)\right\|_{\H_{k,h}^{-1}\left(\theta_1\right)}.
    \end{align*}
    This finishes the proof.
\end{proof}

\subsection{Proof of Lemma~\ref{lem:confidence-set-1}}
\label{sec:proof-confidence-set-1}
\begin{proof}
    Since $\thetah_{k,h}$ minimizes $\L_{k,h}(\theta)$, we have $\G_{k,h}(\thetah_{k,h}) = \bm{0}$. Thus, we have
    \begin{align*}
        \G_{k,h}(\theta_h^*) - \G_{k,h}(\thetah_{k,h}) = \sum_{i=1}^{k-1} \sum_{s' \in \S_{i, h}} (p_{i, h}^{s'}(\theta_h^*) - y_{i, h}^{s'}) \phi_{i, h}^{s'} + \lambda_k \theta_h^*.
    \end{align*}
    Therefore, since $\norm{\theta_h^*} \leq B$ and $\H_{k,h}(\theta_h^*) \succeq \lambda_k I$, for any $\delta \in (0, 1]$, with probability at least $1 - \frac{\delta}{H}$,
    \begin{align*}
        \norm{ \G_{k,h}(\theta_h^*) - \G_{k,h}(\thetah_{k,h})}_{\H_{k,h}^{-1}(\theta_h^*)} \leq \ & \Bignorm{\sum_{i=1}^{k-1} \sum_{s' \in \S_{i, h}} (p_{i, h}^{s'}(\theta_h^*) - y_{i, h}^{s'}) \phi_{i, h}^{s'}}_{\H_{k,h}^{-1}(\theta_h^*)} + \sqrt{\lambda_k} B \\
        \leq \ & \frac{\sqrt{\lambda_k}}{4}+\frac{4}{\sqrt{\lambda_k}} \log \left(\frac{2^d H \det\left(\H_{k,h}(\theta_h^*)\right)^{\frac{1}{2}} \lambda_k^{-\frac{d}{2}}}{\delta}\right) + \sqrt{\lambda_k} B,
    \end{align*}
    where the last inequality holds by the Bernstein-type concentration inequality in Lemma~\ref{lem:concentration-bernstein}. Then, by the determinant inequality in Lemma~\ref{lem:det-trace-inequality}, we have $\det(\H_{k,h}(\theta_h^*)) \leq \left(\lambda_k + \frac{k}{d}\right)^d$. Thus, we obtain
    \begin{align*}
        \norm{ \G_{k,h}(\theta_h^*) - \G_{k,h}(\thetah_{k,h})}_{\H_{k,h}^{-1}(\theta_h^*)} \leq \left(B + \frac{1}{2}\right)\sqrt{\lambda_k} + \frac{2}{\sqrt{\lambda_k}} \log \left(\frac{4}{\delta} \left(1 + \frac{kH}{d \lambda_k}\right)\right).
    \end{align*}
    By the configuration that $\lambda_k = d\log(kH/\delta)$ and applying the union bound for $h\in[H]$, we have with probability at least $1 - \delta$, for all $h \in [H]$ simultaneously, it holds that
    \begin{align*}
        \norm{ \G_{k,h}(\theta_h^*) - \G_{k,h}(\thetah_{k,h})}_{\H_{k,h}^{-1}(\theta_h^*)} \leq (B + 3) \sqrt{d \log(kH/\delta)}.
    \end{align*}
    This finishes the proof.
\end{proof}

\subsection{Proof of Theorem~\ref{thm:upper-bound-1}}
\begin{proof}
    By Lemma~\ref{lem:confidence-set-2}, with probability at least $1 - \delta$, it holds that
    \begin{align*}
        \norm{\thetah_{k,h} - \theta_h^*}_{\H_{k,h}^{-1}(\theta_h^*)} & \leq (1+3\sqrt{2})\norm{\G_{k,h}(\thetah_{k,h}) - \G_{k,h}(\theta_h^*)}_{\H_{k,h}^{-1}(\theta_h^*)} \leq (1+3\sqrt{2})\betah_k.
    \end{align*}
    where the last inequality holds the confidence set $\Ch_{k,h}$ in Lemma~\ref{lem:confidence-set-1}.
    Then, by Lemma~\ref{lem:ucb}, we have
    \begin{align*}
        \bigg|{\sum_{s' \in \S_{h, s, a}} p_{s, a}^{s'}(\thetah_{k,h}) V(s') - \sum_{s' \in \S_{h, s, a}} p_{s, a}^{s'}(\theta_h^*) V(s')}\bigg| \leq \epsilon_{s,a}^\fir + \epsilon_{s,a}^\sec.
    \end{align*}
    where
    \begin{align*}
        \epsilon_{s,a}^\fir  & = (1+3\sqrt{2}) H \betah_k \sum_{s' \in \S_{h, s, a}} p_{s, a}^{s'}(\theta_h^*) \Bignorm{\phi_{s, a}^{s'} - \sum_{s'' \in \S_{h, s, a}} p_{s, a}^{s''}(\theta_h^*) \phi_{s, a}^{s''}}_{\H_{k,h}^{-1}(\theta_h^*)},  \\
        \epsilon_{s,a}^\sec & = 90 H \betah_k^2 \max_{s'} \norm{\phi_{s, a}^{s'}}_{\H_{k,h}^{-1}(\theta_h^*)}^2.
    \end{align*}
    By Lemma~\ref{lem:regret-common}, we have
    \begin{align}
        \sumk V_1^*(s_{k,1}) - \sumk V_1^{\pi_k}(s_{k,1}) & \leq 2 \sumk \sumh  (\epsilon_{k,h}^\fir + \epsilon_{k,h}^\sec) + H \sqrt{2KH \log(2/\delta)} \label{eq:regret-bound-1}
    \end{align}
    Next, we bound $\epsilon_{k,h}^\fir$ and $\epsilon_{k,h}^\sec$ respectively.

    \textbf{Bounding $\epsilon_{k,h}^\fir$.}~~By statement (\uppercase\expandafter{\romannumeral1}) of Lemma~\ref{lem:elliptical-potential-lemma}, we have
    \begin{align*}
        & \sum_{k=1}^K \sum_{s' \in \S_{k, h}} p_{k, h}^{s'}(\theta_h^*) \Bignorm{\phi_{k, h}^{s'} - \sum_{s'' \in \S_{k, h}} p_{k, h}^{s''}(\theta_h^*) \phi_{k, h}^{s''}}_{\H_{k,h}^{-1}(\theta_h^*)} \\
        \leq \  & \sqrt{\sumk \sum_{s' \in \S_{k, h}} p_{k, h}^{s'}(\theta_h^*)} \sqrt{\sumk \sum_{s' \in \S_{k, h}} p_{k, h}^{s'}(\theta_h^*) \Bignorm{\phi_{k, h}^{s'} - \sum_{s'' \in \S_{k, h}} p_{k, h}^{s''}(\theta_h^*) \phi_{k, h}^{s''}}_{\H_{k,h}^{-1}(\theta_h^*)}^2}\\
        \leq \  & \sqrt{2d K \log \left(1 + \frac{K}{d \lambda_K} \right)}
    \end{align*}
    By the configuration that $\betah_k = (B + 3) \sqrt{d \log(k/\delta)}$, we have
    \begin{align}
        \sumk \sumh \epsilon_{k,h}^\fir \leq (1+3\sqrt{2})(B + 3) d H^2 \sqrt{K \log \left(1 + \frac{K}{d \lambda_K} \right)\log \left(\frac{k}{\delta}\right)}.\label{eq:first-term-bound}
    \end{align}

    \textbf{Bounding $\epsilon_{k,h}^\sec$.}~~By statement (\uppercase\expandafter{\romannumeral4}) of Lemma~\ref{lem:elliptical-potential-lemma}, we have
    \begin{align}
        \sumk \sumh \epsilon_{k,h}^\sec & = 90 H^2 \sumk \betah_k^2 \max_{s'} \norm{\phi_{s, a}^{s'}}_{\H_{k,h}^{-1}(\theta_h^*)}^2 \nonumber\\
        & \leq \frac{180}{\kappa} H^2 \betah_K^2 d \log \left(1 + \frac{K}{d \lambda_K}\right) \nonumber\\
        & \leq \frac{180}{\kappa}(B + 3)^2 d^2 H^2 \log \left(1 + \frac{K}{d \lambda_K}\right) \log \left(\frac{K}{\delta}\right) \label{eq:second-term-bound}
    \end{align}
    where the first inequality holds by $sum_{i=1}^k \max_{s' \in \S_{i, h}} \norm{\phi_{i, h}^{s'}}_{\H_{i, h}^{-1}(\theta)}^2 \leq \frac{2}{\kappa} d \log \left(1 + \frac{k}{\lambda_k d}\right), \forall \theta \in \Theta$ in Lemma~\ref{lem:elliptical-potential-lemma} and the second inequality holds by the configuration of $\betah_k$.

    Combining~\eqref{eq:regret-bound-1},~\eqref{eq:first-term-bound}, and~\eqref{eq:second-term-bound}, we have with probability at least $1 - \delta$,
    \begin{align*}
        \Reg(K) &\leq \sqrt{2d K \log \left(1 + \frac{K}{d \lambda_K} \right)} + \frac{180}{\kappa}(B + 3)^2 d^2 H^2 \log \left(1 + \frac{K}{d \lambda_K}\right) \log \left(\frac{K}{\delta}\right) \\
        & \leq \Ot\left(dH^2 \sqrt{K} + \kappa^{-1}d^2 H^2\right).
    \end{align*}
    This finishes the proof.
\end{proof}

\section{Omitted Proofs for Section~\ref{sec:computationally}}

\subsection{Useful Lemma}
\begin{myLemma}
    \label{lem:estimation-error}
    For any $k\in [K]$, $h\in [H]$, define the second-order approximation of the loss function $\ell_{k,h}(\theta)$ at the estimator $\thetat_{k,h}$ as $\ellt_{k,h}(\theta) = \ell_{k,h}(\theta_{k,h}) + \inner{\nabla \ell_{k,h}(\thetat_{k,h})}{\theta - \thetat_{k,h}} + \frac{1}{2} \norm{\theta - \thetat_{k,h}}_{H_{k,h}(\thetat_{k,h})}^2$. Then, for the following update rule
    \begin{align*}
        \thetat_{k+1,h} = \argmin_{\theta \in \Theta} \ellt_{k,h}(\theta) + \frac{1}{2\eta} \norm{\theta - \thetat_{k,h}}_{\H_{k,h}}^2,
    \end{align*}
    it holds that
    \begin{align*}
        \norm{\thetat_{k+1,h} - \theta_h^*}_{\H_{k+1,h}}^2 \leq & 2 \eta \left(\sum_{i=1}^k \ell_{i,h}(\theta_h^*) - \sum_{i=1}^k \ell_{i,h}(\thetat_{i+1,h}) \right) + 4 \lambda B                                     \\
                                                                & + 12 \sqrt{2} B \eta \sum_{i=1}^k \norm{\thetat_{i+1,h} - \thetat_{i,h}}_2^2 - \sum_{i=1}^k \norm{\thetat_{i+1,h} - \thetat_{i,h}}_{\H_{i,h}}^2.
    \end{align*}
\end{myLemma}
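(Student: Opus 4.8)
The plan is to treat the update defining $\thetat_{k+1,h}$ as a single step of online mirror descent in the local norm induced by the look‑ahead Hessian, and then run the classical ``prox inequality $+$ telescoping'' argument, using the self‑concordant‑like structure of the MNL loss to bridge the quadratic surrogate $\ellt_{k,h}$ and the true loss $\ell_{k,h}$. First I would record the first‑order optimality of the iterate: since $\ellt_{k,h}$ is quadratic with Hessian $H_{k,h}(\thetat_{k,h})$, the objective $\theta\mapsto \ellt_{k,h}(\theta)+\tfrac{1}{2\eta}\norm{\theta-\thetat_{k,h}}_{\H_{k,h}}^2$ is strongly convex with respect to the matrix $\tfrac1\eta\Ht_{k,h}$, because $\Ht_{k,h}=\H_{k,h}+\eta H_{k,h}(\thetat_{k,h})$. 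Applying the strong‑convexity/prox inequality over the convex set $\Theta$ with comparator $\theta_h^*$ and multiplying by $2\eta$ yields the one‑step bound
\begin{align*}
2\eta\,\ellt_{k,h}(\thetat_{k+1,h}) + \norm{\thetat_{k+1,h}-\thetat_{k,h}}_{\H_{k,h}}^2 + \norm{\theta_h^*-\thetat_{k+1,h}}_{\Ht_{k,h}}^2 \leq 2\eta\,\ellt_{k,h}(\theta_h^*) + \norm{\theta_h^*-\thetat_{k,h}}_{\H_{k,h}}^2,
\end{align*}
which I would rearrange to isolate $\norm{\theta_h^*-\thetat_{k+1,h}}_{\Ht_{k,h}}^2$ on the left.

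Next I would replace the surrogate losses by the true losses. For the comparator, Lemma~\ref{lem:strongly-convex} (the generalized self‑concordant lower bound, whose modulus $\tfrac{1}{2\eta}$ is matched exactly by the choice $\eta=\tfrac12\log(1+U)+(B+1)$) gives $\ellt_{k,h}(\theta_h^*)\leq \ell_{k,h}(\theta_h^*) + \big(\tfrac12-\tfrac{1}{2\eta}\big)\norm{\theta_h^*-\thetat_{k,h}}_{H_{k,h}(\thetat_{k,h})}^2$; for the new iterate, the self‑concordant‑like property (Proposition~\ref{prop:self-concordant}) lower‑bounds $\ellt_{k,h}(\thetat_{k+1,h})$ by $\ell_{k,h}(\thetat_{k+1,h})$ up to a Taylor remainder controlled by the short displacement $\norm{\thetat_{k+1,h}-\thetat_{k,h}}_2$. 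Substituting both is what produces the term $2\eta\big(\sum_i \ell_{i,h}(\theta_h^*)-\sum_i \ell_{i,h}(\thetat_{i+1,h})\big)$ in the final bound, together with a leftover curvature offset $(\eta-1)\norm{\theta_h^*-\thetat_{k,h}}_{H_{k,h}(\thetat_{k,h})}^2$ on the right‑hand side.

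The crux is the conversion of the look‑ahead norm on the left into a norm that telescopes. Writing $\Ht_{k,h}=\H_{k,h}+\eta H_{k,h}(\thetat_{k,h})$ against $\H_{k+1,h}=\H_{k,h}+H_{k,h}(\thetat_{k+1,h})$, the gap is $\eta\norm{\theta_h^*-\thetat_{k+1,h}}_{H_{k,h}(\thetat_{k,h})}^2-\norm{\theta_h^*-\thetat_{k+1,h}}_{H_{k,h}(\thetat_{k+1,h})}^2$. I would invoke the self‑concordant‑like Hessian stability to compare $H_{k,h}(\thetat_{k+1,h})$ with $H_{k,h}(\thetat_{k,h})$, using $\eta\geq1$ to absorb the comparison factor so that $\norm{\theta_h^*-\thetat_{k+1,h}}_{\Ht_{k,h}}^2\geq \norm{\theta_h^*-\thetat_{k+1,h}}_{\H_{k+1,h}}^2$ up to a benign correction. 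The surplus curvature term this generates, of the form $(\eta-1)\norm{\theta_h^*-\thetat_{k+1,h}}_{H_{k,h}(\thetat_{k,h})}^2$, nearly cancels the offset left over from the comparator conversion after an index shift, their mismatch being of order $\norm{\thetat_{k+1,h}-\thetat_{k,h}}_2$; bounding the residual with $\norm{\theta_h^*-\thetat_{k+1,h}}_2\leq 2B$ and $\norm{\phi}_2\leq 1$ delivers the $12\sqrt2\,B\eta\sum_i\norm{\thetat_{i+1,h}-\thetat_{i,h}}_2^2$ term. I expect this to be the main obstacle: because the MNL model shares a \emph{single} parameter across outcomes, $H_{k,h}(\theta)$ is a difference of two positive semidefinite matrices rather than one reweighted outer product, so the Hessian comparison is more delicate than in the multi‑parameter analysis of~\citet{NeurIPS'23:MLogB}, and one must keep the $3\sqrt2$ self‑concordance constant explicit throughout.

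Finally I would telescope the one‑step inequality over $i=1,\dots,k$. The potentials $\norm{\theta_h^*-\thetat_{i,h}}_{\H_{i,h}}^2$ collapse, the surviving $-\sum_i\norm{\thetat_{i+1,h}-\thetat_{i,h}}_{\H_{i,h}}^2$ from the prox step is carried through unchanged, and the boundary term from the initialization $\thetat_{1,h}=\mathbf 0$ with $\norm{\theta_h^*}_2\leq B$, after collecting the residual curvature offsets, contributes the $4\lambda B$ term. Assembling these pieces gives precisely the claimed bound on $\norm{\thetat_{k+1,h}-\theta_h^*}_{\H_{k+1,h}}^2$.
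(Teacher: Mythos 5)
Your opening prox inequality is correct, and your identification of the modulus match ($\zeta = \log(U+1)+2(B+1) = 2\eta$) is right, but the cancellation scheme at the heart of your argument does not close. Your comparator conversion leaves the positive offset $(\eta-1)\norm{\theta_h^*-\thetat_{k,h}}_{H_{k,h}(\thetat_{k,h})}^2$ on the right-hand side of step $k$, and you propose to cancel it "after an index shift" against the surplus $(\eta-1)\norm{\theta_h^*-\thetat_{k+1,h}}_{H_{k,h}(\thetat_{k,h})}^2$ generated at step $k$ by the look-ahead norm conversion. But the shifted offset at step $k+1$ is $(\eta-1)\norm{\theta_h^*-\thetat_{k+1,h}}_{H_{k+1,h}(\thetat_{k+1,h})}^2$, and $H_{k+1,h}(\cdot)$ and $H_{k,h}(\cdot)$ are Hessians of \emph{different} per-episode losses, built from the features of different visited state--action pairs; their difference is $\O(1)$ in operator norm no matter how close $\thetat_{k+1,h}$ is to $\thetat_{k,h}$. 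So the mismatch is not of order $\norm{\thetat_{k+1,h}-\thetat_{k,h}}_2$ as you claim: it is of order $(\eta-1)B^2$ \emph{per episode}, which accumulates to a term linear in $k$ and destroys the bound. Cancelling within the same step fares no better: writing $u=\theta_h^*-\thetat_{k+1,h}$ and $\Delta_k = \thetat_{k+1,h}-\thetat_{k,h}$, the residual contains the cross term $2(\eta-1)\inner{u}{\Delta_k}_{H_{k,h}(\thetat_{k,h})}$, which is \emph{linear} in $\norm{\Delta_k}_2$; absorbing it into the available negative term $-\norm{\Delta_k}_{\H_{k,h}}^2$ via AM--GM leaks an $\O(\eta^2B^2/\lambda)$ constant per episode, again linear in $k$. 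The root cause is structural: the second-order surrogate $\ellt_{k,h}$ centered at $\thetat_{k,h}$ is \emph{not} a global lower bound on $\ell_{k,h}$ (the log-loss is only $\tfrac{1}{2\eta}$-strongly convex relative to its local Hessian), so evaluating the surrogate at the far-away comparator $\theta_h^*$ necessarily incurs a curvature overestimate that your scheme cannot pay for.

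The paper's proof avoids ever evaluating the surrogate at the comparator. It combines the implicit-OMD inequality (Lemma~\ref{lem:implicit-omd}), which involves $\nabla\ellt_{i,h}$ at the \emph{new} iterate $\thetat_{i+1,h}$, with Lemma~\ref{lem:strongly-convex} applied to the \emph{true} loss expanded at $\thetat_{i+1,h}$:
\begin{align*}
\ell_{i,h}(\thetat_{i+1,h})-\ell_{i,h}(\theta_h^*) \leq \inner{\nabla \ell_{i,h}(\thetat_{i+1,h})}{\thetat_{i+1,h}-\theta_h^*} - \frac{1}{2\eta}\norm{\thetat_{i+1,h}-\theta_h^*}_{H_{i,h}(\thetat_{i+1,h})}^2.
\end{align*}
The negative curvature term here is exactly the increment $\H_{i+1,h}-\H_{i,h} = H_{i,h}(\thetat_{i+1,h})$, so it upgrades the OMD potential from $\H_{i,h}$ to $\H_{i+1,h}$ with no Hessian-comparison step and no leftover offset. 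The only mismatch is the gradient difference $\nabla\ell_{i,h}(\thetat_{i+1,h})-\nabla\ellt_{i,h}(\thetat_{i+1,h})$, which by the mean value theorem equals a third-derivative form $D^3\ell_{i,h}(\xi)[\Delta_i]\Delta_i$ and is therefore genuinely \emph{quadratic} in $\Delta_i$; the self-concordant-like property (Proposition~\ref{prop:self-concordant}) then bounds its inner product with $\thetat_{i+1,h}-\theta_h^*$ by $6\sqrt{2}B\norm{\Delta_i}_2^2$, which after multiplying by $\zeta=2\eta$ is precisely the $12\sqrt{2}B\eta\sum_i\norm{\thetat_{i+1,h}-\thetat_{i,h}}_2^2$ term in the statement. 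If you re-anchor your strong-convexity step at the look-ahead point rather than at $\thetat_{k,h}$, your argument collapses into the paper's.
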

\begin{proof}
    Based on the analysis of (implicit) OMD update (see Lemma~\ref{lem:implicit-omd}), for any $i \in [K]$, we have
    \begin{align*}
        \big\langle\nabla \tilde{\ell}_{i, h} (\thetat_{i+1,h}), \thetat_{i+1,h}-\theta_h^*\big\rangle \leqslant \frac{1}{2 \eta}\left(\|\thetat_{i, h}-\theta_h^*\|_{\H_{i, h}}^2-\|\thetat_{i+1,h}-\theta_h^*\|_{\H_{i, h}}^2-\|\thetat_{i+1,h}-\thetat_{i, h}\|_{\H_{i, h}}^2\right)
    \end{align*} 
    According to Lemma~\ref{lem:strongly-convex}, we have
    \begin{align*}
        \ell_{i, h}(\thetat_{i+1,h})-\ell_{i, h}\left(\theta_h^*\right) \leqslant\big\langle\nabla \ell_{i, h}(\thetat_{i+1,h}), \thetat_{i+1,h}-\theta_h^*\big\rangle-\frac{1}{\zeta}\left\|\thetat_{i+1,h}-\theta_h^*\right\|_{\nabla^2 \ell_{i, h}(\thetat_{i+1,h})}^2,
    \end{align*}  
    where $\zeta=\log (K+1)+4$. Then, by combining the above two inequalities, we have
    \begin{align*}
        \ell_{i, h}(\thetat_{i+1,h})-\ell_{i, h}(\theta_h^*) & \leqslant\langle\nabla \ell_{i, h}(\thetat_{i+1,h})-\nabla \tilde{\ell}_{i, h}(\thetat_{i+1,h}), \thetat_{i+1,h}-\theta_h^*\rangle \\
        & +\frac{1}{\zeta} \Big(\|\thetat_{i, h}-\theta_h^*\|_{\H_{i, h}}^2-\|\thetat_{i+1,h}-\theta_h^*\|_{\H_{i+1,h}}^2-\|\thetat_{i+1,h}-\thetat_{i, h}\|_{\H_{i, h}}^2 \Big).
    \end{align*}
    We can further bound the first term of the right-hand side as:
    \begin{align*}
         & \big \langle\nabla \ell_{i, h}(\thetat_{i+1,h})-\nabla \tilde{\ell}_{i, h}(\thetat_{i+1,h}), \thetat_{i+1,h}-\theta_h^*\big\rangle \\
        = \ &\big \langle\nabla \ell_{i, h}(\thetat_{i+1,h})-\nabla \ell_{i, h}(\thetat_{i, h})-\nabla^2 \ell_{i, h}(\thetat_{i, h})(\thetat_{i+1,h}-\thetat_{i, h}), \thetat_{i+1,h}-\theta_h^*\big\rangle \\
        = \ & \big \langle D^3 \ell_{i, h}({\xi}_{i+1})[\thetat_{i+1,h}-\thetat_{i, h}](\thetat_{i+1,h}-\thetat_{i, h}), \thetat_{i+1,h}-\theta_h^*\big\rangle \\
        \leqslant \ & 3 \sqrt{2}\big \|\thetat_{i+1,h}-\theta_h^*\big\|_2 \big \|\thetat_{i+1,h}-\thetat_{i, h}\big\|_{\nabla^2 \ell_{i, h}({\xi}_{i+1})}^2 \\
        \leqslant \ & 6 \sqrt{2}B \big \|\thetat_{i+1,h}-\thetat_{i, h}\big\|_2^2,
    \end{align*}
    where the second equality holds by the mean value theorem, the first inequality holds by the self-concordant-like property of $\ell_{i, h}(\cdot)$ in Proposition~\ref{prop:self-concordant}, and the last inequality holds by $\thetat_{i+1,h}$ and $\theta_h^*$ belong to $\Theta = \{\theta \in \R^d, \norm{\theta}_2 \leq B\}$, and ${\nabla^2 \ell_{i, h}({\xi}_{i+1})} \preceq I_d$.

    Then, by taking the summation over $i$ and rearranging the terms, we obtain
    \begin{align*}
        \big\|\thetat_{k+1,h}-\theta_h^*\big\|_{\H_{k+1, h}}^2 
        \leqslant \ & \zeta\left(\sum_{s=1}^k \ell_{s, h}\left(\theta_h^*\right)-\sum_{s=1}^k \ell_{s, h}(\thetat_{i+1,h})\right)+\|\thetat_{1,h}-\theta_h^*\|_{\H_{1,h}}^2\\
        & +6 \sqrt{2}B \zeta \sum_{s=1}^k\big\|\thetat_{i+1,h}-\thetat_{i, h}\big\|_2^2 -\sum_{s=1}^k\big\|\thetat_{i+1,h}-\thetat_{i, h}\big\|_{\H_{i, h}}^2 \\
        \leqslant \ & \zeta\left(\sum_{s=1}^k \ell_{s, h}\left(\theta_h^*\right)-\sum_{s=1}^k \ell_{s, h}\left(\thetat_{i+1,h}\right)\right)+4 \lambda B \\
        & +6 \sqrt{2}B \zeta \sum_{s=1}^k\big\|\thetat_{i+1,h}-\thetat_{i, h}\big\|_2^2 -\sum_{s=1}^k\big\|\thetat_{i+1,h}-\thetat_{i, h}\big\|_{\H_{i, h}}^2,
    \end{align*}
    where the last inequality holds by $\|\thetat_{1,h}-\theta_h^*\|_{\H_{1,h}}^2 \leq \lambda \|\thetat_{1,h}-\theta_h^*\|_2^2 \leq 4 \lambda B$. Plugging $\zeta = 2\eta$ finishes the proof. 
\end{proof}

\subsection{Proof of Lemma~\ref{lem:confidence-set-3}}

\begin{proof}
    According to Lemma~\ref{lem:estimation-error}, we have
    \begin{align*}
        \norm{\thetat_{k+1,h} - \theta_h^*}_{\H_{k+1,h}}^2 \leq & 2 \eta \left(\sum_{i=1}^k \ell_{i,h}(\theta_h^*) - \sum_{i=1}^k \ell_{i,h}(\thetat_{i+1,h}) \right) + 4 \lambda B                                     \\
                                                                & + 12 \sqrt{2} B \eta \sum_{i=1}^k \norm{\thetat_{i+1,h} - \thetat_{i,h}}_2^2 - \sum_{i=1}^k \norm{\thetat_{i+1,h} - \thetat_{i,h}}_{\H_{i,h}}^2.
    \end{align*}
    We bound the right-hand side of the above lemma separately in the following. The most challenging part is to bound the term $\sum_{i=1}^k \ell_{i,h}(\theta_h^*) - \sum_{i=1}^k \ell_{i,h}(\thetat_{i+1,h})$. At first glance, this term appears straightforward to control, as it can be observed that $\theta_h^* = \argmin_{\theta \in \R^d} \bar{\ell}_h(\theta) \triangleq \E{y_{i,h}}[\ell_{i,h}(\theta)]$, where the instantaneous loss $\ell_{i,h}(\theta)$ serves as an empirical observation of $\bar{\ell}_h(\theta)$. Consequently, the loss gap term seemingly can be bounded using appropriate concentration results. However, a caveat lies in the fact that the update of the estimator $\tilde{\theta}_{i+1,h}$ depends on the information $\ell_{i,h}$, or more precisely $y_{i,h}$, making it difficult to directly apply such concentration results.
    
    To address this issue, we decompose the loss gap into two components by introducing an intermediate term. Specifically, we define the softmax function as $[\sigma_{k,h}(z)]_{s} = \frac{\exp([z])_{s}}{1 + \sum_{s \in \Sd_{k,h}} \exp([z])_{s}}, \forall s \in \S_{k,h}$. Using this definition, the loss function can be rewritten as:
    \begin{align*}
        \ell_{k,h}(z_{k,h}, y_{k,h}) = \sum_{s' \in \S_{k,h}} \indicator[y_{k,h}^{s'} = 1] \log \left(\frac{1}{[\sigma_{k,h}(z_{k,h})]_{s'}}\right).
    \end{align*}
    Define a pseudo-inverse function of $\sigma_{k,h}(\cdot)$ as $[\sigma_{k,h}^{-1}(p)]_{s'} = \log\left(\frac{[p]_{s'}}{1 - \norm{p}_1}\right), \forall p \in \{ p\in[0, 1]^{S_{k,h}} \mid \norm{p}_1 < 1\}$.
    Then, the loss gap term can be decomposed into two parts as follows.
    \begin{align*}
             & \sum_{i=1}^k \left(\ell_{i,h}(\theta_h^*) - \ell_{i,h}(\thetat_{i+1,h}) \right)  \\
             = \ & \underbrace{\sum_{i=1}^k \left(\ell_{i,h}(\theta_h^*) - \ell_{i,h}(z_{i,h}, y_{i,h})\right)}_{\term a} + \underbrace{\sum_{i=1}^k \left(\ell_{i,h}(z_{i,h}, y_{i,h}) - \ell_{i,h}(\thetat_{i+1,h})\right)}_{\term b}
    \end{align*}
    where $z_{k,h} = \sigma^{-1}_{k,h}(\E_{\theta \sim P_{k,h}} [\sigma_{k,h}((\phi_{k,h}^{s'})^\top \theta)_{s' \in \Sd_{k,h}}])$, $P_{k,h} \triangleq \N(\thetat_{k,h}, (1+ c \H_{k,h}^{-1}))$ is the Gaussian distribution with mean $\thetat_{k,h}$ and covariance $(1+ c \H_{k,h}^{-1})$ where $c$ is a constant to be specified later.

    The design of the intermediate term was originally proposed by~\citet{NeurIPS'23:MLogB} in their study of the multiclass logistic bandit problem and was subsequently applied to the multinomial logit bandits problem by~\citet{NeurIPS'24:Lee-Optimal-MNL}. Notably, the intermediate loss is independent of the information contained in $y_{i,h}$, enabling the application of concentration results. Specifically, based on Lemma F.2 and Lemma F.3 of~\citet{NeurIPS'24:Lee-Optimal-MNL}, we obtain the following upper bounds for them.

    For term (a), let $\delta \in (0, 1]$ and  $\lambda\geq\max\{2, 72cd\}$, for all $k \in [K], h \in [H]$, with probability at least $1 - \delta$, we have
    \begin{align*}
        & \term a \\
        \leq \ & (3 \log (1+(U+1) k)+3)\left(\frac{17}{16} \lambda+2 \sqrt{\lambda} \log \left(\frac{2 H \sqrt{1+2 k}}{\delta}\right)+16\left(\log \left(\frac{2H \sqrt{1+2 k}}{\delta}\right)\right)^2\right)+2.
    \end{align*}

    For term (b), for all $k \in [K], h \in [H]$, we have
    \begin{align*}
        \term b \leq \frac{1}{2 c} \sum_{i=1}^k \left\|\thetat_{i,h} - \theta_{i+1, h}\right\|_{\H_{i,h}}^2+\sqrt{6} c d \log \left(1+\frac{k+1}{2 \lambda}\right).
    \end{align*}

    Combining term (a) and term (b), we have
    \begin{align*}
                & \norm{\thetat_{k+1,h} - \theta_h^*}_{\H_{k+1,h}}^2                                                                                                                                                                                                \\
        \leq \  & 2 \eta \Bigg[(3 \log (1+(U+1) k)+3)\left(\frac{17}{16} \lambda+2 \sqrt{\lambda} \log \left(\frac{2 H \sqrt{1+2 k}}{\delta}\right)+16\left(\log \left(\frac{2H \sqrt{1+2 k}}{\delta}\right)\right)^2\right)                                        \\
        +       & 2 +\sqrt{6} c d \log \left(1+\frac{k+1}{2 \lambda}\right)\Bigg] + 4\lambda B + 12 \sqrt{2} B \eta \sum_{i=1}^k \norm{\thetat_{i+1,h} - \thetat_{i,h}}_2^2 + (\frac{\eta}{c} - 1) \sum_{i=1}^k \norm{\thetat_{i+1,h} + \thetat_{i,h}}_{\H_{i,h}}^2 \\
        \leq \  & 2 \eta \Bigg[(3 \log (1+(U+1) k)+3)\left(\frac{17}{16} \lambda+2 \sqrt{\lambda} \log \left(\frac{2 H \sqrt{1+2 k}}{\delta}\right)+16\left(\log \left(\frac{2H \sqrt{1+2 k}}{\delta}\right)\right)^2\right)                                        \\
                & + 2 +\sqrt{6} c d \log \left(1+\frac{k+1}{2 \lambda}\right)\Bigg] + 4\lambda B,
    \end{align*}
    where the second inequality holds by setting $c = {7\eta}/{6}$ and $\lambda \geq \max\{84 \sqrt{2} \eta B, 84d \eta\}$, we have
    \begin{align*}
                & 12 \sqrt{2} B \eta \sum_{i=1}^k \norm{\thetat_{i+1,h} - \thetat_{i,h}}_2^2 + \Big(\frac{\eta}{c} - 1 \Big) \sum_{i=1}^k \norm{\thetat_{i+1,h} + \thetat_{i,h}}_{\H_{i,h}}^2          \\
        = \     & 12 \sqrt{2} B \eta \sum_{i=1}^k \norm{\thetat_{i+1,h} - \thetat_{i,h}}_2^2 - \frac{1}{7} \sum_{i=1}^k \norm{\thetat_{i+1,h} + \thetat_{i,h}}_{\H_{i,h}}^2                            \\
        \leq \  & \Big(12 \sqrt{2} B \eta -\frac{\lambda}{7}\Big) \sum_{i=1}^k \norm{\thetat_{i+1,h} - \thetat_{i,h}}_2^2                                                                     \\
        \leq  \  & 0.
    \end{align*}
    Thus, by setting $\eta = \frac{1}{2} \log(U+1) + (B+1)$, $\lambda = 84 \sqrt{2} \eta (B+d)$, we have 
    \begin{align*} 
        \norm{\thetat_{k+1,h} - \theta_h^*}_{\H_{k+1,h}} \leq \O \big(\sqrt{d} \log U \log(kH/\delta) \big) \triangleq \betat_k.
    \end{align*}
    This finishes the proof.
\end{proof}

\subsection{Proof of Lemma~\ref{lem:ucb-3}}
\begin{proof}
    Lemma~\ref{lem:ucb-3} follows directly by substituting the confidence set $\Ch_{k,h}$ defined in Lemma~\ref{lem:confidence-set-3}, into Lemma~\ref{lem:ucb}. This finishes the proof.
\end{proof}

\subsection{Proof of Theorem~\ref{thm:upper-bound-3}}
\begin{proof}
    Combining Lemma~\ref{lem:ucb-3} and Lemma~\ref{lem:regret-common}, we have
    \begin{align*}
        \sumk V_1^*(s_{k,1}) - \sumk V_1^{\pi_k}(s_{k,1}) & \leq 2 \sumk \sumh  (\epsilon_{k,h}^\first + \epsilon_{k,h}^\second) + H \sqrt{2KH \log(2/\delta)}
    \end{align*}
    where
    \begin{align*}
        \epsilon_{k,h}^\first  & = H \betat_k \sum_{s' \in \S_{k, h}} p_{k, h}^{s'}(\thetat_{k,h}) \Bignorm{\phi_{k, h}^{s'} - \sum_{s'' \in \S_{k, h}} p_{k, h}^{s''}(\thetat_{k,h}) \phi_{k, h}^{s''}}_{\H_{k,h}^{-1}}, \\
        \epsilon_{k,h}^\second & = \frac{5}{2} H \betat_k^2 \max_{s' \in \S_{k, h}} \norm{\phi_{k, h}^{s'}}_{\H_{k,h}^{-1}}^2.
    \end{align*}
    Next, we bound $\epsilon_{k,h}^\first$ and $\epsilon_{k,h}^\second$ respectively.

    \textbf{Bounding $\epsilon_{k,h}^\first$.}~~For simplicity, we denote
    \begin{align*}
        \E_{\theta}[\phi_{k, h}^{s'}]  = \E_{s' \sim p_{k, h}^s(\theta)}[\phi_{k, h}^{s'}], \quad \phib_{s,a}^{s'} = \phi_{s,a}^{s'} - \E_{\thetat_{k,h}}[\phi_{k, h}^{s'}], \quad \phit_{s,a}^{s'} = \phi_{s,a}^{s'} - \E_{\thetat_{k+1,h}}[\phi_{k, h}^{s'}]
    \end{align*}

    Then, we have
    \begin{align*}
                & \sum_{s' \in \S_{k, h}} p_{k, h}^{s'}(\thetat_{k,h}) \Bignorm{\phi_{k, h}^{s'} - \sum_{s'' \in \S_{k, h}} p_{k, h}^{s''}(\thetat_{k,h}) \phi_{k, h}^{s''}}_{\H_{k,h}^{-1}} = \sum_{s' \in \S_{k, h}} p_{k, h}^{s'}(\thetat_{k,h}) \bignorm{\phib_{k, h}^{s'}}_{\H_{k,h}^{-1}}                     \\
        \leq \  & \sum_{s' \in \S_{k, h}} p_{k, h}^{s'}(\thetat_{k,h}) \bignorm{\phib_{k, h}^{s'} - \phit_{k, h}^{s'}}_{\H_{k,h}^{-1}} + \sum_{s' \in \S_{k, h}} p_{k, h}^{s'}(\thetat_{k,h}) \bignorm{\phit_{k, h}^{s'}}_{\H_{k,h}^{-1}}                                                                           \\
        = \     & \underbrace{\sum_{s' \in \S_{k, h}} p_{k, h}^{s'}(\thetat_{k,h}) \bignorm{\phib_{k, h}^{s'} - \phit_{k, h}^{s'}}_{\H_{k,h}^{-1}}}_{\term c} +  \underbrace{\sum_{s' \in \S_{k, h}} (p_{k, h}^{s'}(\thetat_{k,h}) - p_{k, h}^{s'}(\thetat_{k+1,h})) \bignorm{\phit_{k, h}^{s'}}_{\H_{k,h}^{-1}}}_{\term d} \\
                & + \underbrace{\sum_{s' \in \S_{k, h}} p_{k, h}^{s'}(\thetat_{k+1,h}) \bignorm{\phit_{k, h}^{s'}}_{\H_{k,h}^{-1}}}_{\term e}.
    \end{align*}
    We bound these terms separately in the following.

    For the first term $(c)$, we have
    \begin{align*}
                & \bignorm{\phib_{k, h}^{s'} - \phit_{k, h}^{s'}}_{\H_{k,h}^{-1}}                                                                                                                                                                                                                    \\
        = \     & \Bignorm{\sum_{s'' \in \S_{k, h}} \left(p_{k, h}^{s''} (\thetat_{k+1,h}) - p_{k, h}^{s''} (\thetat_{k,h})\right) \phi_{k, h}^{s''}}_{\H_{k,h}^{-1}}                                                                                                                                \\
        = \     & \Bignorm{\sum_{s'' \in \S_{k, h}} \left(\nabla p_{k, h}^{s''} (\xi_{k,h})^\top (\thetat_{k+1,h} - \thetat_{k,h})\right) \phi_{k, h}^{s''}}_{\H_{k,h}^{-1}}                                                                                                                         \\
        \leq \  & \sum_{s'' \in \S_{k, h}} \Bigabs{\nabla p_{k, h}^{s''} (\xi_{k,h})^\top (\thetat_{k+1,h} - \thetat_{k,h})} \cdot \bignorm{\phi_{k, h}^{s''}}_{\H_{k,h}^{-1}}                                                                                                                             \\
        = \     & \sum_{s'' \in \S_{k, h}} \Bigabs{\Big( p_{k, h}^{s''} (\xi_{k,h}) \phi_{k, h}^{s''} - p_{k, h}^{s''} (\xi_{k,h}) \sum_{s''' \in \S_{k, h}} p_{k, h}^{s'''}(\xi_{k,h}) \phi_{k, h}^{s'''} \Big)^\top (\thetat_{k+1,h} - \thetat_{k,h})} \cdot \bignorm{\phi_{k, h}^{s''}}_{\H_{k,h}^{-1}} \\
        \leq \  & \sum_{s'' \in \S_{k, h}}  p_{k, h}^{s''} (\xi_{k,h}) \Bigabs{\big( \phi_{k, h}^{s''}\big)^\top (\thetat_{k+1,h} - \thetat_{k,h})} \cdot \bignorm{\phi_{k, h}^{s''}}_{\H_{k,h}^{-1}}                                                                                                      \\
                & \hspace{10mm} + \sum_{s'' \in \S_{k, h}} p_{k, h}^{s''} (\xi_{k,h}) \bignorm{\phi_{k, h}^{s''}}_{\H_{k,h}^{-1}} \sum_{s''' \in \S_{k, h}} p_{k, h}^{s'''}(\xi_{k,h}) \cdot \bigabs {(\phi_{k, h}^{s'''})^\top (\thetat_{k+1,h} - \thetat_{k,h})}                                         \\
        \leq \  & \sum_{s'' \in \S_{k, h}}  p_{k, h}^{s''} (\xi_{k,h}) \bignorm{\thetat_{k+1,h} - \thetat_{k,h}}_{\H_{k,h}} \bignorm{\phi_{k, h}^{s''}}_{\H_{k,h}^{-1}}^2                                                                                                                            \\
                & \hspace{10mm} + \sum_{s'' \in \S_{k, h}} p_{k, h}^{s''} (\xi_{k,h}) \bignorm{\phi_{k, h}^{s''}}_{\H_{k,h}^{-1}} \sum_{s''' \in \S_{k, h}} p_{k, h}^{s'''}(\xi_{k,h}) \bignorm {\phi_{k, h}^{s'''}}_{\H_{k,h}^{-1}} \bignorm{\thetat_{k+1,h} - \thetat_{k,h}}_{\H_{k,h}}            \\
        \leq \  & \frac{4\eta}{\sqrt{\lambda}}\sum_{s'' \in \S_{k, h}}  p_{k, h}^{s''} (\xi_{k,h}) \bignorm{\phi_{k, h}^{s''}}_{\H_{k,h}^{-1}}^2  +  \frac{4\eta}{\sqrt{\lambda}} \Big(\sum_{s'' \in \S_{k, h}} p_{k, h}^{s''} (\xi_{k,h}) \bignorm{\phi_{k, h}^{s''}}_{\H_{k,h}^{-1}}\Big)^2        \\
        \leq \  & \frac{8\eta}{\sqrt{\lambda}}\sum_{s'' \in \S_{k, h}}  p_{k, h}^{s''} (\xi_{k,h}) \bignorm{\phi_{k, h}^{s''}}_{\H_{k,h}^{-1}}^2                                                                                                                                                     \\
        \leq \  & \frac{8\eta}{\sqrt{\lambda}}\max_{s'' \in \S_{k, h}} \bignorm{\phi_{k, h}^{s''}}_{\H_{k,h}^{-1}}^2
    \end{align*}
    where and the fourth inequality is because by Lemma~\ref{lem:one-step-omd} and the fact $\Ht_{k,h} \succeq \H_{k,h} \succeq \lambda I_d$,  we have
    \begin{align*}
        \bignorm{\thetat_{k+1,h} - \thetat_{k,h}}_{\H_{k,h}} \leq \bignorm{\thetat_{k+1,h} - \thetat_{k,h}}_{\Ht_{k,h}} \leq  2\eta \norm{\nabla \ell_{k,h}(\thetat_{k,h})}_{\Ht_{k,h}^{-1}} \leq \frac{2\eta}{\sqrt{\lambda}}\norm{\nabla \ell_{k,h}(\thetat_{k,h})}_2,
    \end{align*}
    and since $\nabla \ell_{k,h}(\theta) = \sum_{s' \in \S_{k,h}} (p_{k,h}^{s'}(\theta) - y_{k,h}^{s'}) \phi_{k,h}^{s'}$, we have
    \begin{align*}
        \norm{\nabla \ell_{k,h}(\thetat_{k,h})}_2 \leq \Bignorm{\sum_{s' \in \S_{k,h}} p_{k,h}^{s'}(\thetat_{k,h}) \phi_{k,h}^{s'}}_2  + \Bignorm{\sum_{s' \in \S_{k,h}} y_{k,h}^{s'} \phi_{k,h}^{s'}}_2  \leq 2 \max_{s' \in \S_{k,h}} \bignorm{\phi_{k,h}^{s'}}_2 \leq 2.
    \end{align*}

    Therefore, we have
    \begin{align}
        \sumk \sumh \sum_{s' \in \S_{k, h}} p_{k, h}^{s'}(\thetat_{k,h}) \bignorm{\phib_{k, h}^{s'} - \phit_{k, h}^{s'}}_{\H_{k,h}^{-1}}\leq \  & \frac{8\eta}{\sqrt{\lambda}} \sumk \sumh \sum_{s' \in \S_{k, h}} p_{k, h}^{s'}(\thetat_{k,h}) \max_{s'' \in \S_{k, h}} \bignorm{\phi_{k, h}^{s''}}_{\H_{k,h}^{-1}}^2 \nonumber \\
        \leq \                                                                                                                                  & \frac{8\eta}{\sqrt{\lambda}} \sumk \sumh \sum_{s' \in \S_{k, h}} \max_{s'' \in \S_{k, h}} \bignorm{\phi_{k, h}^{s''}}_{\H_{k,h}^{-1}}^2    \nonumber                           \\
        \leq \                                                                                                                                  & \frac{16H \eta }{\kappa \sqrt{\lambda}} d \log \left(1 + \frac{K}{d \lambda}\right),\label{eq:term-c}
    \end{align}
    where the last inequality holds by $\sum_{i=1}^k \max_{s' \in \S_{i, h}} \norm{\phi_{i, h}^{s'}}_{\H_{i, h}^{-1}}^2 \leq \frac{2}{\kappa} d \log \left(1 + \frac{k}{\lambda_k d}\right)$ in Lemma~\ref{lem:elliptical-potential-lemma}.

    For the term $(d)$, by similar analysis, we have
    \begin{align*}
                & (p_{k, h}^{s'}(\thetat_{k,h}) - p_{k, h}^{s'}(\thetat_{k+1,h})) \bignorm{\phit_{k, h}^{s'}}_{\H_{k,h}^{-1}}                                                                                                                                                                                                                     \\
        =  \    & \nabla p_{k, h}^{s'}(\xi_{k,h})^\top (\thetat_{k,h} - \thetat_{k+1,h}) \bignorm{\phit_{k, h}^{s'}}_{\H_{k,h}^{-1}}                                                                                                                                                                                                              \\
        =  \    & \Big( p_{k, h}^{s'} (\xi_{k,h}) \phi_{k, h}^{s'} - p_{k, h}^{s'} (\xi_{k,h}) \sum_{s'' \in \S_{k, h}} p_{k, h}^{s''}(\xi_{k,h}) \phi_{k, h}^{s''} \Big)^\top (\thetat_{k+1,h} - \thetat_{k,h})\bignorm{\phit_{k, h}^{s'}}_{\H_{k,h}^{-1}}                                                                                       \\
        \leq \  & \frac{4\eta}{\sqrt{\lambda}} \Big(p_{k, h}^{s'} (\xi_{k,h}) \norm{\phi_{k, h}^{s'}}_{\H_{k,h}^{-1}} \bignorm{\phit_{k, h}^{s'}}_{\H_{k,h}^{-1}} + p_{k, h}^{s'} (\xi_{k,h}) \bignorm{\phit_{k, h}^{s'}}_{\H_{k,h}^{-1}} \sum_{s'' \in \S_{k, h}} p_{k, h}^{s''}(\xi_{k,h}) \bignorm{\phi_{k, h}^{s''}}_{\H_{k,h}^{-1}} \Big) \\
        \leq \  & \frac{4\eta}{\sqrt{\lambda}}  \left(\max_{s'' \in \S_{k, h}} \norm{\phi_{k, h}^{s''}}_{\H_{k,h}^{-1}} \bignorm{\phit_{k, h}^{s''}}_{\H_{k,h}^{-1}} + \max_{s'' \in \S_{k, h}} \bignorm{\phit_{k, h}^{s''}}_{\H_{k,h}^{-1}} \max_{s''' \in \S_{k, h}}  \bignorm{\phi_{k, h}^{s'''}}_{\H_{k,h}^{-1}} \right)                      \\
        \leq \  & \frac{2 \eta}{\sqrt{\lambda}} \max_{s'' \in \S_{k, h}} \left({\bignorm{\phi_{k, h}^{s''}}_{\H_{k,h}^{-1}}^2 + \bignorm{\phit_{k, h}^{s''}}_{\H_{k,h}^{-1}}^2}\right)                                                                                                                                                                  \\
                & \hspace{20mm} + \frac{2\eta}{\sqrt{\lambda}} \left(\Big(\max_{s'' \in \S_{k, h}} \bignorm{\phit_{k, h}^{s''}}_{\H_{k,h}^{-1}}\Big)^2 + \Big(\max_{s''' \in \S_{k, h}}  \bignorm{\phi_{k, h}^{s'''}}_{\H_{k,h}^{-1}}\Big)^2\right)                                                                                                       \\
        \leq \  & \frac{8\eta}{\sqrt{\lambda}} \max \left\{ \max_{s'' \in \S_{k, h}} \bignorm{\phi_{k, h}^{s''}}_{\H_{k,h}^{-1}}^2, \max_{s'' \in \S_{k, h}} \bignorm{\phit_{k, h}^{s''}}_{\H_{k,h}^{-1}}^2 \right\},
    \end{align*}
    where the third inequality holds by the AM-GM inequality. Thus, we have
    \begin{align}
                & \sumk \sumh \sum_{s' \in \S_{k, h}} (p_{k, h}^{s'}(\thetat_{k,h}) - p_{k, h}^{s'}(\thetat_{k+1,h})) \bignorm{\phit_{k, h}^{s'}}_{\H_{k,h}^{-1}}  \nonumber                                                                                                                                                              \\
        \leq \  & \frac{8\eta}{\sqrt{\lambda}} \sumk \sumh \max \left\{ \max_{s'' \in \S_{k, h}} \bignorm{\phi_{k, h}^{s''}}_{\H_{k,h}^{-1}}^2, \max_{s'' \in \S_{k, h}} \bignorm{\phit_{k, h}^{s''}}_{\H_{k,h}^{-1}}^2 \right\} \nonumber\\
         \leq \  & \frac{16 H \eta }{\kappa\sqrt{\lambda}} d \log \left(1 + \frac{K}{d \lambda}\right),\label{eq:term-d}
    \end{align}
    where the last inequality holds by $\sum_{i=1}^k \max_{s' \in \S_{i, h}} \norm{\phit_{i, h}^{s'}}_{\H_{i, h}^{-1}}^2 \leq \frac{2}{\kappa} d \log \left(1 + \frac{k}{\lambda_k d}\right)$ in Lemma~\ref{lem:elliptical-potential-lemma}.

    Finally, we bound the term $(e)$ as follows.
    \begin{align}
                & \sumk \sum_{s' \in \S_{k, h}} p_{k, h}^{s'}(\thetat_{k+1,h}) \bignorm{\phit_{k, h}^{s'}}_{\H_{k,h}^{-1}}  \nonumber                                                                             \\
        \leq \  & \sqrt{\sumk \sum_{s' \in \S_{k, h}} p_{k, h}^{s'}(\thetat_{k+1,h})} \sqrt{\sumk \sum_{s' \in \S_{k, h}} p_{k, h}^{s'}(\thetat_{k+1,h}) \bignorm{\phit_{k, h}^{s'}}_{\H_{k,h}^{-1}}^2} \nonumber \\
        \leq \  & \sqrt{K} \sqrt{2d \log \left(1 + \frac{K}{d \lambda} \right)},\label{eq:term-e}
    \end{align}
    where the first inequality holds by the Cauchy-Schwarz inequality and the last holds by Lemma~\ref{lem:elliptical-potential-lemma}.

    Thus, combining~\eqref{eq:term-c},~\eqref{eq:term-d}, and~\eqref{eq:term-e}, we have
    \begin{align}
        \sumk \sumh \epsilon_{k,h}^\first & = H \betat_K \sumk \sumh \sum_{s' \in \S_{k, h}} p_{k, h}^{s'}(\thetat_{k,h}) \Bignorm{\phi_{k, h}^{s'} - \sum_{s'' \in \S_{k, h}} p_{k, h}^{s''}(\thetat_{k,h}) \phi_{k, h}^{s''}}_{\H_{k,h}^{-1}}\nonumber \\
                                          & \leq  H^2 \betat_K \left(\sqrt{2dK \log \left(1 + \frac{K}{d \lambda} \right)} + \frac{32\eta}{\kappa \sqrt{\lambda}} d \log \left(1+\frac{K}{d \lambda}\right)\right)\label{eq:term-first}
    \end{align}
    For the second-order term, by Lemma~\ref{lem:elliptical-potential-lemma}, we have
    \begin{align}
        \sumk \sumh \epsilon_{k,h}^\second  = \frac{5}{2} H \betat_k^2 \sumk \sumh \max_{s' \in \S_{k, h}} \norm{\phi_{k, h}^{s'}}_{\H_{k,h}^{-1}}^2 \leq \frac{5}{\kappa} H^2 \betat_K^2 d \log \left(1 + \frac{K}{d \lambda}\right).\label{eq:term-second}
    \end{align}
    where the last inequality holds by $\sum_{i=1}^k \max_{s' \in \S_{i, h}} \norm{\phi_{i, h}^{s'}}_{\H_{i, h}^{-1}}^2 \leq \frac{2}{\kappa} d \log \left(1 + \frac{k}{\lambda_k d}\right)$ in Lemma~\ref{lem:elliptical-potential-lemma}.

    Combining~\eqref{eq:term-first} and~\eqref{eq:term-second}, we have
    \begin{align*}
        \Reg(K) & \leq 2 \sumk \sumh  (\epsilon_{k,h}^\first + \epsilon_{k,h}^\second) + H \sqrt{2KH \log(2/\delta)} \\
        & \leq H^2 \betat_K \left(\sqrt{2dK \log \left(1 + \frac{K}{d \lambda} \right)} + \frac{32\eta}{\kappa \sqrt{\lambda}} d \log \left(1+\frac{K}{d \lambda}\right)\right) + \frac{5}{\kappa} H^2 \betat_K^2 d \log \left(1 + \frac{K}{d \lambda}\right) \\
                & \hspace{10mm} + H \sqrt{2KH \log \frac{2}{\delta} }                                                                                                                                                                            \\
                & \leq \Ot \big(dH^2\sqrt{K} + d^2H^2 \kappa^{-1} \big).
    \end{align*}
    This finishes the proof.
\end{proof}

\section{Omitted Proofs for Section~\ref{sec:lower-bound}}
\subsection{Proof of Theorem~\ref{thm:lower-bound}}
\begin{proof}
    Our proof is similar to adversarial linear mixture MDPs with the unknown transition in~\citet{ICLR'23:bandit-unknown-mixture}. We prove this lemma by reducing MNL mixture MDPs to a sequence of logistic bandits.

    We use each three layers to construct a block. Note that the third layer of block $i$ is also the first layer of block $i+1$ and hence there are total $H/2$ blocks. In each block, both the first and third layers of this block only have one state, and the second layer has two states. Here we take block $i$ as an example. The first two layers of this block are associated with transition probability $\P_{i,1}$ and $\P_{i,2}$. Denote by $s_{i,1}$ the only state in the first layer of this block. In the second layer of the block $i$, we assume there exist two states $s_{i, 2}^*$ and $s_{i,2}$. Let $s_{i,3}$ be the only state in the third layer of this block. Further, for any $a \in \A$, let $\phi(s_{i, 1}, a, s_{i, 2}) = 0$. The transition probability is defined as follows: 
    \begin{align*}
        \P_{i, 1}(s_{i, 2}^* \mid s_{i, 1}, a)     & = \frac{\exp(\phi(s_{i, 2}^* \given s_{i,1}, a)^\top \theta_{i, 1}^*)}{1 + \exp(\phi(s_{i, 2}^* \given s_{i,1}, a)^\top \theta_{i, 1}^*)} = \rho_{a}, \quad \P_{i, 1}(s_{i, 2} \mid s_{i, 1}, a) = 1 - \rho_{a}.
    \end{align*}
    For the second layer, it satisfies $\forall s = s_{i, 2}^*, s_{i, 2}$, and $a \in \A$, $\P_{i, 2}(s_{i, 3} \mid s, a) = 1$. The reward satisfies $r_k(s_{i, 1}, a) = 0$ for the first layer and $r_k(s_{i, 2}^*, a) = 1$, $r_k(s_{i, 2}, a) = 0$ for the second for all $a \in \A$.

    Then, consider the logistic bandit problem where a learner selects action $x \in \R^d$ and receives a reward $r_k$ sampled from the Bernoulli distribution with mean $\mu(x^\top \theta^*) = (1+\exp(x^\top \theta^*))^{-1}$. By this configuration, we can see that learning in each block of MDP can be regarded as learning a $d$-dimensional logistic bandit problem with $A$ arms, where the arm set is $\phi(s_{i, 2}^* \given s_{i,1}, a)$ and the expected reward of each arm is $\rho_a$. Thus, learning this MNL mixture MDP equals to learning $H/2$ logistic bandit problems. This finishes the proof.
\end{proof}

\section{Supporting Lemmas}
In this section, we provide several supporting lemmas used in the proofs of the main results.

\begin{myLemma}[{\citet[Theorem 1]{NIPS'11:AY-linear-bandits}}]
    \label{lem:self-normalized}
    Let $\left\{\F_t\right\}_{t=0}^{\infty}$ be a filtration. Let $\left\{\eta_t\right\}_{t=1}^{\infty}$ be a real-valued stochastic process such that $\eta_t$ is $\F_t$-measurable and $\eta_t$ is conditionally zero-mean $R$-sub-Gaussian for some $R \geq 0$ i.e. $\forall \lambda \in \mathbb{R}$, $\E\left[e^{\lambda \eta_t} \mid F_{t-1}\right] \leq \exp \left({\lambda^2 R^2}/{2}\right)$. Let $\left\{X_t\right\}_{t=1}^{\infty}$ be an $\mathbb{R}^d$-valued stochastic process such that $X_t$ is $\F_{t-1}$-measurable. Assume that $V$ is a $d \times d$ positive definite matrix. For any $t \geq 1$, define 
    \begin{align*}
        V_t=V+\sum_{s=1}^{t-1} X_s X_s^{\top}, \quad S_t=\sum_{s=1}^{t-1} \eta_s X_s.
    \end{align*}
    Then, for any $\delta \in (0, 1)$, with probability at least $1-\delta$, for all $t \geq 1$,
    \begin{align*}
        \left\|S_t\right\|_{V_t^{-1}} \leq R \sqrt{2 \log \bigg(\frac{\operatorname{det}\left(V_t\right)^{1 / 2} \operatorname{det}(V)^{-1 / 2}}{\delta}\bigg)}.
    \end{align*}
\end{myLemma}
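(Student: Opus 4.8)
The plan is to prove this self-normalized concentration inequality by the standard \emph{method of mixtures} (pseudo-maximization), which simultaneously yields the clean determinant-ratio bound and the anytime (uniform-in-$t$) guarantee. The starting observation is that the paper's own indexing is already convenient: since $S_t=\sum_{s=1}^{t-1}\eta_s X_s$ and $V_t-V=\sum_{s=1}^{t-1}X_sX_s^\top$, I can work directly with the summation index $s$ without any re-indexing.

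First I would build, for each fixed $\lambda\in\R^d$, an exponential supermartingale. Define the per-step factor
\[
  D_s^\lambda=\exp\!\Big(\tfrac{1}{R}\eta_s\lambda^\top X_s-\tfrac{1}{2}(\lambda^\top X_s)^2\Big),
  \qquad
  M_t^\lambda=\prod_{s=1}^{t-1}D_s^\lambda=\exp\!\Big(\tfrac{1}{R}\lambda^\top S_t-\tfrac{1}{2}\lambda^\top(V_t-V)\lambda\Big),
\]
with $M_1^\lambda=1$. Because $X_s$ is $\F_{s-1}$-measurable and $\eta_s$ is conditionally zero-mean $R$-sub-Gaussian, setting $c=\lambda^\top X_s/R$ gives $\E[e^{c\eta_s}\mid\F_{s-1}]\le e^{c^2R^2/2}=e^{(\lambda^\top X_s)^2/2}$, so $\E[D_s^\lambda\mid\F_{s-1}]\le 1$. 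Hence $\{M_t^\lambda\}_{t\ge 1}$ is a nonnegative supermartingale with $\E[M_t^\lambda]\le 1$.

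Next I would mix over $\lambda$ against a Gaussian prior $\lambda\sim\N(0,V^{-1})$ with density $h$, and set $\bar M_t=\int_{\R^d}M_t^\lambda\,h(\lambda)\,d\lambda$. By Tonelli (all integrands nonnegative) and the conditional bound above, $\{\bar M_t\}$ is again a nonnegative supermartingale with $\E[\bar M_t]\le\E[\bar M_1]=1$. The one genuine computation is the closed-form Gaussian integral: combining the quadratic terms to $\tfrac{1}{R}\lambda^\top S_t-\tfrac{1}{2}\lambda^\top V_t\lambda$ and completing the square produces
\[
  \bar M_t=\Big(\tfrac{\det V}{\det V_t}\Big)^{1/2}\exp\!\Big(\tfrac{1}{2R^2}\norm{S_t}_{V_t^{-1}}^2\Big),
\]
where the $(\det V/\det V_t)^{1/2}$ normalization is exactly the determinant ratio appearing in the statement. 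Finally, Ville's maximal inequality for the nonnegative supermartingale $\bar M_t$ gives $\P[\sup_{t\ge 1}\bar M_t\ge 1/\delta]\le\delta$; on the complementary event $\bar M_t<1/\delta$ holds for all $t$ simultaneously, and substituting the closed form and taking logarithms yields $\norm{S_t}_{V_t^{-1}}^2\le 2R^2\log\big(\det(V_t)^{1/2}\det(V)^{-1/2}/\delta\big)$, which is the claim after taking square roots.

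The main obstacle is securing the \emph{anytime} guarantee with the sharp constant rather than a weaker per-$t$ bound. A naive route—applying Markov to $M_t^\lambda$ at a fixed $t$ and union-bounding over $t$—fails, since there are infinitely many times and the constant degrades. The mixing step is precisely what resolves this: it collapses the family of per-$\lambda$ supermartingales into a single scalar supermartingale whose maximal inequality delivers the determinant-ratio form directly and uniformly in $t$. The remaining care is purely technical: checking integrability so that Tonelli applies and $\bar M_t$ is a bona fide supermartingale, and verifying the completing-the-square algebra that produces the exact $\det V/\det V_t$ factor.
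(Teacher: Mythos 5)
Your proof is correct and is essentially the same argument as the source of this lemma: the paper gives no proof of its own but imports the statement verbatim from \citet[Theorem 1]{NIPS'11:AY-linear-bandits}, whose proof is exactly your method of mixtures---the per-$\lambda$ exponential supermartingale $M_t^\lambda$ with $\E[D_s^\lambda\mid\F_{s-1}]\le 1$ via the conditional sub-Gaussian bound, Gaussian mixing with prior $\N(0,V^{-1})$, and the completed-square computation yielding $\bar M_t=(\det V/\det V_t)^{1/2}\exp\big(\tfrac{1}{2R^2}\norm{S_t}_{V_t^{-1}}^2\big)$. The only cosmetic difference is that the original obtains the uniform-in-$t$ guarantee through a stopping-time construction with optional stopping, whereas you invoke Ville's maximal inequality for nonnegative supermartingales; these are interchangeable and give the identical constant.
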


\begin{myLemma}[{\citet[Theorem 4]{NIPS'22:Perivier-dynamic-assortment}}]
    \label{lem:concentration-bernstein}
    Let $\{\F_t\}_{t=0}^\infty$ be a filtration. Let $\{\delta_t\}_{t=1}^\infty$ be an $\R^N$-valued stochastic process such that $\delta_t$ is $\F_t$-measurable one-hot vector. Furthermore, assume $\E[\delta_t | \F_{t-1}] = p_t$ and define $\varepsilon_t = p_t - \delta_t$. Let $\{X_t\}_{t=1}^\infty$ be a sequence of $\R^{N \times d}$-valued stochastic process such that $X_t$ is $\F_{t-1}$-measurable and $\norm{X_{t,i}}_2 \leq 1, \forall i \in [N]$. Let $\{\lambda_t\}_{t=1}^\infty$ be a sequence of non-negative scalars. Define
    \begin{align*}
      H_t = \sum_{i=1}^{t-1} \left(\sum_{j=1}^N p_j X_{i,j} X_{i,j}^\top - \sum_{j=1}^N \sum_{k=1}^N p_j p_k X_{i,j} X_{i,k}^\top\right) + \lambda_t I_d, \quad S_t = \sum_{i=1}^{t-1} \sum_{j=1}^N \varepsilon_{i,j} X_{i,j}.
    \end{align*}
    Then, for any $\zeta \in (0, 1)$, with probability at least $1 - \zeta$, for all $t \geq 1$,
    \begin{align*}
      \norm{S_t}_{H_t^{-1}} \leq \frac{\sqrt{\lambda_t}}{4}+\frac{4}{\sqrt{\lambda_t}} \log \left(\frac{2^d \det\left(H_t\right)^{\frac{1}{2}} \lambda_t^{-\frac{d}{2}}}{\zeta}\right).
    \end{align*}
  \end{myLemma}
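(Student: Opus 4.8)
The plan is to establish this Bernstein-type self-normalized bound by the \emph{method of mixtures} (pseudo-maximization), adapting the Laplace-transform argument of~\citet{NIPS'11:AY-linear-bandits} so that the self-normalizing matrix is the conditional covariance $H_t$ (the multinomial Hessian) rather than the raw second-moment matrix $\sum_i\sum_j X_{i,j}X_{i,j}^\top$; matching the normalizer to $H_t$ is precisely what removes the range-to-variance gap. Write $Z_i = \sum_{j=1}^N \varepsilon_{i,j} X_{i,j}$ for the martingale increment of $S_t=\sum_{i=1}^{t-1}Z_i$, and let $V_i$ denote the per-step summand of $H_t$. A direct computation with the one-hot law of $\delta_i$ shows $\E[Z_i\mid\F_{i-1}]=0$ and $\mathrm{Cov}(Z_i\mid\F_{i-1})=V_i$, so that $H_t=\sum_{i<t}V_i+\lambda_t I_d$ is exactly the cumulative conditional covariance plus regularization.

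The first step constructs, for each fixed predictable $u\in\R^d$ with $\norm{u}_2\le\tfrac12$, an exponential supermartingale. Since $\norm{X_{i,j}}_2\le 1$ and $\sum_j\abs{\varepsilon_{i,j}}\le 2$, we have $\abs{\inner{u}{Z_i}}\le 2\norm{u}_2\le 1$; on this range the elementary inequality $e^x\le 1+x+x^2$ gives $\E[\exp(\inner{u}{Z_i})\mid\F_{i-1}]\le 1+\inner{u}{V_i u}\le\exp(\inner{u}{V_i u})$. Hence $M_i(u)=\exp(\inner{u}{Z_i}-\inner{u}{V_i u})$ satisfies $\E[M_i(u)\mid\F_{i-1}]\le 1$, and the telescoped product $L_t(u)=\exp(\inner{u}{S_t}-\inner{u}{(H_t-\lambda_t I_d)u})$ is a nonnegative supermartingale with $\E[L_t(u)]\le 1$. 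The crucial point is that the quadratic coefficient here is \emph{exactly} $V_i$, so the product collapses into the $H_t$-quadratic form rather than a Hoeffding-type $\sum X X^\top$ form.

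The second step removes the dependence on a fixed $u$ by integrating $L_t(u)$ against a prior $\mu$ supported on the ball $\{\norm{u}_2\le\tfrac12\}$ (a truncated Gaussian tied to the scale $\lambda_t$). By Tonelli and the supermartingale property, $\bar L_t=\int L_t(u)\,\diff\mu(u)$ is again a nonnegative supermartingale with $\E[\bar L_t]\le 1$, and a Gaussian integral (completing the square in $u$) evaluates $\bar L_t$ to a closed form proportional to $\det(H_t)^{-1/2}\exp(c\,\norm{S_t}_{H_t^{-1}}^2)$ for an explicit constant $c>0$, where the $\lambda_t I_d$ inside $H_t$ is supplied by the prior covariance. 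Ville's maximal inequality for nonnegative supermartingales then gives $\Pr[\sup_t\bar L_t\ge 1/\zeta]\le\zeta$; taking logarithms on the complementary event and optimizing the prior scale against $\lambda_t$ produces the stated bound, with the $\sqrt{\lambda_t}/4$ term originating from the regularization and the $4\log(\cdots)/\sqrt{\lambda_t}$ term from the determinant factor.

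The main obstacle is this second step: the Bernstein MGF control is valid only for $\norm{u}_2\le\tfrac12$, so the mixing measure must be truncated to a ball, and one must show that this truncation degrades the full-space Gaussian normalizer by no more than the explicit $2^d$ factor appearing in the bound (for instance via a normalizing-constant comparison between the truncated and untruncated Gaussians). Keeping the quadratic coefficient aligned with $V_i$ throughout—so that the final self-normalization is genuinely by $H_t^{-1}$ and the higher-order terms of the multinomial cumulant generating function are absorbed by the $x^2$ term—is the delicate part, and is exactly what separates this variance-adaptive estimate from the coarser Hoeffding-type concentration in Lemma~\ref{lem:self-normalized}.
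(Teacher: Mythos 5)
First, a point of reference: the paper does not prove Lemma~\ref{lem:concentration-bernstein} at all --- it is imported verbatim as a supporting tool from \citet[Theorem 4]{NIPS'22:Perivier-dynamic-assortment}, so your attempt has to be judged against the cited result's argument rather than against anything in this paper. Your first step is correct and is indeed how that proof begins: the increments $Z_i=\sum_{j}\varepsilon_{i,j}X_{i,j}$ are conditionally centered with conditional covariance exactly equal to the per-step summand $V_i$ of $H_t$, the bound $\sum_j\abs{\varepsilon_{i,j}}\le 2$ gives $\abs{\inner{u}{Z_i}}\le 1$ whenever $\norm{u}_2\le\tfrac12$, and $e^x\le 1+x+x^2$ then yields the supermartingale $L_t(u)=\exp\big(\inner{u}{S_t}-\inner{u}{(H_t-\lambda_t I)u}\big)$ with $\E[L_t(u)]\le 1$ for each \emph{fixed} such $u$ (not merely predictable $u$; the mixture method requires a fixed parameter).

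The genuine gap is in your second step. With a mixing measure supported on the ball $\{\norm{u}_2\le\tfrac12\}$, the mixture $\bar L_t$ does \emph{not} evaluate to anything proportional to $\det(H_t)^{-1/2}\exp\big(c\norm{S_t}_{H_t^{-1}}^2\big)$. After completing the square, the integral is a Gaussian integral centered at $u^*=\tfrac12 H_t^{-1}S_t$ but restricted to the ball, and precisely in the regime where the bound is binding ($\norm{S_t}_{H_t^{-1}}\gg\sqrt{\lambda_t}$, hence $u^*$ far outside the ball, since $\norm{u^*}_{H_t}=\tfrac12\norm{S_t}_{H_t^{-1}}$) the restricted integral is exponentially smaller than the full-space one; the deficit is of order $\exp\big(-\mathrm{dist}_{H_t}(u^*,\{\norm{u}_2\le\tfrac12\})^2\big)$ and destroys the quadratic exponent. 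This loss is \emph{not} the benign $2^d$ normalization ratio you invoke, and no choice of prior scale repairs it, because the deficit grows with $\norm{S_t}_{H_t^{-1}}$ itself. The shape of the target bound is the tell: a quadratic exponent plus Ville's inequality would give a sub-Gaussian statement $\norm{S_t}_{H_t^{-1}}\lesssim\sqrt{\log(\cdot)}$, whereas the lemma has the Bernstein shape $\frac{\sqrt{\lambda_t}}{4}+\frac{4}{\sqrt{\lambda_t}}\log(\cdot)$, which is exactly the footprint of the truncation. The correct replacement for your "closed form" step is a \emph{local} lower bound on the mixture at the rescaled point $u_0=\frac{\sqrt{\lambda_t}}{4}\cdot\frac{H_t^{-1}S_t}{\norm{S_t}_{H_t^{-1}}}$, which is guaranteed to lie in the ball because $\norm{H_t^{-1}S_t}_2\le\lambda_t^{-1/2}\norm{S_t}_{H_t^{-1}}$ (as $H_t\succeq\lambda_t I$). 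Evaluating the exponent there gives $\inner{u_0}{S_t}-\norm{u_0}_{H_t}^2=\frac{\sqrt{\lambda_t}}{4}\norm{S_t}_{H_t^{-1}}-\frac{\lambda_t}{16}$, i.e.\ a term \emph{linear} in $\norm{S_t}_{H_t^{-1}}$, while the Gaussian mass that the prior retains near $u_0$ inside the ball contributes the $\lambda_t^{d/2}\,2^{-d}\det(H_t)^{-1/2}$ factor. Ville's inequality then gives $\frac{\sqrt{\lambda_t}}{4}\norm{S_t}_{H_t^{-1}}-\frac{\lambda_t}{16}\le\log\big(2^d\det(H_t)^{1/2}\lambda_t^{-d/2}/\zeta\big)$ uniformly over $t$, which rearranges to exactly the stated inequality --- note how the constants $\tfrac14$ and $4$ are generated by the radius-$\tfrac12$ constraint, something your outline cannot reproduce.
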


\begin{myLemma}[{\citet[Lemma 10]{NIPS'11:AY-linear-bandits}}]
    \label{lem:det-trace-inequality}
    Suppose $x_1, \ldots, x_t \in \R^d$ and for any $1 \leq s \leq t$, $\norm{x_s}_2 \leq L$. Let $V_t = \lambda I_d + \sum_{s=1}^t x_s x_s^\top $ for $\lambda \geq 0$. Then, we have $\det(V_t) \leq \left(\lambda + {tL^2}/{d}\right)^d$.
\end{myLemma}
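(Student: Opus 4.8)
The plan is to reduce the determinant bound to a trace bound via the arithmetic–geometric mean inequality on eigenvalues, and then compute the trace explicitly. First I would observe that $V_t = \lambda I_d + \sum_{s=1}^t x_s x_s^\top$ is a real symmetric positive semi-definite matrix, since each $x_s x_s^\top \succeq 0$ and $\lambda \geq 0$; hence it admits $d$ real, nonnegative eigenvalues $\mu_1, \ldots, \mu_d$. The determinant is the product of these eigenvalues and the trace is their sum, so nonnegativity is exactly what licenses the use of AM–GM.

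Next I would apply the AM–GM inequality to the eigenvalues to write
\begin{align*}
  \det(V_t) = \prod_{i=1}^d \mu_i \leq \left(\frac{1}{d}\sum_{i=1}^d \mu_i\right)^d = \left(\frac{\operatorname{tr}(V_t)}{d}\right)^d.
\end{align*}
This converts the multiplicative quantity into an additive one that is easy to control directly from the definition of $V_t$.

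Then I would compute the trace by linearity, using $\operatorname{tr}(I_d) = d$ and the rank-one identity $\operatorname{tr}(x_s x_s^\top) = \norm{x_s}_2^2$, to obtain
\begin{align*}
  \operatorname{tr}(V_t) = \lambda d + \sum_{s=1}^t \norm{x_s}_2^2 \leq \lambda d + t L^2,
\end{align*}
where the inequality invokes the hypothesis $\norm{x_s}_2 \leq L$ for every $s$. Substituting this bound into the previous display yields
\begin{align*}
  \det(V_t) \leq \left(\frac{\lambda d + t L^2}{d}\right)^d = \left(\lambda + \frac{t L^2}{d}\right)^d,
\end{align*}
which is the claim.

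I do not anticipate a genuine obstacle here, as the argument is a standard two-line chain. The only point requiring a moment's care is justifying the application of AM–GM, which rests on the positive semi-definiteness of $V_t$ guaranteeing nonnegative eigenvalues; once that is noted, the remaining steps are purely computational.
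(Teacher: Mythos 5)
Your proof is correct and is exactly the standard argument for this determinant--trace inequality: the paper itself gives no proof, citing \citet[Lemma~10]{NIPS'11:AY-linear-bandits}, whose proof is precisely your AM--GM-on-eigenvalues plus trace computation. Nothing to add.
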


\begin{myLemma}[{\citet[Lemma 6.9]{book:Orabona-online-learning}}]
    \label{lem:one-step-omd}
    Let $Z$ be a positive define matrix and $\W$ be a convex set, define $\mathbf{w}_{t+1}$ as the solution of
    \begin{align*}
        \w_{t+1} = \argmin_{\w \in \W } \Big\{\inner{\g}{\w} + \frac{1}{2 \eta}\norm{\w - \w_{t}}_{Z}^2\Big\}.
    \end{align*}
    Then we have
    \begin{align*}
        \norm{\w_{t+1} - \w_t}_Z \leq 2 \eta \norm{\g}_{Z^{-1}}.
    \end{align*}
\end{myLemma}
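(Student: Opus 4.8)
The plan is to invoke the first-order optimality condition characterizing the proximal update $\w_{t+1}$ and then pair it with a generalized Cauchy--Schwarz inequality in the matched primal/dual norms $\norm{\cdot}_Z$ and $\norm{\cdot}_{Z^{-1}}$. This is a standard one-step bound for mirror descent with a quadratic regularizer, so the work is entirely in setting up the variational inequality correctly.

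First I would observe that $\w_{t+1}$ is the minimizer over the convex set $\W$ of the strongly convex objective $F(\w) = \inner{\g}{\w} + \frac{1}{2\eta}\norm{\w - \w_t}_Z^2$, whose gradient is $\nabla F(\w) = \g + \frac{1}{\eta}Z(\w - \w_t)$. Since $\W$ is convex and $\w_t \in \W$ (as $\w_t$ is the previous iterate in the mirror-descent context), the optimality condition for the constrained minimum yields $\inner{\g + \frac{1}{\eta}Z(\w_{t+1}-\w_t)}{\w - \w_{t+1}} \geq 0$ for every $\w \in \W$. I would then specialize to the feasible choice $\w = \w_t$. Using the identity $\inner{Z(\w_{t+1}-\w_t)}{\w_t-\w_{t+1}} = -\norm{\w_{t+1}-\w_t}_Z^2$ and rearranging gives $\frac{1}{\eta}\norm{\w_{t+1}-\w_t}_Z^2 \leq \inner{\g}{\w_t - \w_{t+1}}$. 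Applying $\inner{\g}{\w_t-\w_{t+1}} \leq \norm{\g}_{Z^{-1}}\norm{\w_t - \w_{t+1}}_Z$ (which follows by writing $\g^\top(\w_t-\w_{t+1}) = (Z^{-1/2}\g)^\top Z^{1/2}(\w_t-\w_{t+1})$ and invoking Cauchy--Schwarz) and dividing through by $\norm{\w_{t+1}-\w_t}_Z$ (the degenerate case $\w_{t+1}=\w_t$ being immediate) produces $\norm{\w_{t+1}-\w_t}_Z \leq \eta\norm{\g}_{Z^{-1}}$, which is in fact sharper than the stated bound $2\eta\norm{\g}_{Z^{-1}}$ and hence suffices.

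The only point requiring care -- and the part I would treat as the main obstacle -- is stating the optimality condition correctly over the \emph{constrained} domain $\W$ rather than for an unconstrained quadratic, since the feasibility of $\w_t$ is what licenses plugging $\w = \w_t$ into the variational inequality. Once that is in place, the remainder is a single application of Cauchy--Schwarz with the dual-norm pair $(\norm{\cdot}_Z, \norm{\cdot}_{Z^{-1}})$, so I do not anticipate any genuine difficulty beyond this bookkeeping.
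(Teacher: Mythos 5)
Your proof is correct. The paper offers no proof of this lemma at all---it is imported verbatim from \citet[Lemma 6.9]{book:Orabona-online-learning}---so there is no in-paper argument to compare against; your derivation (first-order optimality of the constrained minimizer, the test point $\w = \w_t$, then Cauchy--Schwarz in the dual pair $\norm{\cdot}_Z$, $\norm{\cdot}_{Z^{-1}}$) is the standard proof of the cited result. Two remarks. First, you are right to flag that plugging in $\w = \w_t$ requires $\w_t \in \W$; this hypothesis is absent from the lemma statement but is genuinely necessary---with $\g = \mathbf{0}$ and $\w_t \notin \W$, the update $\w_{t+1}$ is the $Z$-projection of $\w_t$ onto $\W$, so the left-hand side is positive while the right-hand side vanishes---and it does hold where the paper invokes the lemma, since every iterate $\thetat_{k,h}$ of Algorithm~\ref{alg:2} is produced by an argmin over $\Theta$, starting from $\thetat_{1,h} = \mathbf{0} \in \Theta$. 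Second, your bound $\norm{\w_{t+1} - \w_t}_Z \leq \eta \norm{\g}_{Z^{-1}}$ is sharper than the stated $2\eta\norm{\g}_{Z^{-1}}$ by a factor of two; the looser constant is simply how the result is quoted from the source (where it arises for general strongly convex regularizers), and your stronger inequality of course implies the one needed.
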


\begin{myLemma}[{\citet[Proposition 4.1]{NeurIPS'20:Campolongo-IOMD}}]
    \label{lem:implicit-omd}
    Define $\mathbf{w}_{t+1}$ as the solution of
    \begin{align*}
        \mathbf{w}_{t+1}=\argmin_{\mathbf{w} \in \mathcal{V}} \big\{\eta \ell_t(\mathbf{w})+\mathcal{D}_\psi\left(\mathbf{w}, \mathbf{w}_t\right)\big\},
    \end{align*}
    where $\mathcal{V} \subseteq \mathcal{W} \subseteq \mathbb{R}^d$ is a non-empty convex set. Further supposing $\psi(\mathbf{w})$ is 1 -strongly convex w.r.t. a certain norm $\|\cdot\|$ in $\mathcal{W}$, then there exists a $\mathbf{g}_t^{\prime} \in \partial \ell_t\left(\mathbf{w}_{t+1}\right)$ such that
    \begin{align*}
       \left\langle\eta_t \mathbf{g}_t^{\prime}, \mathbf{w}_{t+1}-\mathbf{u}\right\rangle \leq\left\langle\nabla \psi\left(\mathbf{w}_t\right)-\nabla \psi\left(\mathbf{w}_{t+1}\right), \mathbf{w}_{t+1}-\mathbf{u}\right\rangle 
    \end{align*}
    for any $\mathbf{u} \in \mathcal{W}$.
\end{myLemma}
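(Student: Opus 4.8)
The plan is to obtain the bound purely from the first-order optimality condition of the proximal problem that defines $\mathbf{w}_{t+1}$, since the displayed inequality is exactly the variational characterization of an implicit mirror-descent step. First I would set $\phi(\mathbf{w}) := \eta\,\ell_t(\mathbf{w}) + \mathcal{D}_\psi(\mathbf{w},\mathbf{w}_t)$, which is convex on $\mathcal{W}$: $\ell_t$ is convex, and $\mathcal{D}_\psi(\cdot,\mathbf{w}_t)$ is convex and in fact differentiable with gradient $\nabla\psi(\cdot)-\nabla\psi(\mathbf{w}_t)$, because $\psi$ is differentiable (and $1$-strongly convex, which also makes $\mathbf{w}_{t+1}$ the \emph{unique} minimizer so that the subgradient extracted below is well defined). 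Since $\mathbf{w}_{t+1}=\argmin_{\mathbf{w}\in\mathcal{V}}\phi(\mathbf{w})$ over the nonempty convex set $\mathcal{V}$, the optimality condition produces a subgradient of $\phi$ at $\mathbf{w}_{t+1}$ lying in the negative normal cone of $\mathcal{V}$.

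Next I would apply the subdifferential sum rule. As $\mathcal{D}_\psi(\cdot,\mathbf{w}_t)$ is smooth, $\partial\phi(\mathbf{w}_{t+1}) = \eta\,\partial\ell_t(\mathbf{w}_{t+1}) + \big(\nabla\psi(\mathbf{w}_{t+1})-\nabla\psi(\mathbf{w}_t)\big)$, so there is a $\mathbf{g}_t'\in\partial\ell_t(\mathbf{w}_{t+1})$ with $\inner{\eta\,\mathbf{g}_t'+\nabla\psi(\mathbf{w}_{t+1})-\nabla\psi(\mathbf{w}_t)}{\mathbf{u}-\mathbf{w}_{t+1}}\ge 0$ for all feasible $\mathbf{u}$; rearranging the inner product yields precisely $\inner{\eta\,\mathbf{g}_t'}{\mathbf{w}_{t+1}-\mathbf{u}}\le \inner{\nabla\psi(\mathbf{w}_t)-\nabla\psi(\mathbf{w}_{t+1})}{\mathbf{w}_{t+1}-\mathbf{u}}$. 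For the downstream use in Lemma~\ref{lem:estimation-error}, where $\psi=\tfrac12\norm{\cdot}_{\H}^2$ so that $\nabla\psi=\H\,(\cdot)$, I would then convert the right-hand side with the three-point (law-of-cosines) identity $2\inner{\H(\mathbf{w}_t-\mathbf{w}_{t+1})}{\mathbf{w}_{t+1}-\mathbf{u}}=\norm{\mathbf{w}_t-\mathbf{u}}_{\H}^2-\norm{\mathbf{w}_{t+1}-\mathbf{u}}_{\H}^2-\norm{\mathbf{w}_t-\mathbf{w}_{t+1}}_{\H}^2$ into the telescoping form actually consumed there.

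The delicate step, and the one I expect to be the main obstacle, is upgrading the comparator from $\mathbf{u}\in\mathcal{V}$ to all $\mathbf{u}\in\mathcal{W}\supseteq\mathcal{V}$: a bare normal-cone argument certifies the inequality only on $\mathcal{V}$, and indeed a trivial instance (zero loss, $\mathcal{V}$ a single point, $\psi$ Euclidean) shows it can fail on $\mathcal{W}$ if $\mathbf{g}_t'$ is read as a subgradient of the \emph{unconstrained} $\ell_t$. The resolution, characteristic of implicit updates where the gradient is evaluated at the new iterate $\mathbf{w}_{t+1}$, is to let $\mathbf{g}_t'$ be a subgradient of $\ell_t$ carrying the domain constraint $\mathcal{V}$ (equivalently, to absorb a normal-cone vector into it); with this choice the optimality vector $\eta\,\mathbf{g}_t'+\nabla\psi(\mathbf{w}_{t+1})-\nabla\psi(\mathbf{w}_t)$ vanishes, so the inequality holds with equality for \emph{every} $\mathbf{u}\in\mathcal{W}$. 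I note that in this paper's application the comparator $\theta_h^*$ always lies in the feasible ball $\Theta=\mathcal{V}$, so even the plain $\mathbf{u}\in\mathcal{V}$ version already suffices.
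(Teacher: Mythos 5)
The paper gives no proof of this lemma: it is imported verbatim, with citation, from \citet[Proposition 4.1]{NeurIPS'20:Campolongo-IOMD}, so there is no in-paper argument to compare against. Your proposal is the standard proof of that proposition (and essentially the one in the cited source): write the update as constrained minimization of the convex function $\phi(\mathbf{w})=\eta\,\ell_t(\mathbf{w})+\mathcal{D}_\psi(\mathbf{w},\mathbf{w}_t)$, use first-order optimality together with the subdifferential sum rule (valid here because $\mathcal{D}_\psi(\cdot,\mathbf{w}_t)$ is finite and differentiable with gradient $\nabla\psi(\cdot)-\nabla\psi(\mathbf{w}_t)$), extract $\mathbf{g}_t'\in\partial\ell_t(\mathbf{w}_{t+1})$ and a normal-cone vector $\mathbf{n}\in N_{\mathcal{V}}(\mathbf{w}_{t+1})$ with $\eta\,\mathbf{g}_t'+\nabla\psi(\mathbf{w}_{t+1})-\nabla\psi(\mathbf{w}_t)+\mathbf{n}=0$, and rearrange using $\langle\mathbf{n},\mathbf{u}-\mathbf{w}_{t+1}\rangle\le 0$. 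That argument is correct.

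Your flag on the comparator set is also right, and it is the genuinely delicate point in the statement as transcribed here. The normal-cone argument certifies the inequality only for $\mathbf{u}\in\mathcal{V}$; with $\mathbf{g}_t'$ forced to be a subgradient of the \emph{unconstrained} $\ell_t$ and $\mathbf{u}$ ranging over all of $\mathcal{W}$, the claim is false, and your instance ($\ell_t\equiv 0$, $\mathcal{V}=\{\mathbf{v}_0\}$, Euclidean $\psi$, $\mathbf{u}=\mathbf{w}_t\neq\mathbf{v}_0$) is a valid counterexample, since it would require $0\le -\norm{\mathbf{w}_t-\mathbf{v}_0}_2^2$. The source proposition is stated for comparators in the feasible set, so the ``$\mathbf{u}\in\mathcal{W}$'' scope here is best read as a transcription artifact; alternatively, as you observe, one can absorb $\mathbf{n}/\eta$ into the subgradient, i.e., take $\mathbf{g}_t'\in\partial\bigl(\ell_t+\iota_{\mathcal{V}}\bigr)(\mathbf{w}_{t+1})$, in which case the optimality vector vanishes and the bound holds with equality for every $\mathbf{u}$. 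Either repair is harmless for this paper: the lemma's only use, in Lemma~\ref{lem:estimation-error}, takes the comparator $\theta_h^*\in\Theta=\mathcal{V}$ with the smooth quadratic surrogate loss $\tilde{\ell}_{i,h}$, and the three-point identity you state for $\psi=\frac{1}{2}\norm{\cdot}_{\H_{i,h}}^2$ is exactly the conversion performed there.
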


\begin{myLemma}[{\citet[Lemma D.3]{NeurIPS'24:Lee-Optimal-MNL}}]
    \label{lem:hessian-lemma}
    Define $Q: \mathbb{R}^K \rightarrow \mathbb{R}$, such that for any $\mathbf{u}=\left(u_1, \ldots, u_K\right) \in \mathbb{R}^K, Q(\mathbf{u})=$ $\sum_{i=1}^K \frac{\exp \left(u_i\right)}{v+\sum_{k=1}^K \exp \left(u_k\right)}$. Let $p_i(\mathbf{u})=\frac{\exp \left(u_i\right)}{v+\sum_{k=1}^K \exp \left(u_k\right)}$. Then, for all $i \in[K]$, we have
    \begin{align*}
        \left|\frac{\partial^2 Q}{\partial i \partial j}\right| \leqslant \begin{cases}3 p_i(\mathbf{u}) & \text { if } i=j, \\ 2 p_i(\mathbf{u}) p_j(\mathbf{u}) & \text { if } i \neq j .\end{cases}
    \end{align*}
\end{myLemma}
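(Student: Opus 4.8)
The plan is to prove both bounds by direct computation of the first and second partial derivatives of $Q$, exploiting the fact that each $p_i$ is a softmax-type ratio whose derivatives close under the family $\{p_i\}$. Throughout I write $S = v + \sum_{k=1}^K \exp(u_k)$, so that $p_i = \exp(u_i)/S$, and I set $P = \sum_{m=1}^K p_m = Q(\mathbf{u})$. Provided $v \ge 0$ (as in the MNL model, where $v = 1$ encodes the reference outcome), one has $0 \le p_i \le 1$ for each $i$ and $0 \le P = 1 - v/S \le 1$; these elementary range bounds are all that the final estimates will require, so I would state them explicitly at the outset.

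First I would record the basic first-order identity. A direct application of the quotient rule gives, for all $m, j$,
\begin{align*}
\frac{\partial p_m}{\partial u_j} = p_m(\delta_{mj} - p_j),
\end{align*}
where $\delta_{mj}$ is the Kronecker delta. Summing over $m$ and using $\sum_m p_m \delta_{mj} = p_j$ together with $\sum_m p_m p_j = P p_j$ yields the compact gradient
\begin{align*}
\frac{\partial Q}{\partial u_j} = \sum_{m=1}^K \frac{\partial p_m}{\partial u_j} = p_j(1 - P).
\end{align*}

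Next I would differentiate once more. Applying the product rule to $p_j(1-P)$ and substituting $\partial p_j/\partial u_i = p_j(\delta_{ij} - p_i)$ as well as $\partial P/\partial u_i = \partial Q/\partial u_i = p_i(1-P)$, the contributions combine into the clean form
\begin{align*}
\frac{\partial^2 Q}{\partial u_i \partial u_j} = (1 - P)\big(p_j \delta_{ij} - 2 p_i p_j\big).
\end{align*}
The stated bounds then follow by taking absolute values and using $0 \le 1 - P \le 1$. For $i \neq j$ the bracket equals $-2 p_i p_j$, so the magnitude is $2(1-P) p_i p_j \le 2 p_i p_j$. For $i = j$ the bracket equals $p_i - 2 p_i^2$, and the triangle inequality together with $p_i^2 \le p_i$ (valid since $0 \le p_i \le 1$) gives $|p_i - 2 p_i^2| \le p_i + 2 p_i^2 \le 3 p_i$, whence the magnitude is at most $3 p_i$; this is precisely the loose step that introduces the constant $3$.

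The argument is essentially routine calculus, so there is no genuine analytic obstacle. The only points requiring care are the bookkeeping in the second differentiation — in particular noticing that both resulting terms carry a common factor $(1-P)$, which can then be pulled out and discarded via $1 - P \le 1$ — and the observation that $v \ge 0$ is what guarantees $P \le 1$ and hence the nonnegativity of $1 - P$. Making that range condition explicit is what renders the bound on $(1-P)$ rigorous.
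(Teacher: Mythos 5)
Your proof is correct, and it is worth noting at the outset that the paper itself contains no proof of this statement: the lemma is imported verbatim from \citet[Lemma D.3]{NeurIPS'24:Lee-Optimal-MNL} and stated in the supporting-lemmas appendix without argument, so your derivation is a self-contained replacement for the citation rather than a variant of an in-paper proof. The computation checks out line by line: $\partial p_m/\partial u_j = p_m(\delta_{mj}-p_j)$ gives the compact gradient $\partial Q/\partial u_j = p_j(1-P)$ with $P=Q(\mathbf{u})$, and differentiating once more yields $\frac{\partial^2 Q}{\partial u_i\,\partial u_j} = (1-P)\bigl(p_j\delta_{ij}-2p_ip_j\bigr)$, which I verified in the sanity case $K=1$, where it reduces to $p(1-p)(1-2p)$. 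Your insistence on $v\ge 0$ is the right hygiene: it is exactly what gives $0\le p_i\le 1$ and $0\le 1-P\le 1$, and it holds in the paper's application, where the lemma is invoked inside the proof of Lemma~\ref{lem:ucb} with $F(u)=\sum_{s'}\exp(u_{s,a}^{s'})/\bigl(1+\sum_{s''}\exp(u_{s,a}^{s''})\bigr)$, i.e., $v=1$. One small observation your closed form makes available: the stated constants are not tight. On the diagonal, $\bigl|(1-P)\,p_i(1-2p_i)\bigr|\le p_i$ since $|1-2p_i|\le 1$ on $[0,1]$, so the constant $3$ (which your triangle-inequality step reproduces) is never actually needed; similarly the off-diagonal factor $(1-P)$ could be retained for a sharper bound. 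This does not matter downstream—the paper only uses the orders of magnitude in Eq.~\eqref{eq:cum-second-order-1}—but it shows your "loose step" is purely for matching the cited constants, not a necessity of the argument.
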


\end{document}